
\documentclass{article}

\usepackage{microtype}
\usepackage{graphicx}
\usepackage{subfigure}
\usepackage{booktabs} 
\usepackage{upgreek}

\usepackage{amsmath}
\usepackage{amssymb}
\usepackage{mathtools}
\usepackage{amsthm}
\usepackage{bm}
\usepackage{xcolor}
\usepackage{listings}
\usepackage[inline]{enumitem}
\usepackage{hyperref}
\usepackage[capitalise]{cleveref}
\usepackage{autonum} 
\usepackage[inline]{enumitem}
\usepackage{algpseudocode}
\usepackage{algorithm}
\usepackage[oxfordcolors,fancytheorems]{fancy_theorems}
\usepackage{tikz}
\usetikzlibrary{positioning,shapes,arrows,shadows,backgrounds,calc,fit}
\usepackage{comicneue} 


\usepackage[accepted]{icml2025}


\definecolor{codegreen}{rgb}{0,0.6,0}
\definecolor{codegray}{rgb}{0.5,0.5,0.5}
\definecolor{codepurple}{rgb}{0.58,0,0.82}
\definecolor{backcolour}{rgb}{0.95,0.95,0.92}

\lstdefinestyle{mystyle}{
    backgroundcolor=\color{backcolour},   
    commentstyle=\color{codegreen},
    keywordstyle=\color{magenta},
    numberstyle=\tiny\color{codegray},
    stringstyle=\color{codepurple},
    basicstyle=\ttfamily\footnotesize,
    breakatwhitespace=false,         
    breaklines=true,                 
    captionpos=b,                    
    keepspaces=true,                 
    numbers=left,                    
    numbersep=5pt,                  
    showspaces=false,                
    showstringspaces=false,
    showtabs=false,                  
    tabsize=2
}

\lstset{style=mystyle}



\newcommand{\rset}{\mathbb{R}}
\newcommand{\nset}{\mathbb{N}}
\newcommand{\bfX}{\mathbf{X}}

\newcommand{\bfZ}{\mathbf{Z}}
\newcommand{\bfB}{\mathbf{B}}
\newcommand{\bfY}{\mathbf{Y}}

\newcommand{\rmd}{\mathrm{d}}
\newcommand{\Id}{\mathrm{Id}}

\def \divergence {\mathrm{D}}
\def \mix {\mathrm{mix}}
\def \fisher {\mathrm{FI}}
\def \rmc {\mathrm{C}}
\def \data {\text{data}}
\def \rejectionmechanism {\texttt{REJECTION}~}
\def \independentdrafter {\texttt{INDEPENDENT}~}
\def \frozendrafter {\texttt{FROZEN}~}
\def \accept {\texttt{bool}}

\def \rmA {\mathrm{A}}

 \def \rmA {\mathrm{A}}

\def \Ent {\mathrm{H}}
\def \vareps {\varepsilon}

\definecolor{customred}{HTML}{d1221d} 
\definecolor{customblue}{HTML}{3c77df}
\definecolor{custompurple}{HTML}{bb1a4a}


\icmltitlerunning{Accelerated Diffusion Models via Speculative Sampling}

\begin{document}

\twocolumn[
\icmltitle{Accelerated Diffusion Models via Speculative Sampling}



\icmlsetsymbol{equal}{*}

\begin{icmlauthorlist}
\icmlauthor{Valentin De Bortoli}{equal,yyy}
\icmlauthor{Alexandre Galashov}{equal,yyy}
\icmlauthor{Arthur Gretton}{yyy}
\icmlauthor{Arnaud Doucet}{yyy}
\end{icmlauthorlist}

\icmlaffiliation{yyy}{Google DeepMind}

\icmlcorrespondingauthor{Valentin De Bortoli}{vdebortoli@google.com}

\icmlkeywords{Machine Learning, ICML}

\vskip 0.3in
]



\printAffiliationsAndNotice{\icmlEqualContribution} 

\begin{abstract}
Speculative sampling is a popular technique for accelerating inference in Large Language Models by generating candidate tokens using a fast draft model and then accepting or rejecting them based on the target model's distribution. While speculative sampling was previously limited to discrete sequences, we extend it to diffusion models, which generate samples via continuous, vector-valued Markov chains. In this context, the target model is a high-quality but computationally expensive diffusion model. We propose various drafting strategies, including a simple and effective approach that does not require training a draft model and is applicable out-of-the-box to any diffusion model. We demonstrate significant generation speedup on various diffusion models, halving the number of function evaluations while generating exact samples from the target model. Finally, we also show how this procedure can be used to accelerate Langevin diffusions to sample unnormalized distributions.
\end{abstract}

\section{Motivation}
Denoising diffusion models (DDMs), introduced by \citet{sohl2015deep} and further developed by \citet{ho2020denoising} and \citet{song2020denoising}, are generative models exhibiting state-of-the-art performance in a wide variety of domains. 
The core concept behind DDMs is the progressive transformation of a data distribution into a Gaussian distribution through the addition of noise. Sample generation is achieved by simulating an approximation of the time-reversal of this noising process. This requires multiple evaluations of a neural network that approximates the scores of the noising process, and typically involves simulating a Markov chain over hundreds of steps.

Since sample generation is computationally expensive, several techniques have been proposed to accelerate it. These include distillation techniques \citep[e.g. ][]{salimans2022progressive,meng2023distillation,song2023consistency}, better sampling schemes  \citep[e.g. ][]{karras2022elucidating,lu2022dpm,zhangfast2023} and parallel simulation methods  \citep[e.g. ][]{shih2023parallel,chen2024accelerating}. However, distillation techniques inherently require training a student model and often underperform compared to the teacher model \citep{dieleman2024distillation}. While better sampling schemes can improve performance, using too small a number of steps does degrade performance, see e.g. \cite{karras2022elucidating}. Finally, parallel simulation methods relying on Picard iterations over a sliding window have been proposed \citep{shih2023parallel,chen2024accelerating,tang2024accelerating}. However, they are inherently iterative, requiring repeated parallel sampling within a window until errors fall below a pre-specified tolerance. 

In the context of Large Language Models (LLMs), various techniques have also been proposed to speed up inference. Notably, \emph{speculative sampling}, first introduced  by \citet{leviathan2023fast}  and later proposed independently  by \citet{chen2023accelerating}, has become prominent in this area and has spawned numerous extensions \citep{xia2024unlocking}. Given a target LLM, this algorithm enables faster sampling than serial token decoding without compromising quality, as the sampled tokens remain exactly distributed according to the target model's distribution. This is achieved by considering a smaller and faster LLM model generating a draft sequence. The target model is then used to compute  \emph{in parallel} the conditional probabilities of these draft tokens, and these probabilities are used to decide sequentially whether to accept or reject the draft tokens. Upon the first rejection, a new token is sampled using an adjusted distribution combining the draft and target distributions. 
Many extensions of speculative sampling have been proposed to reduce latency; see \citet{xia2024unlocking} and further related works in Section \ref{sec:related_works}.

In the present work, we adapt speculative sampling to accelerate DDMs. We assume a computationally cheap draft model that generates a sequence of draft states for the denoising Markov chain of a target DDM. The transition probability densities of these states under the target model are then computed \emph{in parallel} and used to sequentially accept/reject  the draft states. At rejection, a new state is sampled from an adjusted distribution dependent on both the draft and target distributions; see Figure \ref{fig:description_algo}. As for LLMs, the procedure is designed such that it outputs samples distributed exactly according to the target DDM.

\begin{figure*}
    \centering
\input{description_algo}
\label{fig:description_algo}
\caption{Speculative Sampling for diffusion models. Draft states are efficiently generated and verified in parallel. Upon the first rejection, a new state is sampled using an adjusted distribution combining draft \& target models, and the remainder of the draft sequence is discarded.}
\end{figure*}

\citet{wang2024continuous} concurrently proposed an adaptation of speculative sampling for continuous-valued autoregressive processes, specifically for Masked Autoregressive models \citep{li2024autoregressive}. In this setting, they sample from the adjusted distribution appearing at rejection using a standard rejection sampling algorithm. However, as demonstrated in \Cref{sec:adjusted_implementation}, this approach is, on average, more computationally expensive than directly sampling from the target model in our context. Furthermore, it exhibits counter-intuitive performance degradation as the draft model more closely approximates the target model. We present a method to circumvent these issues while retaining the optimality properties of speculative sampling. Our contributions are summarized below. Proofs are  in the Supplementary Material.
\begin{itemize}[leftmargin=*,itemsep=0pt]
   \item  By leveraging the connections between speculative sampling and coupling techniques \citep{lindvall2002lectures}, first observed by \citet{sun2024spectr} in the context of LLMs, we show in \Cref{sec:reflection_maximal} that we can sample efficiently from a novel adjusted distribution for DDMs using reflection maximal coupling \citep{bourabee2020coupling}. Our procedure returns exact samples from the target model, and it is optimal in the sense that it maximizes the probability of accepting each draft state.
   \item We investigate several drafting strategies (\Cref{sec:DDMtargetdraft} and \Cref{sec:alternativedraftingmodelsDDMs}). As with LLMs, one can rely on a ``cheap'' diffusion model as draft model, or use a draft model learned from the target model. We propose here instead a simple and effective approach that proposes a draft model relying solely on the target model. This eliminates any need for learning a separate draft model, and is readily applicable to any diffusion model.
    \item We present a complexity analysis and a lower bound on the acceptance ratio of the draft states in \cref{sec:analysis} .
    
    \item We explain in Section \ref{sec:SpecLangevin} how this method can be adapted to accelerate Langevin diffusions to sample unnormalized distributions.
      
    \item The proposed method achieves significant speed-ups for image generation on CIFAR10, and LSUN  using pixel space diffusion models, without any loss of quality (\cref{sec:experiments}). Furthermore, we show similar speed-ups in robotics for policy generation.
    
\end{itemize}

\section{Speculative Sampling for LLMs}\label{sec:speculativesamplingLLMs}

We begin with a review of speculative sampling for LLMs. Consider two probability distributions $q$ and $p$ for sequences on some finite space $\mathcal{X}$. In this context, $q$ corresponds to the joint distribution of tokens for the target LLM, while  $p$ represents the draft model.

\subsection{Speculative Sampling for Autoregressive Targets}
Speculative sampling generates $L$ candidate tokens according to the draft model $p$ which are scored in parallel using the target model $q$. They are then accepted sequentially using an adjusted rejection sampling algorithm. At the first rejection, one needs to sample a new token from an adjusted distribution denoted $r$. A new set of $L$ candidate tokens is then generated, and so on. This is detailed in Algorithm \ref{alg:SpeculativeSampling} using notation $z_{k:\ell}=(z_k,z_{k+1},...,z_\ell)$ for $k\leq \ell$ and $z_{k:\ell}=\emptyset$ for $k>\ell$ for any sequence $(z_k)_{k \in \mathbb{N}}$ and $[k]=\{1,...,k\}$ for any positive integer $k$.
We denote \emph{sequential} computations by \textcolor{customblue}{\textbf{(Seq.)}} and \emph{parallel} computations by \textcolor{customred}{\textbf{(Par.)}}.

\begin{algorithm}
\caption{Speculative Sampling for LLM}\label{alg:SpeculativeSampling}
\begin{algorithmic}
\Require Lookahead integer $L$, maximum length $K$,  draft model $p$, target model $q$, initial context $X_{0:n_0}$.

\State Set $n \leftarrow n_0$
\While{$n < n_0+K$} 
    \State \hspace{-1cm} \textcolor{customblue}{\textbf{(Seq.)}} Sample $\tilde{X}_{n+1:n+L}\sim p(\cdot|X_{0:n})$ 
    \State Get  $p_{n+j} = p(\cdot|X_{0:n}, \tilde{X}_{n+1:n+j-1})$, $j\in[L]$.
    \State \hspace{-1cm} \textcolor{customred}{\textbf{(Par.)}} Get $q_{n+j} = q(\cdot|X_{0:n}, \tilde{X}_{n+1:n+j-1})$, $j\in[L]$. 
    \For{$k=n+1:n+L$}
     
     \State $(X_k, \accept) \leftarrow \rejectionmechanism(p_k, q_k, \tilde{X}_k)$ 
        \If{$\texttt{not}(\accept)$ \textup{or} $X_k=\textup{EOS}$}
            \State Exit For Loop
        \EndIf
    \EndFor
    \State Set $n\leftarrow k$
\EndWhile
\State \Return $X_{n_0+1:n}$
\end{algorithmic}
\end{algorithm}
The rejection mechanism is described in \Cref{alg:MaxCoupling}. 


\begin{algorithm}
\caption{\rejectionmechanism$(p,q,\tilde{X})$}\label{alg:MaxCoupling}
\hspace*{\algorithmicindent}
\begin{algorithmic}
\Require Proba. distributions $p,q$ and $\tilde{X} \sim p$.
\State Sample $U \sim \textup{Unif}[0,1]$.
\State $\accept = \mathbb{I}[U \leq \min(1,q(\tilde{X})/p(\tilde{X}))]$.
\If{\accept}
    \State Set $X=\tilde{X}$.
\Else
    \State  $X \sim r(\cdot)$,~~$r(x)\propto \max(0,q(x)-p(x))$
\EndIf
\State \Return $(X,~\accept)$ where $X\sim q$.
\end{algorithmic}
\end{algorithm}

In \citep{chen2023accelerating,leviathan2023fast}, the draft sequence is sampled using a ``cheap" autoregressive LLM, i.e,  $\tilde{X}_{n+1}\sim p(\cdot|X_{0:n})$, $\tilde{X}_{n+2}\sim p(\cdot|X_{0:n},\tilde{X}_{n+1})$, ..., $\tilde{X}_{n+L}\sim p(\cdot|X_{0:n},\tilde{X}_{n+1:n+L-1})$. However, this does not have to be the case, and any distribution $p(x_{n+1:n+L}|x_{0:n})$ can be used, e.g., in Medusa \citep{cai2024medusa} one samples the draft tokens in parallel by considering a factorized draft distribution $p(x_{n+1:n+L}|x_{0:n})=\prod_{k=n+1}^{n+L}p(x_k|x_{0:n})$. 
Note that in this context, the distributions $\{p(x_{n+1:n+L}|x_{0:n})\}_{n\geq n_0}$ are usually not compatible, i.e.,  they are not the conditional distributions of a joint distribution. To be more precise, we should write $p_n(x_{n+1:n+L}|x_{0:n})$ instead of $p(x_{n+1:n+L}|x_{0:n})$ but we slightly abuse notation here. 

\subsection{Adjusted Rejection Sampling as Maximal Coupling}\label{sec:modifiedrejectionsampling}

At the core of speculative sampling lies an adjusted rejection sampling mechanism which allows for sampling from the (conditional) distribution of a token $q(x):=q(x|\text{past tokens})$ for a target LLM given the (conditional) distribution $p(x):=p(x|\text{past tokens})$ of a token for a draft model. 

As pointed out by \citet{sun2024spectr}, this procedure, summarized in Algorithm \ref{alg:MaxCoupling}, is well-known in the probability literature, and the joint distribution of $(X,Y)$ it induces is a so-called maximal coupling; see e.g. \citep{lindvall2002lectures}, Section 4.5 in \citep{thorisson2000coupling} and \citep{Jacob2022coupling} for a comprehensive introduction. Maximal couplings denote any distribution on $(X,Y)$ maximizing the probability that $X=Y$ while $X\sim p$ and $Y\sim q$. 
For completeness, without any claim for originality, see Proposition \ref{prop:validity} for a formal statement and the supplementary material for a proof. 

\begin{proposition}{}{validity}\label{prop:maxcoupling} Let $\tilde{X}\sim p$ then Algorithm \ref{alg:MaxCoupling} outputs $X \sim q$. This procedure is optimal in the sense that it maximizes the probability that $X=\tilde{X}$ under the constraints $\tilde{X}\sim p,~X\sim q$. Additionally, we have 
\begin{equation}
    \mathbb{P}(X\neq \tilde{X})=||p-q||_{\textup{TV}},
\end{equation}
where $||p-q||_{\textup{TV}}:=\tfrac{1}{2}\sum_{x\in \mathcal{X}}|p(x)-q(x)|$.
\end{proposition}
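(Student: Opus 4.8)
The plan is to verify the three claims in Proposition~\ref{prop:validity} in order: (i) the marginal correctness $X\sim p$ and $Y\sim q$, (ii) the total-variation formula for $\mathbb{P}(X\neq Y)$, and (iii) the optimality among all couplings. First I would establish the marginals. The marginal for $X$ is immediate, since the algorithm draws $X\sim p$ and never modifies it. For $Y$, I would condition on the realized value $X=x$ and compute the probability that $Y=x$, i.e.\ the acceptance probability $\min(1,q(x)/p(x))$, which contributes $p(x)\min(1,q(x)/p(x))=\min(p(x),q(x))$ to the density of $Y$ at $x$. The remaining mass at any point comes from the rejection branch, where $Y\sim r$ with $r(x)\propto\max(0,q(x)-p(x))$. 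The key bookkeeping step is to identify the normalizing constant of $r$ and the overall rejection probability, and then check that the accept-contribution plus the reject-contribution sums to exactly $q(x)$ at every $x$.

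For step (ii), I would compute the rejection probability directly as $\mathbb{P}(X\neq Y)=1-\sum_{x}\min(p(x),q(x))$, and then invoke the standard identity $\sum_x \min(p(x),q(x)) = 1 - \tfrac12\sum_x |p(x)-q(x)|$, which follows from writing $\min(a,b)=\tfrac12(a+b-|a-b|)$ termwise and using that $p$ and $q$ both sum to one. This yields $\mathbb{P}(X\neq Y)=\tfrac12\sum_x|p(x)-q(x)|=\|p-q\|_{\textup{TV}}$. The normalizing constant $Z=\sum_x\max(0,q(x)-p(x))$ of $r$ coincides with this same total-variation quantity, which is precisely why the reject branch correctly fills in the deficit $\max(0,q(x)-p(x))$ to complete the marginal of $Y$ in step (i); so steps (i) and (ii) share this core computation and I would organize them together.

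For the optimality claim (iii), I would use the general coupling lower bound: for \emph{any} joint distribution on $(X,Y)$ with $X\sim p$ and $Y\sim q$, one has $\mathbb{P}(X=Y)=\sum_x\mathbb{P}(X=Y=x)\le\sum_x\min(p(x),q(x))$, since $\mathbb{P}(X=Y=x)\le\mathbb{P}(X=x)=p(x)$ and $\mathbb{P}(X=Y=x)\le\mathbb{P}(Y=x)=q(x)$. Hence $\mathbb{P}(X\neq Y)\ge 1-\sum_x\min(p(x),q(x))=\|p-q\|_{\textup{TV}}$ for every coupling. Since the algorithm attains this bound with equality by step (ii), it is a maximal coupling, which is exactly the optimality statement.

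I expect the main obstacle to be the careful verification in step (i) that the $Y$-marginal is exactly $q$, specifically handling the two cases $q(x)\ge p(x)$ and $q(x)<p(x)$ and confirming the normalization of $r$ matches the total rejection mass so the densities sum correctly. The optimality argument is short and standard once the coupling lower bound is stated. A minor care point is that the stated result is for finite $\mathcal{X}$ with probability mass functions, so all sums are well-defined and no measure-theoretic subtleties arise; I would note that the same argument carries over to general spaces with densities by replacing sums with integrals, which is what justifies the later continuous-state-space application.
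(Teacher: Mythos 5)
Your proposal is correct and follows essentially the same route as the paper's proof: establish the $Y$-marginal by checking that the acceptance mass $\min(p(x),q(x))$ plus the rejection residual $\max(0,q(x)-p(x))$ reconstructs $q(x)$, compute $\mathbb{P}(X\neq Y)=1-\sum_x\min(p(x),q(x))=\|p-q\|_{\textup{TV}}$ via $\min(a,b)=\tfrac{1}{2}(a+b-|a-b|)$, and conclude optimality from the coupling inequality. The only cosmetic difference is that you prove the coupling lower bound inline (via $\mathbb{P}(X=Y=x)\leq\min(p(x),q(x))$) where the paper cites it as Lindvall's inequality, which does not change the argument.
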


\section{Speculative Sampling for Diffusion Models}\label{sec:speculativesamplingDiffusions}

We now present our main contribution, which is the adaptation of speculative sampling to DDMs. Our DDM target model and some drafting strategies are given in \Cref{sec:DDMtargetdraft}, leading to our speculative sampling procedure in \Cref{alg:SpeculativeSamplingDiffusions}.
As for LLMs, this algorithm requires an adjusted rejection sampling procedure. After analyzing the difficulties 
of an implementation of \Cref{alg:MaxCoupling} in the context of DDMs (\Cref{sec:adjusted_implementation}), an original solution resolving these difficulties is presented in  \Cref{sec:reflection_maximal}.

\subsection{Denoising diffusion models, draft models and speculative sampling}\label{sec:DDMtargetdraft}
We first define the target DDM model we want to sample from. Following \citet{song2020denoising}, consider a \emph{forward} noising process where $\bfX_0\sim q_{\textup{data}}$ and $\rmd \bfX_t = f_t \bfX_t \rmd t + g_t \rmd \bfB_t$, where $(\bfB_t)_{t \in [0,1]}$ is a $d$-dimensional Brownian motion. Let $q_t$ the density of $\bfX_t$, we select $f_t,g_t$ such that $q_1\approx \mathcal{N}(0,\Id)$. 
We then consider the process $(\bfY_t)_{t \in [0,1]}$ 
\begin{align}
\label{eq:backward_diffusion}
 &\rmd \bfY_t = b_t (\bfY_t) + \vareps g_{1-t} \rmd \textbf{W}_t,~~\bfY_0 \sim q_1,\\
 &b_t(x)= -f_{1-t}x + \tfrac{1+\vareps^2}{2}g_{1-t}^2 s_{1-t}(x),
\end{align}
where $s_t(x) = \nabla \log q_t(x)$ is the \emph{Stein score},  $(\textbf{W}_t)_{t \in [0,1]}$ is another Brownian motion and $\vareps \geq 0$ is a hyperparameter which controls the stochasticity level of $(\bfY_t)_{t \in [0,1]}$ \citep{albergo2023stochastic}, referred to as the \emph{churn} parameter in the literature \citep{karras2022elucidating}. This process is such that $\bfY_{1-t} \sim q_t$ for all $t\in[0,1]$ and corresponds to the \emph{time-reversal} of $(\bfX_t)_{t \in [0,1]}$ for $\vareps=1$. In practice, $b_t$ is approximated using a neural network denoted $b^q_t$.
At inference we consider $K+1$ discretization steps and let $\gamma = 1/K$ and $(t_k)_{k=0}^K$ with $t_k = k \gamma$; the corresponding distribution of the resulting Markov chain obtained by the Euler--Maruyama discretisation of \eqref{eq:backward_diffusion} and initialized at $ \mathcal{N}(0,\Id)\approx q_1$ is denoted $q(y_{0:K}) = q(y_0) \prod_{k=1}^{K} q(y_k|y_{k-1})$ where
\begin{equation}
\label{eq:autoregressive_gaussian}
   q(y_k| y_{k-1}) = \mathcal{N}(y_k;m^q_{k-1}(y_{k-1}), \sigma^2_{k-1} \Id) ,
\end{equation}
with $q(y_0)= \mathcal{N}(y_0;0, \Id) \approx q_1(y_0)$, $m^q_k(y_k)=y_k + \gamma b^q_{t_k}(y_k)$ and $\sigma_k=\sqrt{\gamma} \vareps  g_{1-t_k}$. The distribution \eqref{eq:autoregressive_gaussian} defines the target model in our speculative sampling procedure. 



Speculative sampling requires specifying a draft model. All the draft models we consider are of the form $p(y_{n+1:n_L}|y_n)=\prod_{k=n+1}^{n_L} p(y_{k}|y_{n:k-1})$ where $n_L = \min(n+L, K)$, $L$ is the length of the draft sequence and 
\begin{equation}\label{eq:draft_model}
     p(y_{k}|y_{n:k-1})=\mathcal{N}(y_{k};m_{k-1}^{p}(y_{n:k-1}), \sigma_{k-1}^2 \Id).
\end{equation}

\paragraph{Independent draft model.} A first choice, similar to the original speculative sampling algorithm \citep{leviathan2023fast}, is to consider a draft model with the same sampling strategy as $q$ but with an approximation $b^p_t$ which is cheaper to evaluate than $b^q_t$. Hence, the draft model satisfies $p(y_{k}|y_{n:k-1}) =p(y_{k}|y_{k-1})$ with  
\begin{equation}
\label{eq:euler_maruyama_backward_draft}
   m_{k}^{p}(y_{n:k})=y_{k} + \gamma b^p_{t_{k}}(y_{k}),~~\sigma_{k}=\sqrt{\gamma} \vareps g_{1-t_{k}}. 
\end{equation}
This choice of draft requires the availability of a cheaper DDM. For $p$ and $q$ to be close and to obtain better performance (i.e., higher acceptance rate of the draft states), this requires training $p$ on the same dataset as $q$, which would be costly and might not be feasible. Even if $p$ and $q$ are trained with the same architecture on the same dataset, there can still be a significant mismatch  between $b^p$ and $b^q$. 

\paragraph{Frozen target draft model.} Another popular choice in speculative sampling is to derive a draft model directly from the target model, see for instance \citep{cai2024medusa}. In the context of diffusion models, we consider here a very simple draft model where $p(y_{k}|y_{n:k-1}) =p(y_{k}|y_n,y_{k-1})$ with 
\begin{equation}
\label{eq:frozen_draft_model}
    m_{k}^{p}(y_{n:k})=y_{k} + \gamma \textcolor{customblue}{b^q_{t_n}(y_n)},~~ \sigma_{k}
    =\sqrt{\gamma} \vareps g_{1-t_k}.
\end{equation}
This draft model is similar to the target model, except that we replace $\textcolor{customred}{b^q_{t_k}(y_k)}$ by $\textcolor{customblue}{b^q_{t_n}(y_n)}$. Importantly, on a window of size $L$, we only need to query the target model \emph{once} in order to draw a draft sequence. This strategy is thus computationally inexpensive, requires no additional training and allows parallel sampling of the draft sequence. However, the differences between the draft and target models can be large near the data distribution as the score function typically exhibits significant variation. 
Consequently,  $b^q_{t_n}(y_n)$ may deviate substantially from $b^q_{t_k}(y_k)$ when $k,n,K$ are close, rendering the approximation $b^q_{t_k}(y_k)\approx b^q_{t_n}(y_n)$ inaccurate.\footnote{More sophisticated approximations, such as local linearization \citep{shoji1998estimation}, could improve accuracy but their computational cost is prohibitive in high dimension.} 
This issue can be addressed at higher computational cost using alternative and more involved drafting procedures, as discussed in \Cref{sec:alternativedraftingmodelsDDMs}.

\begin{algorithm}
\caption{Speculative Sampling for DDM}\label{alg:SpeculativeSamplingDiffusions}
\begin{algorithmic}
\Require Lookahead integer $L$, sequence length $K$,  target model $q$ and draft model $p$.
\State Sample $Y_0 \sim \mathcal{N}(0,\Id)$ and set $n=0$.
\While {$n<K$} 
    \State Set $\tilde{Y}_{n}\leftarrow Y_n$ 
    \State Set $n_L \leftarrow \min(n+L, K)$ and $\tilde{L} = n_L - n$
    \State \hspace{-1cm} \textcolor{customblue}{\textbf{(Seq.)}} Sample draft states $\tilde{Y}_{n+1:n_L} \sim p(\cdot|\tilde{Y}_{n})$ using \eqref{eq:draft_model}.
    \State   Get means of $p_{n+j} = p(\cdot|\tilde{Y}_{n:n+j-1})$, $j \in [\tilde{L}]$.
    \State \hspace{-1cm} \textcolor{customred}{\textbf{(Par.)}} Get means of $q_{n+j} = q(\cdot|\tilde{Y}_{n+j-1})$, $j \in [\tilde{L}]$.
    \For{$k=n+1:n_L$}
        \State $(Y_k, \accept) \leftarrow \rejectionmechanism(p_k, q_k, \tilde{Y}_k)$.
        \If{$\texttt{not}(\accept)$}
            \State Exit For Loop
        \EndIf
    \EndFor
    \State Set $n\leftarrow k$.
\EndWhile
\State \Return $Y_{0:K}$
\end{algorithmic}
\end{algorithm}
Having now defined the target and draft model, we present 
 \Cref{alg:SpeculativeSamplingDiffusions}, our speculative sampling algorithm for diffusion models. This algorithm is similar in principle to \Cref{alg:SpeculativeSampling} for LLMs. The evaluation of the means of $q(y_{n+j}|\tilde{Y}_{n:n+j-1})$ for $j\in[n_L-n]$ is done in parallel. The rejection steps within the for loop are also implemented in parallel.
However, the $\rejectionmechanism$ step in our algorithm requires a substantially different implementation compared to the one defined by \Cref{alg:MaxCoupling} used for LLMs. This difference arises because directly applying the rejection mechanism of \Cref{alg:MaxCoupling} to diffusion models presents significant challenges, as we will demonstrate. 


\subsection{Adjusted Rejection Sampling: Implementation Issues} 
\label{sec:adjusted_implementation}

Using \Cref{alg:MaxCoupling} to define $\rejectionmechanism$ in \Cref{alg:SpeculativeSamplingDiffusions} would yield a valid speculative sampling algorithm for diffusion models, i.e., this algorithm would produce a Markov chain exactly distributed according to the target model, $Y_{0:K}\sim q$, and Proposition \ref{prop:validity}  would also apply directly.\footnote{The proof of Proposition \ref{prop:validity} recalled in \Cref{sec:Mainproofs} extends straightforwardly from $\mathcal{X}$ finite to $\mathbb{R}^d$}
However, we show below that implementing \Cref{alg:MaxCoupling} is problematic in the context of diffusion models. If a draft state is rejected at iteration $k$, where $k > n$, we must then sample $Y_k$ from 
\begin{equation}\label{eq:rejectiondensity}
    r(x) =\frac{\max(0,q(x)-p(x))}{\int_{\mathbb{R}^d} \max(0,q(x)-p(x))\rmd x},
\end{equation}
for $q(y_k):=q(y_k|y_{k-1}),~ p(y_k):=p(y_k|y_{n:k-1})$. Although straightforward for LLMs due to the discrete nature of $r(x)$, a satisfactory solution for continuous state-spaces remains elusive. Leveraging the fact that 
\begin{equation}\label{eq:adjustedrewritten}
r(x) \propto q(x)(1-\min(1,p(x)/q(x))) ,
\end{equation}
we could sample from $r(x)$ using standard rejection sampling. Using $q(x)$ as proposal, the acceptance probability is $1 - \min(1, p(x) / q(x))$ so that the average acceptance probability is 
\begin{equation}
    \int q(x)(1-\min(1,p(x)/q(x))\rmd x= ||p-q||_{\textup{TV}}.
\end{equation}
This is an approach analyzed by \citet{Jacob2022coupling} and adopted by \citet{wang2024continuous} for continuous-valued autoregressive processes. From standard results on rejection sampling, it is known that the number of trials to simulate from the target $q$ before acceptance follows a geometric distribution with parameter $||p - q||_{\textup{TV}}$. This distribution has mean $1/||p - q||_{\textup{TV}}$ and variance $(1-||p - q||_{\textup{TV}})/||p - q||^2_{\textup{TV}}$ (see \citep{Jacob2022coupling} for instance). This implementation of \Cref{alg:MaxCoupling} proves inefficient, as demonstrated by the following simple analysis. 
With probability $||p - q||_{\textup{TV}}$, one needs to sample from \eqref{eq:adjustedrewritten} and, due to the properties of the geometric distribution, the expected number of samples from $q$ we need is $||p-q||_{\textup{TV}}\times (1/||p-q||_{\textup{TV}})=1$. This rejection sampling procedure is thus practically useless, as it requires sampling on average from both $p$ and $q$, as well as computing the acceptance probability $\min(1,q(x)/p(x))$. Another undesirable property of this implementation is that the variance of the number of samples from $q$ one would have to simulate increases rapidly as the draft model $p$ better approximates the target $q$ (i.e., as $||p - q||_{\textup{TV}}$ decreases). These issues have been extensively reported in the literature \citep{Jacob2022coupling}. 

\subsection{Adjusted Rejection Sampling via Reflection-Maximal Coupling}
\label{sec:reflection_maximal}


\begin{figure}[t]
    \centering
    \includegraphics[width=0.55\linewidth]{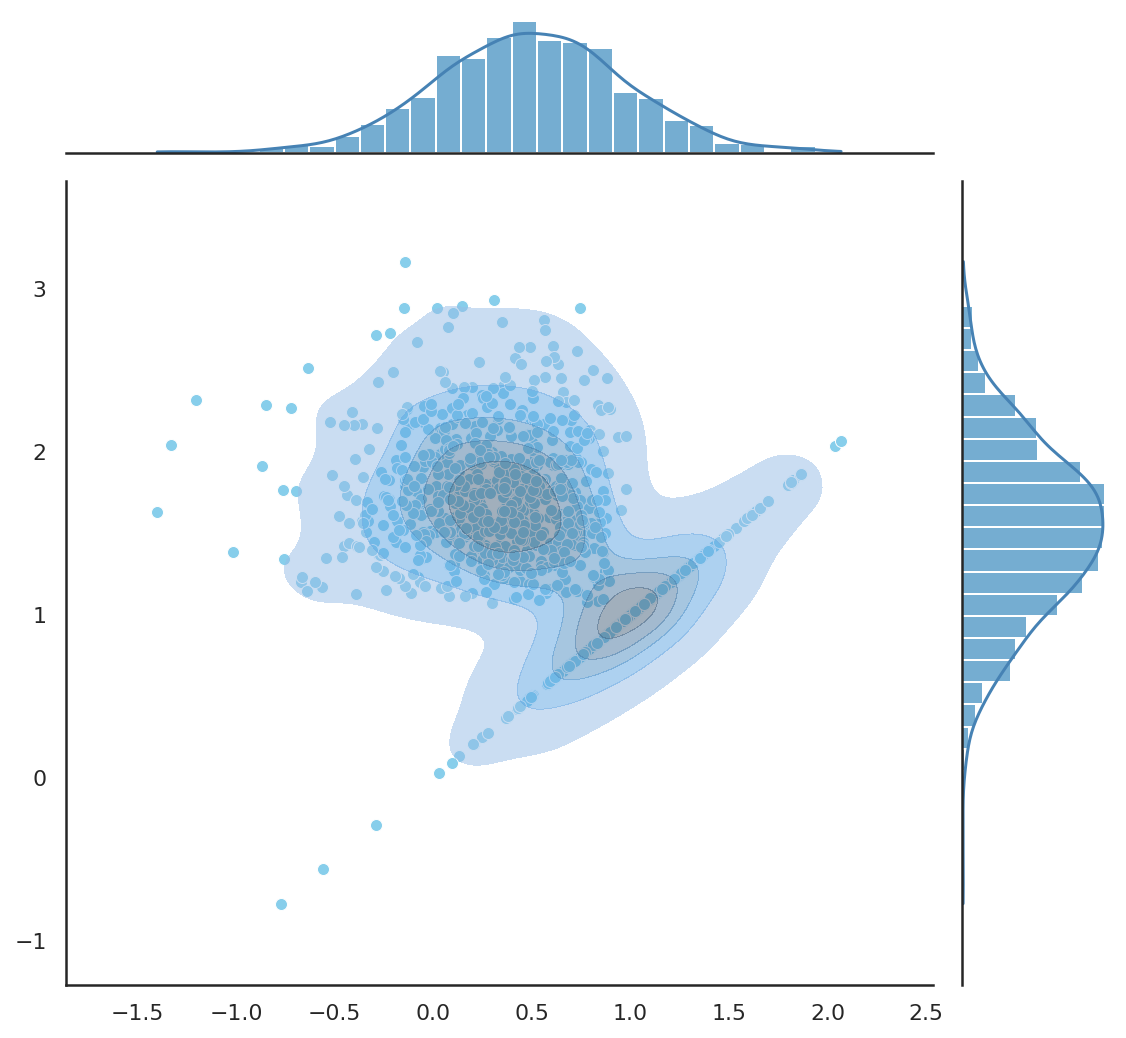}
    
    \vspace{0.3cm} 
    
    \includegraphics[width=0.55\linewidth]{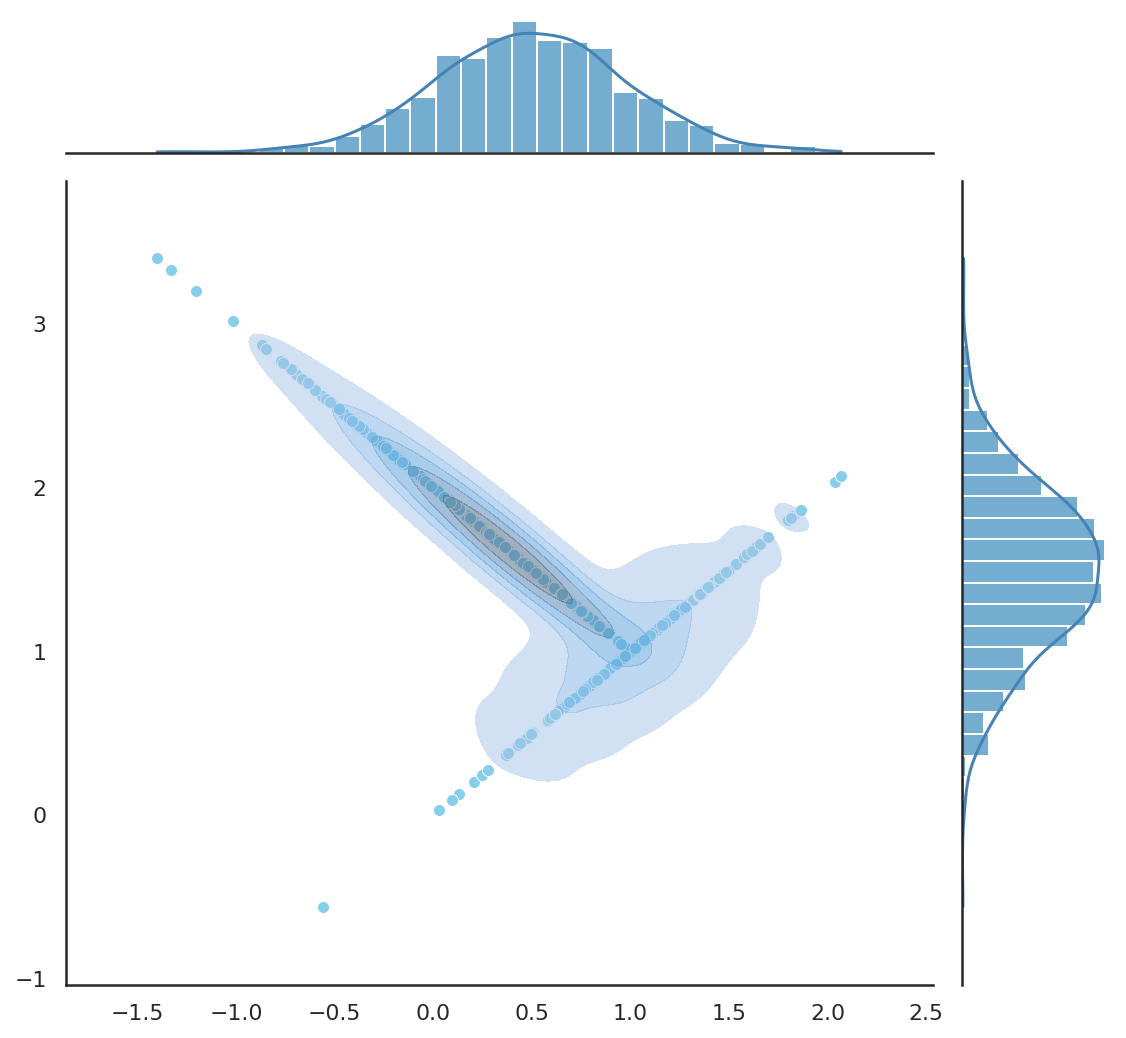}
    
    \caption{Two maximal couplings between $p=\mathcal{N}(0.5, 0.25)$ and $q=\mathcal{N}(1.5, 0.25)$: the one given by \Cref{alg:MaxCoupling} (top) and the reflection maximal coupling from \Cref{alg:ReflectionMaxCoupling} (bottom). By definition, both couplings have $p$ and $q$ as their marginals. As they are maximal couplings, their probability mass on the diagonal is identical and is the maximum among all valid couplings.}
\label{fig:different_maximal_couplings}
\vspace{-0.5cm}
\end{figure}

As discussed in Section \ref{sec:modifiedrejectionsampling}, the adjusted rejection sampling procedure from Algorithm \ref{alg:MaxCoupling} is identical to a specific maximal coupling described, for example, in \citep{lindvall2002lectures}. For DDMs, we have shown that implementing this procedure is challenging. However, it is essential to note that maximal couplings are \emph{not} unique. \citet{bourabee2020coupling} proposed an algorithm known as \emph{reflection maximal coupling} to implement a maximal coupling for two Gaussian distributions $\mathcal{N}(m^p,\sigma^2 \Id)$ and $\mathcal{N}(m^q,\sigma^2 \Id)$. This is directly applicable to diffusion models, since $p(y_k|y_{n:k-1})=\mathcal{N}(y_{k};m_{k-1}^{p}(y_{n:k-1}), \sigma_{k-1}^2 \Id)$ and $q(y_k|y_{k-1})=\mathcal{N}(y_{k};m_{k-1}^{q}(y_{k-1}), \sigma_{k-1}^2 \Id)$ are Gaussian distributions with different means but identical variances. Introduced to establish convergence results for Hamiltonian Monte Carlo, this procedure is noteworthy for its conciseness and its bounded and short running time. We detail it in \Cref{alg:ReflectionMaxCoupling}. 

Direct calculations show that the acceptance probability of the proposal $\tilde{Y} \sim  \mathcal{N}(m^p,\sigma^2 \Id)$ computed with this procedure is identical to the one used in Algorithm \ref{alg:MaxCoupling}. This follows from the fact that $q(\tilde{Y})/p(\tilde{Y})=\mathcal{N}(Z+\Delta; 0, \Id)/\mathcal{N}(Z; 0, \Id)$ with $\Delta = (m^p - m^q) / \sigma$ for $\tilde{Y}=m^p+\sigma Z$. At acceptance, we also have $Y=\tilde{Y}$ as in Algorithm \ref{alg:MaxCoupling}. However, \Cref{alg:ReflectionMaxCoupling} differs fundamentally from \Cref{alg:MaxCoupling}, as upon rejection of the draft state, the new state is computed \emph{deterministically} as a function of the rejected state, instead of sampling from \eqref{eq:rejectiondensity}; see \Cref{fig:different_maximal_couplings} for an illustration. This requires only one evaluation of the target $q(y_k|y_{k-1})$ to obtain the state $Y_k$. Therefore, we use \Cref{alg:ReflectionMaxCoupling} for $\rejectionmechanism$ in our implementation of speculative sampling. A  detailed full implementation is provided in \Cref{alg:DetailedSpeculativeSamplingDiffusions}. 
The following proposition, which parallels Proposition \ref{prop:maxcoupling}, establishes the correctness of the method and follows Section 2.3.2 from \citet{bourabee2020coupling}.

\begin{proposition}{Reflection Coupling}{ReflectionMaxCoupling}\label{prop:reflectionmaxcoupling} Let $p(x)=\mathcal{N}(x;m^p,\sigma^2 \Id)$, $q(x)=\mathcal{N}(x;m^q,\sigma^2 \Id)$ and $\tilde{Y}\sim p$. Algorithm \ref{alg:ReflectionMaxCoupling} outputs $Y\sim q$. Additionally, it maximizes the probability that $Y=\tilde{Y}$ and
\begin{equation}\label{eq:probaXneqY}
    \mathbb{P}(Y \neq \tilde{Y})=||p-q||_{\textup{TV}}=2\Phi(\sigma^{-1}||m^p-m^q||/2)-1,
\end{equation}
where $||p-q||_{\textup{TV}}=\tfrac{1}{2}\int |p(x)-q(x)|\rmd x$ and $\Phi$ is the c.d.f. of the standard normal random variable. 
\end{proposition}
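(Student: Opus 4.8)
The plan is to work in the standardized coordinate $Z\sim\mathcal N(0,\Id)$ in which the algorithm is naturally expressed, writing $X=m^p+\sigma Z$, $\Delta=(m^p-m^q)/\sigma$, and $e=\Delta/\|\Delta\|$, and recalling that on rejection the new state is the reflected point $Y=m^q+\sigma RZ$, where $R=\Id-2ee^\top$ is the orthogonal reflection in the hyperplane $e^\perp$. The marginal $X\sim p$ is immediate, since $Z\sim\mathcal N(0,\Id)$ regardless of the accept/reject outcome, so the entire content is the claim $Y\sim q$ together with the identification of $\mathbb P(X=Y)$. Throughout I will use the acceptance probability $a(Z)=\min(1,\phi(Z+\Delta)/\phi(Z))$, with $\phi$ the standard $d$-dimensional normal density, which the excerpt already shows equals $\min(1,q(X)/p(X))$.

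For the marginal of $Y$, I would test against an arbitrary bounded $h$ and decompose over the two branches, writing $Y=m^q+\sigma W$ where $W=Z+\Delta$ on acceptance and $W=RZ$ on rejection, so that establishing $Y\sim q$ reduces to showing $W\sim\mathcal N(0,\Id)$, i.e. $\mathbb E[a(Z)\,h(Z+\Delta)]+\mathbb E[(1-a(Z))\,h(RZ)]=\int h(w)\phi(w)\,\mathrm dw$. On the acceptance term I substitute $w=z+\Delta$ and use $a(w-\Delta)\phi(w-\Delta)=\min(\phi(w-\Delta),\phi(w))$; on the rejection term I substitute $w=Rz$ and exploit the two defining properties of the reflection, namely $\|Rw\|=\|w\|$ (so $\phi(Rw)=\phi(w)$) and $R\Delta=-\Delta$ (so $Rw+\Delta=R(w-\Delta)$ and hence $\phi(Rw+\Delta)=\phi(w-\Delta)$). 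These turn the rejection term into $\int h(w)\,[\phi(w)-\min(\phi(w),\phi(w-\Delta))]\,\mathrm dw$, which cancels against the acceptance term to leave exactly $\int h(w)\phi(w)\,\mathrm dw$; pushing $W$ through $w\mapsto m^q+\sigma w$ then gives $Y\sim\mathcal N(m^q,\sigma^2\Id)=q$.

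The identity for $\mathbb P(X\neq Y)$ follows from the acceptance calculation: $\mathbb P(X=Y)=\mathbb E[a(Z)]=\int\min(\phi(z),\phi(z+\Delta))\,\mathrm dz=1-\|p-q\|_{\textup{TV}}$, where the last step uses the general fact $\int\min(f,g)=1-\|f-g\|_{\textup{TV}}$ for densities together with the invariance of total variation under the bijection $z\mapsto m^p+\sigma z$ that maps $\phi(\cdot),\phi(\cdot+\Delta)$ to $p,q$. Optimality is then immediate: the coupling inequality $\mathbb P(X\neq Y)\ge\|p-q\|_{\textup{TV}}$ holds for every coupling, as recalled in Proposition~\ref{prop:maxcoupling}, and our construction attains equality. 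To get the closed form $2\Phi(\sigma^{-1}\|m^p-m^q\|/2)-1$ I reduce to one dimension: the two Gaussians agree on the $d-1$ directions orthogonal to $e$ and differ only along $e$, so $\|p-q\|_{\textup{TV}}$ equals the one-dimensional total variation between $\mathcal N(0,1)$ and $\mathcal N(\delta,1)$ with $\delta=\|m^p-m^q\|/\sigma$, which, integrating across the crossing point $\delta/2$, evaluates to $2\Phi(\delta/2)-1$.

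The one genuinely delicate step is the marginal of $Y$: everything hinges on pairing the change of variables $w=Rz$ in the rejection branch with the two algebraic identities $R\Delta=-\Delta$ and $\|Rz\|=\|z\|$, which are precisely what force the reflected mass to complete the accepted mass into a clean $\mathcal N(0,\Id)$. The remaining pieces—the marginal of $X$, optimality via the coupling inequality, and the scalar total-variation integral—are routine.
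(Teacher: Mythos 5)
Your proof is correct and follows essentially the same route as the paper's: both establish $Y\sim q$ by the same change of variables in the rejection branch (using that the reflection is a norm-preserving involution with $\phi(Rz+\Delta)=\phi(z-\Delta)$, which you derive slightly more slickly via $R\Delta=-\Delta$ where the paper expands $\|Rz+\Delta\|^2$ directly), and both obtain optimality from $\mathbb{P}(X=Y)=\int\min(\phi(z),\phi(z+\Delta))\,\rmd z=1-\|p-q\|_{\textup{TV}}$ together with Lindvall's coupling inequality. Your test-function formulation is just a cosmetic variant of the paper's delta-density computation, and you additionally spell out the one-dimensional total-variation integral that the paper leaves as a routine check.
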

This result shows that the efficiency of speculative sampling at time $k$ -- i.e., the probability of accepting a draft state -- is a decreasing function of   $||m^p_{k-1}(\tilde{Y}_{n:k-1})-m^q_{k-1}(\tilde{Y}_{k-1})||/\sigma_{k-1}$. This means that, as expected, a draft model must reasonably approximate the target for good performance.

\begin{algorithm}
\caption{\rejectionmechanism$(p,q,\tilde{Y})$ for two Gaussians with same covariance}\label{alg:ReflectionMaxCoupling}
\begin{algorithmic}
\Require Gaussians $p(x)=\mathcal{N}(x;m^p,\sigma^2 \Id), q(x)=\mathcal{N}(x;m^q,\sigma^2 \Id)$ and $\tilde{Y} \sim p$.
\State Set $\Delta=(m^p-m^q)/\sigma$ and $e=\Delta/||\Delta||$.
\State Let $Z  = (\tilde{Y} - m^p) / \sigma$. 
\State Sample $U \sim \textup{Unif}[0,1]$.
\State $\accept = \mathbb{I}\Bigl[U \leq \min \Bigl(1,\frac{\mathcal{N}(Z+\Delta;0,\Id)}{\mathcal{N}(Z;0,\Id)}\Bigr)\Bigr]$.
\If{$\accept$}
    \State Set $Y = \tilde{Y}$.
\Else
    \State Set $Y = m^q + \sigma (\Id - 2 e  e^{\top})Z$.
\EndIf
\State \Return  ($Y$, $\accept$) where $Y\sim q$.
\end{algorithmic}
\end{algorithm}




\section{Theoretical analysis}\label{sec:analysis}
We provide an analysis of the proposed methodology. We derive an approximation of the complexity of speculative sampling in \Cref{sec:complexity}, and a lower-bound on the acceptance ratio  when using an independent draft model, in \Cref{sec:lower_bound}.

\subsection{Complexity analysis}
\label{sec:complexity}
We analyze here the computational benefits of using speculative sampling for DDMs under a simplified computational model. We assume an independent draft model $p$ given by \eqref{eq:euler_maruyama_backward_draft}. The cost of evaluating $b^q_t$ and $b^p_t$ are $C_q$ and $C_p$ respectively with $C_q > C_p$. Using a window size $L$, each step of speculative sampling increases the iteration index $n$ by a random variable $\hat{L} \in \{1, \dots, L\}$. The cost of running the target model for $K$ iterations is $C_{\mathrm{original}} = K C_q$, while speculative sampling approximately requires $C_{\mathrm{spec}} = (K / \hat{L})(L C_p + C_q)$. This simplified computational model leads directly to the following proposition. 
\begin{proposition}{Average cost ratio}{cost_ratio}
We have that 
\begin{equation}\label{eq:averagecostration}
    \mathbb{E}[C_{\mathrm{original}} / C_{\mathrm{spec}}] = \frac{\mathbb{E}[\hat{L}]}{1+L C_p / C_q} . 
\end{equation}
\end{proposition}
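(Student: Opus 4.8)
The plan is to verify the identity by direct substitution, since the simplified computational model stipulated in the preceding paragraph has already reduced the claim to a purely algebraic statement. First I would write out the ratio using $C_{\mathrm{original}} = K C_q$ and $C_{\mathrm{spec}} = (K/\hat{L})(L C_p + C_q)$, obtaining
\begin{equation*}
\frac{C_{\mathrm{original}}}{C_{\mathrm{spec}}} = \frac{K C_q}{(K/\hat{L})(L C_p + C_q)}.
\end{equation*}
The factor $K$ cancels immediately, and dividing numerator and denominator by $C_q$ yields $\hat{L}/(1 + L C_p/C_q)$.

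Second, I would take the expectation. The only point requiring any care is to distinguish the random from the deterministic quantities: the window size $L$ and the per-evaluation costs $C_p, C_q$ are fixed constants, so the denominator $1 + L C_p/C_q$ is non-random, whereas the per-step increment $\hat{L} \in \{1, \dots, L\}$ is the sole source of randomness. Consequently, by linearity of expectation the constant denominator factors out, giving $\mathbb{E}[\hat{L}]/(1 + L C_p/C_q)$, which is exactly \eqref{eq:averagecostration}.

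I do not anticipate any genuine obstacle: the proposition is an immediate consequence of the cost model, and the whole argument is a single algebraic simplification followed by linearity of expectation. If there is any subtlety, it lies not in the proof but in the modelling assumptions behind $C_{\mathrm{spec}}$ — in particular the approximation that advancing the index $n$ from $0$ to $K$ requires about $K/\hat{L}$ speculative rounds, treating $\hat{L}$ as a single representative per-round increment. These assumptions are taken as given by the statement, so they need not be revisited; the formal proof simply records the computation above.
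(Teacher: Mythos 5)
Your proposal is correct and matches the paper's (implicit) argument exactly: the paper states that the simplified cost model ``leads directly'' to the proposition, meaning precisely your computation --- cancel $K$, divide by $C_q$ to get $C_{\mathrm{original}}/C_{\mathrm{spec}} = \hat{L}/(1+L C_p/C_q)$, and take expectations with the deterministic denominator factored out. Your closing remark correctly locates the only real content in the modelling assumption behind $C_{\mathrm{spec}}$ (treating $\hat{L}$ as a single representative per-round increment), which the paper likewise takes as given.
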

Note that the average cost ratio \eqref{eq:averagecostration} is independent of $K$. Speculative sampling is beneficial if this ratio exceeds one, which occurs if and only if 
\begin{equation}
\label{eq:condition_speculative_successful}
 \mathbb{E}[\hat{L}] / L \geq C_p / C_q + 1/L.
\end{equation}
Under the simplifying assumption that the acceptance probability of any draft state is lower bounded by $\alpha$, independent across the state sequence, one has 
\begin{equation}
    \mathbb{E}[\hat{L}]\ge \sum_{\ell=0}^{L-1} (\ell+1) \alpha^\ell (1-\alpha)+ L \alpha^L=1-\alpha^L+L\alpha^{L+1}.  
\end{equation}
This highlights the competing factors in speculative sampling. To satisfy \eqref{eq:condition_speculative_successful}, we aim for $\mathbb{E}[\hat{L}] / L$ to be as close to one as possible, indicating a high acceptance ratio and thus a draft model that closely approximates the target model. This typically implies that $C_p \approx C_q$. Conversely, while \eqref{eq:condition_speculative_successful} is made easier to satisfy by minimizing $C_p / C_q$, this will in practice cause the acceptance ratio to deteriorate, consequently decreasing $\mathbb{E}[\hat{L}] / L$. 

\subsection{Lower bound on acceptance ratio}
\label{sec:lower_bound}
We shed light here on how the acceptance ratio depends on the problem parameters and an independent draft model. Let $a_{n} =  \mathcal{N}(Z + \Delta_n;0,\Id) / \mathcal{N}(Z;0,\Id)$ for $Z\sim \mathcal{N}(0,\Id)$ where \begin{equation}
     \| \Delta_n \|^2=\frac{1}{4}\gamma(\vareps + \tfrac{1}{\vareps})^2 g^2_{1-t_n} \|s^p_{1-t_n}(\tilde{Y}_n)-s^q_{1-t_n}(\tilde{Y}_n)\|^2,
\end{equation}
for $b^q_t(x)=-f_{1-t}x+\tfrac{1+\varepsilon^2}{2}g^2_{1-t}s^q_{1-t}(x)$, where both $s^q_{t}$ and $s^p_{t}$  approximate the true score $s_{t}$ at differing computational cost. The draft state at time $n + 1$ is accepted with probability $\min(1 ,a_n)$. We have the following lower bound. 

\begin{lemma}{Control of acceptance ratio}{control_log_acceptance}
We have
\begin{align}
\mathbb{E}[a_{n}] \geq & \exp\left[- \tfrac{1}{2}\mathbb{E}[ \| \Delta_n \|^2]\right].
\end{align}
\end{lemma}
Similar results can be established for the frozen draft model.

We next assume that the target model has access to the exact score for distribution $q_{\data}$, and that the draft model score corresponds to an exact score for some distribution $p_{\data}$ (we can think of this as a means of characterizing the inexactness of the draft model).  We obtain the following result. 
\begin{theorem}{Control of acceptance ratio (II)}{control_log_acceptance_main}
Under assumptions on $p_{\data}$ and $q_{\data}$ detailed in the supplementary material, we have 
\begin{align}
    &\mathbb{E}[a_{n}]  \geq \exp\left[- C \tfrac{\gamma g_{s_n}^2}{8} (\vareps + \tfrac{1}{\vareps})^2 \right. \\
    &\times \left. \min\left( (\tfrac{1}{\sigma_{s_n}} - \sigma_{s_n})^2 + \alpha_{s_n}^2, \tfrac{1}{\alpha_{s_n}^2}\divergence(p_{\data},q_{\data}) \right) \right], 
\end{align}
where $s_n=1-t_n$, $C$ a constant and $\divergence(p_{\data},q_{\data})$ is some divergence between $p_{\data}$ and $q_{\data}$ explicit in the proof.
\end{theorem}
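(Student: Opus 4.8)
The plan is to combine the preceding lemma (\Cref{lem:control_log_acceptance}) with an upper bound on the expected squared score mismatch $\mathbb{E}[\|s^p_{s_n}(\tilde{Y}_n)-s^q_{s_n}(\tilde{Y}_n)\|^2]$: once this expectation is bounded by $C'\min(\cdots)$, the theorem is immediate, with $C$ absorbing the factor $\tfrac{1}{8}$ and $C'$. Under the supplementary assumptions I would treat $\tilde{Y}_n$ as (approximately) distributed according to the target forward marginal $q_{s_n}$, so the object to control is the $q_{s_n}$-weighted relative Fisher information $\mathbb{E}_{q_{s_n}}[\|s^p_{s_n}-s^q_{s_n}\|^2]$. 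The first step is to rewrite the scores via Tweedie's identity: writing $m^q_s(x)=\mathbb{E}_q[X_0\mid X_s=x]$ for the posterior-mean denoiser (and $m^p_s$ likewise), one has $s^q_s(x)=(\alpha_s m^q_s(x)-x)/\sigma_s^2$, so the common term $-x/\sigma_s^2$ cancels and $s^p_s(x)-s^q_s(x)=\tfrac{\alpha_s}{\sigma_s^2}(m^p_s(x)-m^q_s(x))$. The whole problem thus reduces to bounding the \emph{denoiser discrepancy} $\mathbb{E}_{q_{s_n}}[\|m^p_{s_n}-m^q_{s_n}\|^2]$, which I would estimate in two complementary ways and then take the smaller, the prefactor being $\alpha_{s_n}^2/\sigma_{s_n}^4$.

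For the universal bound (first term of the min), the key observation is that the denoiser discrepancy is small at \emph{both} ends of the noise schedule regardless of how different the two data laws are: at low noise each denoiser nearly inverts the forward map, while at high noise each collapses to a controlled neighbourhood of the prior mean. Using the moment/boundedness assumptions on $p_{\data},q_{\data}$ to control posterior means of bounded-moment variables, I expect the discrepancy to scale like $\sigma_{s_n}^2$ (up to schedule-dependent factors, vanishing as $\sigma_{s_n}\to 0$); multiplying by the prefactor $\alpha_{s_n}^2/\sigma_{s_n}^4$ and simplifying via the schedule relation $\alpha_{s_n}^2+\sigma_{s_n}^2=1$ produces exactly $(\tfrac{1}{\sigma_{s_n}}-\sigma_{s_n})^2+\alpha_{s_n}^2$. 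This estimate is data-agnostic and dominates when $p_{\data}$ and $q_{\data}$ are far apart.

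For the divergence bound (second term), I would instead exploit closeness of the two data laws through a change-of-measure / posterior-stability argument: $m^p_s(x)-m^q_s(x)$ is a difference of posterior means under priors $p_{\data}$ and $q_{\data}$ sharing the \emph{same} Gaussian likelihood, so bounding the likelihood-reweighting by the divergence $\divergence(p_{\data},q_{\data})$ (a KL/transport-type quantity made explicit in the proof) should give $\mathbb{E}_{q_{s_n}}[\|m^p_{s_n}-m^q_{s_n}\|^2]\lesssim \tfrac{\sigma_{s_n}^4}{\alpha_{s_n}^4}\divergence(p_{\data},q_{\data})$. Multiplying by $\alpha_{s_n}^2/\sigma_{s_n}^4$ yields the $\tfrac{1}{\alpha_{s_n}^2}\divergence(p_{\data},q_{\data})$ term, which is sharp when the data laws are close. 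Taking the minimum of the two estimates, substituting into \Cref{lem:control_log_acceptance}, and folding all numerical and data-dependent constants into $C$ then gives the stated inequality.

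The main obstacle is the divergence bound: the relative Fisher information of the \emph{noised} marginals must be controlled pointwise in $s_n$ by a \emph{data-level} divergence, whereas the natural entropy-dissipation (de Bruijn) identity for the common forward SDE only bounds a time-integral of this Fisher information. Localizing it to the single noise level $s_n$ — equivalently, carrying out the posterior-stability estimate with the correct $\sigma_{s_n}^4/\alpha_{s_n}^4$ dependence and identifying the precise divergence for which it holds — is the delicate step. A secondary technical point is justifying, under the supplementary assumptions, the replacement of the law of $\tilde{Y}_n$ by the exact marginal $q_{s_n}$.
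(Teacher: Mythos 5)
Your outer skeleton is the paper's: start from \Cref{lemma:control_log_acceptance}, bound $\mathbb{E}\bigl[\|s^p_{s_n}(\tilde{Y}_n)-s^q_{s_n}(\tilde{Y}_n)\|^2\bigr]$ under the assumption $\tilde{Y}_n \sim q_{s_n}$ (the paper indeed imposes $Y^t_{k_0}\sim p_{1-t_{k_0}}$ as a hypothesis, so your ``secondary technical point'' is handled by assumption, not proved), establish the two branches of the min separately, and conclude via Jensen. The Tweedie reduction of the score gap to $\tfrac{\alpha_s}{\sigma_s^2}(m^p_s-m^q_s)$ is also valid. However, your justification of the first (schedule) branch is wrong in a way that matters. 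The claim that the denoiser discrepancy is ``small at both ends of the noise schedule regardless of how different the two data laws are,'' scaling like $\sigma_{s_n}^2$ with a data-agnostic constant, is false: take $p_{\data}=\mathcal{N}(0,\epsilon^2)$ and $q_{\data}=\mathcal{N}(x_1,\epsilon^2)$ in one dimension. Then $s^p_t(x)-s^q_t(x)=-\alpha_t x_1/(\alpha_t^2\epsilon^2+\sigma_t^2)$ identically, so $\mathbb{E}_{q_t}\|s^p_t-s^q_t\|^2=\alpha_t^2 x_1^2/(\alpha_t^2\epsilon^2+\sigma_t^2)^2\approx \alpha_t^2 x_1^2/\sigma_t^4$ for $\epsilon\ll\sigma_t$, which exceeds any moment-only multiple of $(\tfrac{1}{\sigma_t}-\sigma_t)^2+\alpha_t^2$ as $\sigma_t\to 0$. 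The paper's \Cref{prop:control_fisher_score_time} survives precisely because its constant carries a factor $(1+\chi_2(p_{\data}|q_{\data}))^{1/2}$ (enormous in this example): it compares \emph{each} score to the standard Gaussian score $\nabla\log p_1(x)=-x$ via Tweedie, and handles the mismatch between the $q_{s_n}$-weighted expectation and the $p$-side score by Cauchy--Schwarz together with the $\chi_2$ data-processing inequality (\Cref{lemma:convergence_modified_fisher_score}, \Cref{lemma:data_processing}) --- a cross-measure step your first branch omits entirely. (Also note the paper's schedule is $\alpha_t=1-t$, $\sigma_t=t$, so your simplification via $\alpha^2+\sigma^2=1$ does not apply; the form $(\tfrac{1}{\sigma}-\sigma)^2+\alpha^2$ comes directly from writing $s_t(x)+x=\mathbb{E}[\alpha_t \bfX_0-(\tfrac{1}{\sigma_t}-\sigma_t)\bfZ \mid \bfX_t=x]$, not from a VP identity.)

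The second branch is the heart of the theorem, and you correctly identify it as the obstacle but do not supply the idea that closes it --- and your diagnosis of the difficulty (localizing a de Bruijn/entropy-dissipation time integral) points in the wrong direction, since no such localization is used. The paper's tool is the Target Score Identity (\Cref{lemma:target_score_identity}): $\nabla\log p^i_t(x_t)=\tfrac{1}{\alpha_t}\int \nabla\log p_0^i(x_0)\,p^i_{0|t}(x_0|x_t)\,\rmd x_0$, which yields the $1/\alpha_{s_n}^2$ prefactor for free and splits the score gap into (a) a difference of \emph{data-level} scores averaged against a common posterior, bounded by Jensen by the data-level relative Fisher information $\fisher(p_{\data}|q_{\data})$, and (b) a posterior-stability term (same integrand, two posteriors), bounded by Cauchy--Schwarz and the posterior $\chi_\alpha$ control of \Cref{lemma:markov_prop}, which data-processes back to $\chi$ divergences between the data laws (\Cref{prop:control_fisher_score_measure}). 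This is what makes the divergence explicit, namely $\divergence(p_{\data},q_{\data})=\fisher(p_{\data}|q_{\data})+\chi_{16}(q_{\data}|p_{\data})^{1/8}+\chi_{16}(p_{\data}|q_{\data})^{1/8}$ --- your proposal never identifies it, and your asserted posterior-stability estimate $\mathbb{E}_{q_{s}}\|m^p_s-m^q_s\|^2\lesssim \tfrac{\sigma_s^4}{\alpha_s^4}\divergence$ is exactly the unproven step. So the proposal reproduces the theorem's outer architecture but is missing its central technical content in both branches: the $\chi_2$ cross-measure correction in the schedule bound, and the Target Score Identity decomposition in the divergence bound.
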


There are different factors influencing the lower bound of \Cref{thm:control_log_acceptance_main}:
\vspace{-.3cm}
\begin{itemize}[leftmargin=*,itemsep=0pt]
    \item As $\gamma \to 0$, we have $\mathbb{E}[a_{n}] \geq 1$, implying that a smaller discretization step size leads to higher acceptance rates of draft states. However, a smaller step size also necessitates a larger total number of steps to reach the target. 
    \item If $\divergence(p_{\data},q_{\data}) \to 0$  then $\mathbb{E}[a_{n}] \geq 1$. This means that if the draft and target models approximate the  same data distribution then we obtain a higher acceptance rate.
    \item If $g_t^2((\tfrac{1}{\sigma_t} - \sigma_t)^2 + \alpha_t^2) \to 0$ as $t \to 1$ then $\mathbb{E}[a_{n}] \geq 1$ for $n$ close to $0$. This is the case for classical schedules $(f_t, g_t)$ used in practice. Hence at the beginning of the denoising process, the acceptance rate is high.
    \item The dependency with respect to $\vareps$ is such that both low and high values worsen the lower bound. There exists an optimal parameter $\vareps$ ($\vareps=1.0$ in this bound). In practice, we sweep over $\vareps > 0$. 
\end{itemize}


\section{Speculative Sampling for Langevin Diffusions}\label{sec:SpecLangevin}
Consider a scenario where we are interested in sampling from an unnormalized density $\pi(x)$ on $\mathbb{R}^d$, i.e.
\begin{equation}
\pi(x)=\frac{\exp(-E(x))}{Z},\qquad Z=\int \exp(-E(x))\rmd x,
\end{equation}
where the energy function $E(x)$ can be evaluated pointwise, but each evaluation is computationally expensive, and $Z$ is intractable. To sample from such distributions, we typically use Markov chain Monte Carlo (MCMC) techniques which are iterative algorithms requiring evaluating the energy function at each iteration. We show here how we can accelerate MCMC methods when we have access to a computationally cheap proxy energy function $\hat{E}(x) \approx E(x)$ defining $\hat{\pi}(x)\propto \exp(-\hat{E}(x))$ using speculative sampling. Access to such proxies is common in many domains of computational science and engineering; see e.g. \citep{christen2005markov,cui2011bayesian,sherlock2017adaptive,peherstorfer2018survey}.

A standard MCMC technique to sample from $\pi$ is the Langevin diffusion defined by
\begin{equation}\label{eq:Langevin}
    \rmd \bfX_t=-\nabla E(\bfX_t)\rmd t+\sqrt{2}\rmd \bfB_t,
\end{equation}
where $(\bfB_t)_{t\geq 0}$ is a Brownian motion. The limiting distribution of this diffusion is $\pi$. In practice, the so-called unadjusted Langevin algorithm (ULA) \citep{durmus2017nonasymptotic,vempala2019rapid} is often implemented 
\begin{equation}\label{eq:Langevintargetmodel}
    X_{k+1}=X_k- \gamma \nabla E(X_k)+\sqrt{2\gamma}W_k,
\end{equation}
for a stepsize $\gamma>0$ and $W_k\overset{\textup{i.i.d.}}{\sim} \mathcal{N}(0,\Id)$. Due to this time discretization, ULA only samples from an approximation of $\pi,$ but explicit bounds on the bias incurred are available \citep{durmus2017nonasymptotic}.\\
The speculative sampling procedure for DDMs presented in \Cref{alg:SpeculativeSamplingDiffusions} and relying on reflection maximal coupling (\Cref{alg:ReflectionMaxCoupling}) can be easily modified to accelerate the simulation of \eqref{eq:Langevintargetmodel}. In this scenario, \eqref{eq:Langevintargetmodel} plays the role of the target model while
\begin{equation}\label{eq:Langevindraftmodel}
    X_{k+1}=X_k-\gamma \nabla \hat{E}(X_k)+\sqrt{2\gamma}W_k,
\end{equation}
is the draft model. As a cheap proxy, we can also use the frozen draft model strategy, that is set $\nabla \hat{E}(x_{n+k})=\nabla E(x_n)$ for $k=1,...,n_L$. \\
Speculative sampling can be interpreted here as a pre-fetching technique \citep{brockwell2006parallel,angelino2014accelerating}; see Appendix \ref{app:specsamplingLangevin} for a detailed description of the algorithm. It is an alternative to recent methods proposed to accelerate Langevin diffusions relying also on on parallel evaluations of $\nabla E$ \citep{shen2019randomized,anari2024fast,yu2024parallelized,zhou2024parallel}.

\section{Related works}
\label{sec:related_works}

\paragraph{Speculative sampling.} 
Introduced in the context of LLMs by \citet{leviathan2023fast,chen2023accelerating}, speculative sampling relies on a draft model based on a cheap LLM. An early drafting methodology proposing multiple tokens at once was put forth by \citet{stern2018blockwise}, while drafting with independent models was explored in \cite{chen2023accelerating,leviathan2023fast,spector2023accelerating,sun2024spectr,christopher2024speculativediffusiondecodingaccelerating}. Efficient drafting using the target model with additional feedforward neural network (FFN) heads was considered in \citep{stern2018blockwise,sun2021instantaneous,xia2023speculative,cai2024medusa}. 
Finally, it has been proposed very recently by \citet{christopher2024speculativediffusiondecodingaccelerating} to use a discrete DDM \citep{austin2021structured,campbell2022continuous} as draft model for an autoregressive target model. For a comprehensive review of speculative sampling techniques for LLMs, we refer to  \citet{xia2024unlocking}. \citet{wang2024continuous} has adapted speculative sampling to continuous state space but sample from the adjusted distribution \eqref{eq:rejectiondensity} using rejection sampling, which is computationally inefficient in our context.


\paragraph{Acceleration of diffusion models.} One line of work distills a teacher DDM into a student DDM for faster sampling; see \citep{luhman2021knowledge,salimans2022progressive,berthelot2023tract,liu2023instaflow,meng2023distillation,sauer2023adversarial,song2023consistency,katzir2023noise,kim2023consistency,xu2024ufogen,yin2024one}. For a review of distillation methods, we refer to \citet{luo2023comprehensive,dieleman2024distillation}. Another line of work pursues accelerating sampling through improved integrators \citep{dockhorn2022genie, liu2022pseudo,lu2022dpm,xiao2021tackling,zhangfast2023}. Additionally, parallel sampling of DDMs has been explored in \cite{shih2023parallel,chen2024accelerating,li2024distrifusion,ma2024deepcache,tang2024accelerating}. 
Our approach complements these approaches and can be combined with parallel sampling and/or better integrators.
In \Cref{sec:additional_results}, we support this claim by combining our method with the parallel sampling integrator from \citep{shih2023parallel}. Specifically, we show that our method can benefit in terms of both NFE and FID from using a single parallel call. In addition, our method can be seamlessly used in combination with timestep distillation methods, such as those in \citep{sabour2024align,tong2024learning}.


\section{Experiments}\label{sec:experiments}

In all of our experiments, we track two different types of metrics. First, we assess the quality of the output distribution obtained with the speculative sampling strategy (Wasserstein-2 in the low dimensional case, FID \citep{Heusel:2017} and IS \citep{salimans2016improved} in the image experiments and reward \citep{chi2023diffusion} in the robotics setting). We also report the Number of Function Evaluations of the target model; a function evaluation is defined as a call to the target model with a batch of data, irrespective of the batch size. Experiments to accelerate Langevin diffusions can be found in Appendix \ref{app:specsamplingLangevin}. 

\paragraph{Low dimensional experiments.} We first investigate \Cref{alg:SpeculativeSamplingDiffusions} in a low dimensional setting in order to better understand the effect of key hyperparameters of the algorithm. We consider a mixture of Gaussians target distribution with dimension varying between $[2,4,8,16,32]$ and $16$ components. All diffusion models are trained with a velocity objective, see \Cref{sec:experiment_setting}. We consider two drafting strategies: the \independentdrafter strategy and the  \frozendrafter strategy as described in \Cref{sec:DDMtargetdraft}. We also refer to \Cref{sec:experiment_setting} for the architectural details. In \Cref{fig:analysis_low_dimension}, we display the effects on the performance of the algorithm of the stochasticity $\varepsilon$ in the sampler and the window size $L$.


\begin{figure}[h!]
    \centering

    \begin{tikzpicture}
        \node[anchor=south west,inner sep=0] (image) at (0,0) {\includegraphics[width=0.35\textwidth]{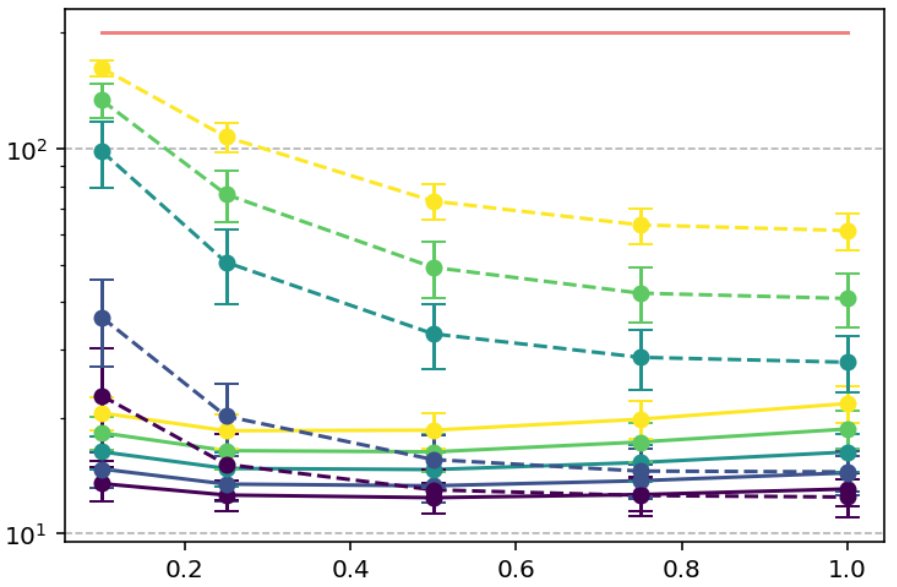}}; 
        \begin{scope}[x={(image.south east)},y={(image.north west)}]
            \node[anchor=north] at (0.5,0) {Stochasticity $\varepsilon$};
            \node[anchor=east,rotate=90] at (-0.05,0.85) {$\mathrm{NFE}$ target};
        \end{scope}
    \end{tikzpicture}

    \vspace{0.5cm} 

    \begin{tikzpicture}
        \node[anchor=south west,inner sep=0] (image) at (0,0) {\includegraphics[width=0.35\textwidth]{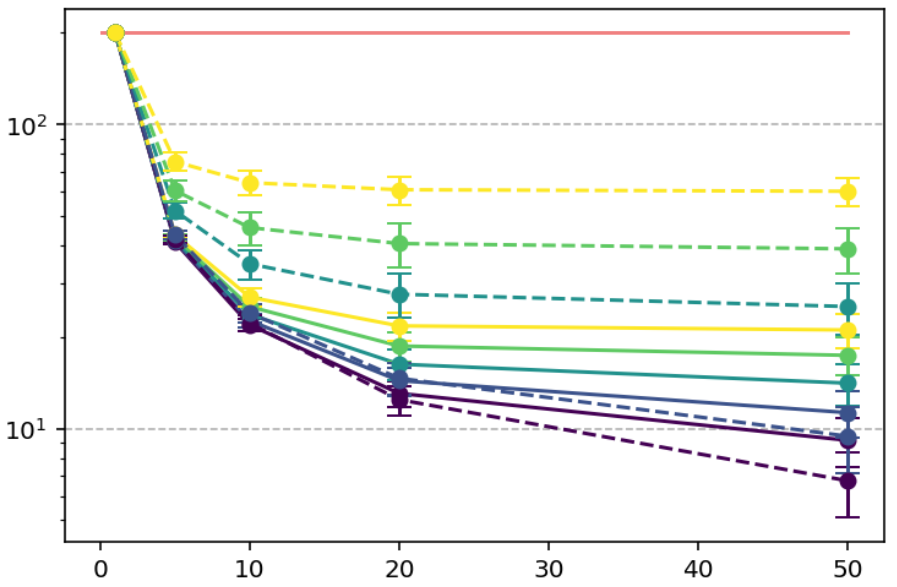}}; 
        \begin{scope}[x={(image.south east)},y={(image.north west)}]
            \node[anchor=north] at (0.5,0) {Window size};
            \node[anchor=east,rotate=90] at (-0.05,0.85) {$\mathrm{NFE}$ target};
        \end{scope}
    \end{tikzpicture}
    
    \caption{In each figure, the $y$ axis corresponds to the number of evaluations of the target model. Without speculative sampling, we evaluate the target model with $200$ steps and show the improvements obtained using our approach. Each dotted line corresponds to \independentdrafter drafting, each solid line to \frozendrafter drafting. The color gradient purple to yellow corresponds to different dimensions of the target distribution $[2,4,8,16,32]$. }
    \label{fig:analysis_low_dimension}
\end{figure}

\begin{figure}
\centering

\end{figure}

\begin{table*}[h]
\scriptsize
\centering
\begin{tabular}{|c||c|c||c|c||c|c||c|c|c|}
\hline
Configuration & \multicolumn{2}{c||}{\textbf{Draft} (100 steps)} & \multicolumn{2}{c||}{\textbf{Target} (100 steps)} & \multicolumn{2}{c||}{\textbf{Target} (30 steps)} &\multicolumn{3}{c|}{\textbf{Speculative}} \\ \cline{2-10} & FID $\downarrow$ & IS $\uparrow$ & FID $\downarrow$ & IS $\uparrow$& FID $\downarrow$  & IS $\uparrow$ & FID $\downarrow$  & IS $\uparrow$ & NFE $\downarrow$ \\ \hline \hline
$\vareps=0.01$, $\tau=0.5$ &  \textbf{17.05} & \textbf{8.67} & 2.86 & 10.10 & \textbf{4.32} & 10.83 & 2.84 & 10.11 & 65.36 \\ \hline
$\vareps=0.01$, $\tau=1.0$ &  \textbf{17.05} & \textbf{8.67} & 2.86 & 10.10 & \textbf{4.32} & 10.83 & 2.84 & 10.11 & 61.64 \\ \hline
$\vareps=0.01$, $\tau=2.0$ &  \textbf{17.05} & \textbf{8.67} & 2.86 & 10.10 &\textbf{ 4.32} & 10.83 & 2.83 & 10.12 & 57.47 \\ \hline
$\vareps=0.25$, $\tau=0.5$ &  81.58 & 7.60 & \textbf{2.45} & 10.31 & 7.68 & 11.32 & 2.42 & 10.24 & 42.44 \\ \hline
$\vareps=0.25$, $\tau=1.0$ &  81.58 & 7.60 & \textbf{2.45} & 10.31 & 7.68 & 11.32 & 2.35 & 10.25 & 39.31 \\ \hline
$\vareps=0.25$, $\tau=2.0$ &  81.58 & 7.60 & \textbf{2.45} & 10.31 & 7.68 & 11.32 & \textbf{2.34} & 10.32 & \textbf{35.40} \\ \hline
$\vareps=0.5$, $\tau=0.5$ &  115.57 & 5.25 & 2.81 & 10.72 & 10.28 & \textbf{11.55} & 2.71 & 10.59 & 43.08 \\ \hline
$\vareps=0.5$, $\tau=1.0$ &  115.57 & 5.25 & 2.81 & 10.72 & 10.28 & \textbf{11.55} & 2.71 & 10.57 & 40.37 \\ \hline
$\vareps=0.5$, $\tau=2.0$ &  115.57 & 5.25 & 2.81 & 10.72 & 10.28 & \textbf{11.55} & 2.74 & 10.52 & 36.72 \\ \hline
$\vareps=1.0$, $\tau=0.5$ &  188.29 & 2.64 & 7.09 & \textbf{11.22} & 28.93 & 11.48 & 7.12 & 11.14 & 46.54 \\ \hline
$\vareps=1.0$, $\tau=1.0$ &  188.29 & 2.64 & 7.09 & \textbf{11.22} & 28.93 & 11.48 & 7.10 & \textbf{11.18} & 44.81 \\ \hline
$\vareps=1.0$, $\tau=2.0$ &  188.29 & 2.64 & 7.09 & \textbf{11.22} & 28.93 & 11.48 & 7.11 & 11.14 & 42.11 \\ \hline
\end{tabular}
\caption{CIFAR-10 evaluation. For each column, we report the best result in \textbf{bold}.}
\label{tab:cifar_image_space}
\end{table*}
\normalsize

\begin{table*}[h]
\centering
\begin{tabular}{|c|c|c||c|c|}
\hline
Configuration & \multicolumn{2}{|c||}{\textbf{Target}} & \multicolumn{2}{c|}{\textbf{Speculative}} \\  \cline{2-5} &  Reward $\uparrow$ & NFE $\downarrow$ & Reward $\uparrow$  & NFE $\downarrow$ \\ \hline 
$L=20, K=100$ & 0.889 $\pm$ 0.008 & 100 & 0.898 $\pm$ 0.008 & 27.245 $\pm$ 0.002 \\ \hline 
$L=20, K=80$ & 0.882 $\pm$ 0.008 & 80 & 0.899 $\pm$ 0.008 & 23.890 $\pm$ 0.003\\ \hline 
$L=20, K=40$ & 0.898 $\pm$ 0.008 & 40 & 0.875 $\pm$ 0.008 & 15.544 $\pm$ 0.005 \\ \hline 
$L=20, K=20$ & 0.887 $\pm$ 0.008 & 20 & 0.901 $\pm$ 0.008 & 9.430 $\pm$ 0.004 \\ \hline 
$L=10, K=10$ & 0.901 $\pm$ 0.008 & 10 & 0.903 $\pm$ 0.007 & 5.053 $\pm$ 0.001 \\ \hline 
$L=5, K=5$ & 0.876 $\pm$ 0.008 & 5 & 0.870 $\pm$ 0.009 & 3.000 $\pm$ 0.000 \\ \hline 
\end{tabular}
\caption{PushT evaluation.}
\label{tab:pusht_evals}
\end{table*}

\Cref{fig:analysis_low_dimension} illustrate that \frozendrafter drafting is more efficient than \independentdrafter drafting as it provides a better reduction of the NFE of the target models. This is in accordance with findings in speculative decoding/sampling for LLMs, see e.g.  \cite{cai2024medusa}. Regarding the amount of stochasticity $\varepsilon$, there appears to be an optimal value $\varepsilon$ for which the speculative sampling gains are optimal in agreement with theoretical insights derived in \Cref{thm:control_log_acceptance_main}. Finally, increasing the window size $L$ improves the performance of speculative sampling. 

\paragraph{Image space experiments.} Next, we demonstrate speculative sampling in higher dimensional settings on two datasets: CIFAR10 ($32\times32\times 3$) and LSUN ($64\times64\times3$). In all settings, the backbone architecture is a U-Net. We refer to \Cref{sec:experiment_setting} for architectural and training details. For all experiments we report FID score computed on 50k training samples. Our results are reported in \Cref{tab:cifar_image_space}. We investigate the effect of temperature on our models. More precisely, we introduce an hyperparameter $\tau > 0$ such that 
$\mathcal{N}(Z+\Delta;0,\Id)/\mathcal{N}(Z;0,\Id))$ is replaced by $\mathcal{N}(Z+\Delta;0, \tau \Id)/\mathcal{N}(Z;0,\tau \Id))$, see \Cref{sec:temperature} for more details. Note that upon choosing $\tau > 1$ we do not sample exactly from $q$ but improve the acceptance rate. We sweep over the values of $\vareps$ and $\tau$ in \Cref{tab:cifar_image_space} on the CIFAR10 dataset. The main conclusion is that our proposed speculative sampling algorithm provides a significant speed-up (x2 to x3) while maintaining the quality of the target model. 
For example, on CIFAR10, we reach a FID score of 2.34 with only 35 calls to the target model, while the classical sampling procedure requires 100 calls to the target model to reach a FID score of 2.45, which represents a reduction of 65\% of the number of calls to the target model. Running the target model for only $30$ steps on the other hand reduces the image quality, as the FID score worsens to 4.32. 
It is worth noting that increasing the temperature marginally improve FID and IS score for some values of $\vareps$. We observe similar improvements (around halving the NFE) in the case of LSUN, see \Cref{sec:additional_results}.

\paragraph{PushT dataset.} Finally, we conclude our experimental study by showing that speculative sampling also yields improvements for a robotics task, where the policy is generated using a diffusion model following \citep{chi2023diffusion}. In our setting, we focus on the PushT dataset. The state space is of dimension $(16,2)$, where $16$ corresponds to the prediction horizon and $2$ is the dimension of the action. We refer to \Cref{sec:experiment_setting} for more details. The metric we report is the reward $r \in [0,1]$ where $r=1.0$ means that the policy achieves perfect coverage over an episode. For robustness we run $1000$ episodes to compute the mean of the maximum rewards. For each episode we run the policy for $300$ steps or stop if we reach the maximum reward. We follow the setting of \citep{chi2023diffusion}, see also \Cref{sec:experiment_setting}. For our speculative sampler we fix $\tau=1$ and do not perform any parallel call. We only consider the \frozendrafter drafting strategy. We report our results in \Cref{tab:pusht_evals}. We consistently observe that the speculative sampling strategy reduces the number of call to the target model while preserving the quality of the model. For instance, with only 5 calls to the target model, our speculative sampler achieves a reward of $0.903 \pm 0.007$ while running the target model with only 5 steps yields a reward of $0.876 \pm 0.008$.

\section{Discussion}
\label{sec:discussion}
We have developed here a novel speculative sampling procedure to accelerate diffusion models. This was achieved by exploiting the connections between speculative sampling and maximal coupling, specifically through the use of reflection maximal coupling. We have demonstrated that significant speed-up can be achieved while sampling exactly from the target distribution.


This approach also has limitations. It is not directly applicable to deterministic samplers, although noise can be added in a principled way to such samplers to obtain a valid stochastic sampler, to which speculative sampling can then be applied (see e.g. \Cref{sec:DDMtargetdraft}). Moreover, similarly to Picard iteration techniques \citep{shih2023parallel,chen2024accelerating}, it increases memory overhead due to the parallel calls to the sampler during the verification procedure.

In particular, while LLMs are memory-bound and therefore heavily benefit from speculative decoding/sampling techniques that increase arithmetic intensity and reduce latency, this is not necessarily the case for diffusion models, which already benefit from parallelism. The applicability of parallel techniques for serving diffusion models, therefore, remains an active area of investigation.

\newpage
\section*{Impact Statement}
This paper presents work whose goal is to advance the field of Machine Learning. There are many potential societal consequences of our work, none which we feel must be specifically highlighted here.

\bibliography{icml2025.bbl}

\newpage
\appendix
\onecolumn

\section*{Organization of the Supplementary Material}

This supplementary material is organized as follows. Proofs of the main results are gathered in \Cref{sec:Mainproofs}. Potential alternative drafting strategies are discussed in \Cref{sec:alternativedraftingmodelsDDMs}. A detailed implementation of speculative sampling for diffusion models is presented in \Cref{sec:detailedimplementationspeculativesamplingdiffusions}. In \Cref{sec:timetorejection}, we present various results on the first time to rejection and how this could be efficiently approximated numerically. An extension of the maximal coupling strategy for Gaussians with different variances is proposed in \Cref{sec:maximal_coupling_different_covariance}. In \Cref{sec:acceptance_strategies}, we investigate alternative acceptance criteria relying on either the introduction of a temperature parameter or the use of the ``typical acceptance criterion" of \cite{cai2024medusa} introduced for LLMs.
\Cref{sec:projection_operator} presents an extension of speculative sampling to incorporate some spatial transform. In \Cref{sec:theoretical_results}, we establish a lower bound on the expectation of the log-acceptance ratio. 
Experimental details are gathered in \Cref{sec:experimental_details}. Finally, \Cref{app:specsamplingLangevin} details how the speculative sampling procedure proposed in this work can be used to accelerate simulation of Langevin diffusions to sample from unnormalized target distributions and presents simulations in this context.

\section{Proofs of the Main Results}\label{sec:Mainproofs} 
\subsection{Proof of Proposition \ref{prop:validity}}
The joint distribution of $(\tilde{X},X)$ generated by Algorithm \ref{alg:MaxCoupling} is
\begin{align}
    f(\tilde{x},x)=p(\tilde{x})(\alpha(\tilde{x}) \updelta_{\tilde{x}}(x)+(1-\alpha(\tilde{x}))r(x)),
\end{align}
with $\updelta_{\tilde{x}}(x)$ the Kronecker-delta symbol and $\alpha(\tilde{x})=\min(1,q(\tilde{x})/p(\tilde{x}))$. That is we first sample $\tilde{X} \sim p$ then set $X=\tilde{X}$ with probability $\alpha(\tilde{X})$ and sample $X \sim r$ otherwise. It follows that the marginal distribution of $X$ is given by 
\begin{align}\label{eq:densityoutput}
    f(x)=\sum_{\tilde{x}\in \mathcal{X}} f(\tilde{x},x)= \alpha(x) p(x)+\Bigl(1-\sum_{\tilde{x}\in \mathcal{X}} \alpha(\tilde{x}) p(\tilde{x}) \Bigr)r(x).
\end{align}
We have 
\begin{align}
    r(x) &\propto \max(0,q(x)-p(x))\\
    &=q(x)- \min(p(x),q(x))\\
    &=q(x)-\alpha(x)p(x).
\end{align}
Therefore, we have that 
\begin{equation}
    r(x) = \frac{q(x) - \alpha(x)p(x) }{1-\sum_{\tilde{x}\in \mathcal{X}} \alpha(\tilde{x}) p(\tilde{x})} .
\end{equation}

Hence, by substituting the expression of $r(x)$ in \eqref{eq:densityoutput}, we obtain $f(x)=q(x)$, that is $X\sim q$. Now by construction, we have that
\begin{align}
    \mathbb{P}(X\neq \tilde{X})& =1 - \sum_{x\in \mathcal{X}} p(x)\alpha(x) \\
    &=1-\sum_{x\in \mathcal{X}} \min(p(x),q(x))\\
    &=||p-q||_{\textup{TV}},
\end{align}
as $\min(a,b)=\tfrac{1}{2}(a+b-|a-b|)$ for any $a,b$. However, Lindvall's inequality \citep{lindvall2002lectures} (also known as the coupling inequality) shows that any pair of random variables $X,\tilde{X}$ satisfying marginally $\tilde{X} \sim p$ and $X \sim q$ verify
\begin{equation}\label{eq:couplinginequality}
     ||p-q||_{\textup{TV}} \leq \mathbb{P}(X\neq \tilde{X}).
\end{equation}
Algorithm \ref{alg:MaxCoupling} generates a joint distribution for which the inequality \eqref{eq:couplinginequality} becomes an equality; hence it is optimal.

\subsection{The need for adjusted rejection sampling}
\label{sec:need_for_adjusted}

One could wonder if an algorithm where we sample from $q$ under rejection and not from the modified probability $r(x) \propto \max(0, q(x) - p(x))$ would work. In particular, we could consider \Cref{alg:MaxCouplingFalse}.

\begin{algorithm}
\caption{\textbf{INCORRECT} \rejectionmechanism$(p,q,\tilde{X})$}\label{alg:MaxCouplingFalse}
\hspace*{\algorithmicindent}
\begin{algorithmic}
\Require Proba. distributions $p,q$ and $\tilde{X} \sim p$.
\State Sample $U \sim \textup{Unif}[0,1]$.
\State $\accept = \mathbb{I}[U \leq \min(1,q(\tilde{X})/p(\tilde{X}))]$.
\If{\accept}
    \State Set $X=\tilde{X}$.
\Else
    \State  $X \sim q(\cdot)$.
\EndIf
\State \Return $(X,~\accept)$ where $X\sim q$.
\end{algorithmic}
\end{algorithm}

It can easily be shown that \Cref{alg:MaxCouplingFalse} does not output $Y$ with distribution $q$ as in this case the joint distribution of $X,Y$ is 
\begin{equation}
    f(\tilde{x},x)=p(\tilde{x})(\alpha(\tilde{x}) \updelta_{\tilde{x}}(x)+(1-\alpha(\tilde{x}))q(x)),
\end{equation}
so the marginal distribution of $X$ is given by  
\begin{equation}
    f(x)= \alpha(x) p(x)+\Bigl(1-\sum_{\tilde{x}\in \mathcal{X}} \alpha(\tilde{x}) p(\tilde{x}) \Bigr)q(x) \neq q(x).
\end{equation}
In particular, the following example illustrates the problems with \Cref{alg:MaxCouplingFalse}. Consider $p = \mathrm{Unif}(\{0, 1\})$ and $q = \mathrm{Unif}(\{0,1,2,3\})$. In that case, we have that $\accept = \mathrm{Ber}(1/2)$. Hence, we accept $\tilde{X}$ half of the time in expectation. If we were sampling from $q = \mathrm{Unif}(\{0,1,2,3\})$ upon rejection then the output distribution $f(x)$ of $X$ would be given by  
\begin{equation}
    X \sim \frac{3}{4} \mathrm{Unif}(\{0,1\}) + \frac{1}{4} \mathrm{Unif}(\{2,3\}) . 
\end{equation}
This means that we sample \emph{too much} on the set $\{0,1\}$. In order to get $X \sim q(\cdot)$ we need to sample more on the set which is \emph{outside} of the support of $p$. This is exactly the purpose of $r(\cdot)$. Indeed, we have that $r(x) = \mathrm{Unif}(\{2,3\})$. Hence, using \Cref{alg:MaxCoupling} we get that 
\begin{equation}
    X \sim \frac{1}{2} \mathrm{Unif}(\{0,1\}) + \frac{1}{2} \mathrm{Unif}(\{2,3\}), 
\end{equation}
that is, $X \sim q$ as required.
\subsection{Proof of Proposition \ref{prop:ReflectionMaxCoupling}}
\label{sec:proof_ReflectionMaxCoupling}
We have $\tilde{Y} \sim \mathcal{N}(m^p;\sigma^2 \Id)$. We check here that the algorithm also returns $Y \sim \mathcal{N}(m^q;\sigma^2 \Id)$. To show this, we leverage the fact that we can rewrite $Y=m^q+\sigma \tilde{Z}$ for some random variable $\tilde{Z}$ whose distribution follows
\begin{align}\label{eq:distributiontildeepsilon}
    f(\tilde{z})=&\int \updelta_{z+\Delta}(\tilde{z})\min\Bigl(1,\frac{\mathcal{N}(z+\Delta;0,\Id)}{\mathcal{N}(z;0,\Id)}\Bigr)\mathcal{N}(z;0,\Id)\rmd z \\
    &+\int \updelta_{(\Id - 2 e  e^{\top})z}(\tilde{z})\max\Bigl(0,1-\frac{\mathcal{N}(z+\Delta;0,\Id)}{\mathcal{N}(z;0,\Id)}\Bigr)\mathcal{N}(z;0,\Id)\rmd z,
\end{align}
where we have used $1-\min(1,a)=\max(0,1-a)$ for $a\geq 0$. Hence to show the validity of the procedure, we need now to show that $\tilde{Z} \sim \mathcal{N}(0,\Id)$, i.e., $f(\tilde{z})=\mathcal{N}(\tilde{z};0,\Id)$.
We have 
\begin{align}
    &\int \updelta_{z+\Delta}(\tilde{z})\min\Bigl(1,\frac{\mathcal{N}(z+\Delta;0,\Id)}{\mathcal{N}(z;0,\Id)}\Bigr)\mathcal{N}(z;0,\Id)\rmd z \\
    =&\int \updelta_{z+\Delta}(\tilde{z})\min\Bigl(\mathcal{N}(z;0,\Id),\mathcal{N}(z+\Delta;0,\Id)\Bigr)\rmd z\\
    =&\min\Bigl(\mathcal{N}(\tilde{z}-\Delta;0,\Id),\mathcal{N}(\tilde{z};0,\Id)\Bigr) .
\end{align}
In addition, we have that 
\begin{align}
    &\int \updelta_{(\Id - 2 e  e^{\top})z}(\tilde{z})\max\Bigl(0,1-\frac{\mathcal{N}(z+\Delta;0,\Id)}{\mathcal{N}(z;0,\Id)}\Bigr)\mathcal{N}(z;0,\Id)\rmd z \\
    =&\int \updelta_{(\Id - 2 e  e^{\top})z}(\tilde{z})\max\Bigl(0,\mathcal{N}(z;0,\Id)-\mathcal{N}(z+\Delta;0,\Id)\Bigr)\rmd z\\
    =&\max\Bigl(0,\mathcal{N}((\Id - 2 e  e^{\top})\tilde{z};0,\Id)-\mathcal{N}((\Id - 2 e  e^{\top})\tilde{z}+\Delta;0,\Id)\Bigr)\\
    =&\max(0,\mathcal{N}(\tilde{z};0,\Id)-\mathcal{N}(\tilde{z}-\Delta;0,\Id))
\end{align}
as $\tilde{z}=(\Id - 2 e  e^{\top})z$ implies that $z=(\Id - 2 e  e^{\top})\tilde{z}$ and $\mathcal{N}((\Id - 2 e  e^{\top})\tilde{z};0,\Id)=\mathcal{N}(\tilde{z};0,\Id)$ because $||(\Id - 2 e  e^{\top})\tilde{z}||=||\tilde{z}||$. Finally we used the fact that $\mathcal{N}((\Id - 2 e  e^{\top})\tilde{z}+\Delta;0,\Id)=\mathcal{N}(\tilde{z}-\Delta;0,\Id)$ as 
\begin{align}
 ||(\Id - 2 e  e^{\top})\tilde{z}+\Delta||^2 &=||\Delta||^2+||(\Id - 2 e  e^{\top})\tilde{z}||^2+2 \Delta^{\top} \tilde{z}-4\Delta^{\top} e e^{\top}\tilde{z} \\
 &=||\Delta||^2+||\tilde{z}||^2-2 \Delta^{\top} \tilde{z}\\
 &=||\tilde{z}-\Delta||^2,
\end{align}
as $ee^{\top}=\Delta \Delta^{\top}/||\Delta||^2$.
Combining these results, we obtain that 
\begin{align}
    f(\tilde{z})&=\min(\mathcal{N}(\tilde{z}-\Delta;0,\Id),\mathcal{N}(\tilde{z};0,\Id))+\max(0,\mathcal{N}(\tilde{z} ;0,\Id)-\mathcal{N}(\tilde{z} -\Delta;0,\Id))\\
    &=\mathcal{N}(\tilde{z};0,\Id).
\end{align}
We thus have proved that $\tilde{Z} \sim \mathcal{N}(0,\Id)$, so $Y \sim \mathcal{N}(m^q,\sigma^2 \Id)$.  To prove now that this coupling is a maximal coupling, we compute $\mathbb{P}(Y \neq \tilde{Y})$. Recall that $Y=\tilde{Y}$ if $U\leq \min(1,\mathcal{N}(z+\Delta;0,\Id)/\mathcal{N}(z;0,\Id))$  so  
\begin{align}
    \mathbb{P}(Y \neq \tilde{Y})=1-\int \min\Bigl(\mathcal{N}(z;0,\Id),\mathcal{N}(\Delta+z;0,\Id)\Bigr)\mathcal\rmd z. 
\end{align}
It is straightforward to check that this is indeed equal to 
\begin{equation}
    ||p-q||_{\textup{TV}}=||\mathcal{N}(\mu_1,\sigma^2 \Id)-\mathcal{N}(\mu_2,\sigma^2 \Id)||_{\textup{TV}}=2 \Phi(||\Delta||/2)-1,
\end{equation}
where $\Phi$ is the cumulative distribution function of the standard normal random variable. Hence, it follows from Lindvall's inequality \citep{lindvall2002lectures} that Algorithm \ref{alg:ReflectionMaxCoupling} outputs a maximal coupling. 

Similarly to \Cref{sec:need_for_adjusted}, it can be easily shown that we cannot sample simply independently from $q$ in the case we reject as it would output unconditionally a random variable $Y$ whose distribution differs from $p$.

\subsection{Optimality of reflection maximal coupling}

In \Cref{sec:proof_ReflectionMaxCoupling} we have shown that the reflection coupling is a maximal coupling. In what follows, we denote $\mathcal{C}(m^p, m^q)$ the set of coupling, i.e., distributions on $\rset^d \times \rset^d$ with marginals $\mathcal{N}(m^p, \sigma^2 \Id)$ and $\mathcal{N}(m^q, \sigma^2 \Id)$. We also denote by $\Pi_{\text{reflection}} \in \mathcal{C}(m^p, m^q)$ the reflection coupling. \Cref{prop:ReflectionMaxCoupling} shows that 
\begin{equation}
    \Pi_{\text{reflection}} \in \text{argmin}_{\Pi \in \mathcal{C}(m^p, m^q)} \mathbb{E}_{(\tilde{Y},Y) \sim \Pi}[\mathbf{1}_{\tilde{Y} \neq Y}] . 
\end{equation}
In fact, \citet[Theorem 4.2]{hsu2013maximal} show that  
\begin{equation}
    \Pi_{\text{reflection}} \in \text{argmin}_{\Pi \in \mathcal{C}(m^p, m^q)} \mathbb{E}_{(\tilde{Y},Y) \sim \Pi}[\phi(\| \tilde{Y} - Y\|)] , 
\end{equation}
for every non-negative, strictly increasing and strictly concave function $\phi$ with $\phi(0)=0$. Hence, the reflection coupling also naturally appears if one considers other cost functions than $(\tilde{y},y) \mapsto \mathbf{1}_{\tilde{y} \neq y}$. 

\section{Alternative drafting strategies for diffusion models}\label{sec:alternativedraftingmodelsDDMs}

\paragraph{Medusa-like correction.}
To improve the frozen target as draft model (see \eqref{eq:frozen_draft_model}), we can introduce a correction term to the frozen model. Our correction is inspired by the Medusa architecture \citep{cai2024medusa}. More precisely, we consider a smaller \emph{correction model} $c_{s,t}^\theta$ trained with the following loss
\begin{equation}
    \mathcal{L}(\theta) = \int_0^1 \int_0^1 \| b^q_t(x_t) - b^q_s(x_s) - c_{s,t}^\theta(x_s, x_t) \|^2 p_{s,t}(x_s, x_t) \rmd x_s \rmd x_t. \label{eq:loss_medusa}
\end{equation}
Here $p_{s,t}$ can be any distribution with support on $\rset^d \times \rset^d$ as the minimizer for $c_{s,t}(x_s, x_t)$ is then always $b_t^q(x_t) - b_t^q(x_s)$. However, in practice, one may choose $p_{s,t}(x_s, x_t)$ defined by the following procedure
\begin{equation}
    \bfX_t = \alpha_t \bfX_0 + \sigma_t \bfZ_t , \qquad \bfX_s = \tfrac{\alpha_t}{\alpha_s} \bfX_s + (\sigma_t^2 - (\tfrac{\sigma_s \alpha_t}{\alpha_s})^2)^{1/2}  \bfZ_s ,
\end{equation}
where $\bfZ_t$ and $\bfZ_s$ are independent Gaussian random variables with zero mean and identity covariance matrix where this joint distribution of $\bfX_s$ and $\bfX_t$ is induced by the diffusion
\begin{equation}
    \rmd \bfX_u = f_u \bfX_u + g_u \rmd \bfB_u . 
\end{equation}
If the correction model is expressive enough then the minimizer of \eqref{eq:loss_medusa} is given by $c_{s,t}^\theta(x_s, x_t) =  b^q_t(x_t) -  b^q_s(x_s)$ and in that case the draft model is equal to the target model.  

We then have a model $p(y_{k}|y_{n:k-1})= \mathcal{N}(y_{k};m_{k-1}^{p}(y_{n:k-1}), \sigma_{k-1}^2 \Id)$ with 
\begin{equation}
\label{eq:corrected_frozen_draft_model} 
m_{k}^{p}(y_{n:k})=y_{k} + \gamma \{ b^q_{t_n}(y_n) + c_{t_n, t_k}^\theta(y_n, y_k)\},\qquad \sigma_{k}= \vareps \sqrt{\gamma} g_{1-t_k}.
\end{equation}
Hence, on a window of size $L$, we only need to evaluate the target model once while the (cheap) correction model is evaluated $L$ times. If $c^\theta_{s,t} = 0$ then we recover the draft model proposed in \eqref{eq:frozen_draft_model}. Note that similarly to the frozen model, we can sample the draft states in parallel.

\paragraph{Combining draft models.} Assume we have $N_p$ draft models such that $p_\ell(y_{k} | y_{k-1}) = \mathcal{N}(y_{k};m_{k-1}^{p,\ell}(y_{n:k-1}), \sigma_{k-1}^2 \Id)$ and let $\alpha_{k-1}^\ell(y_{k-1}) \geq 0$ such that $\sum_{\ell=1}^{N_p} \alpha_{k}^\ell(y_{k-1}) = 1$. We can define a new draft distribution
\begin{equation}
    \label{eq:mix_of_models}
    p^{\alpha}_{\mix}(y_{k} | y_{k-1}) = \mathcal{N}(y_k; m^{\mix}_{k-1}(y_{k-1}),\sigma_{k-1}^2 \Id), \quad m^{\mix}_{k-1}(y_{k-1}) = \sum_{\ell=1}^{N_p} \alpha_{k-1}^\ell(y_{k-1}) m^{p,\ell}_{k-1}(y_{k-1}) .
\end{equation}
The distribution in \eqref{eq:mix_of_models} mixes together $N_p$ draft models by considering a convex combination of their means. Since it is a Gaussian, \Cref{prop:reflectionmaxcoupling} applies and we get that 
\begin{equation}\label{eq:probaXneqY_mixed}
    \mathbb{P}(Y_{k} \neq \tilde{Y}_{k} | Y_{k-1})=2\Phi(\sigma^{-1}||m^p(\cdot | Y_{k-1}) - m^{\mix}_{k-1}(Y_{k-1})||/2)-1,
\end{equation}
The parameters, $\alpha_{k}(y_{n-1})$ can either be hyperparameters (constants) specified by the practitioner, or can be represented by a  mapping $\alpha_{k}^\ell(y_{k}; \theta)$ with parameters $\theta$. These parameters $\theta$ can be learned by minimizing the sum of average rejection probabilities $\mathbb{E}_{Y_{0:K}\sim q}\left[\sum_{k=1}^{K} \mathbb{P}(Y_{k} \neq \tilde{Y}_{k} | Y_{k-1})\right]$.

\paragraph{Parallel sampling and speculative correction.}
We now show here how one can combine Picard iterations from ParaDiGMS \citep{shih2023parallel} and speculative sampling by using the output of ParaDiGMS as a draft model. We start by recalling the Picard iterations from \citet{shih2023parallel}. Consider a draft sequence initialized with some deterministic transformation of $\tilde{Y}_n$, i.e. $\tilde{Y}_{n+1:n_L}^0=(F^0_{n+1}(\tilde{Y}_n),...,F^0_{n_L}(\tilde{Y}_n))$ where $n_L = \max(K, n+L)$. Then, we define the Picard iterations as 
\begin{equation}
    \tilde{Y}_k^m = \tilde{Y}_{k-1}^{m-1} + \gamma \bar{b}_{t_{k-1}}^q(\tilde{Y}_{k-1}^{m-1}) , \qquad \tilde{Y}_n^m = \tilde{Y}_n,
\end{equation}
where $k \in \{n+1, \dots, n_L\}$ and $m \in \{1, \dots, M-1\}$. Here we use Picard iterations for the \emph{deterministic} sampler, that is $\vareps = 0$ and $\bar{b}^q_{t}(x)=-f_{1-t}(x)+\tfrac{1}{2}g^2_{1-t}s_{1-t}(x)$, see Equation \eqref{eq:backward_diffusion}. Hence, for any $k \in \{n+1, \dots, n_L\}$ there exists a deterministic function $F_k^m$ such that 
\begin{equation}
    \tilde{Y}_k^m = F_k^m(\tilde{Y}_n) .
\end{equation}
Lastly, we consider a last Picard iteration
\begin{equation}
    \tilde{Y}_k^M = \tilde{Y}_{k-1}^{M-1} + \gamma b_{t_{k-1}}^q(\tilde{Y}_{k-1}^{M-1}) + \vareps \sqrt{\gamma} g_{t_{k-1}} Z_k ,
\end{equation}
where $Z_{k} \sim \mathcal{N}(0, \Id)$. 
Hence, we have 
\begin{equation}
    \tilde{Y}_k^M = F_{k-1}^{M-1}(\tilde{Y}_n) + \gamma b_{t_{k-1}}^q(F_{k-1}^{M-1}(\tilde{Y}_n)) + \vareps \sqrt{\gamma} \sigma_{t_{k-1}} Z_k .
\end{equation}
We consider the sequence $\tilde{Y}^M_{n+1:n_L}$ as a draft sequence. In that case we still have that $\tilde{Y}^M_{n+1:n_L} \sim p(y_{n+1:n_L}|y_n)$ with 
\begin{equation}\label{eq:draft_modelSM}
    p(y_{n+1:n_L}|y_n)=\prod_{k=n+1}^{n_L} p(y_{k}|y_{n:k-1})=\prod_{k=n+1}^{n_L} \mathcal{N}(y_{k};m_{k-1}^{p}(y_{n:k-1}), \sigma_{k-1}^2 \Id) ,
\end{equation}
as required where 
\begin{equation}
    m_k^p(y_{n:k-1}) = F_k^{M-1}(y_n) + \gamma b^q_{t_k}(F_k^{M-1}(y_n)) , \qquad \sigma_k = \vareps \sqrt{\gamma} g_{1-t_k} . 
\end{equation}

\textbf{Efficient frozen draft strategy.} We start by recalling the frozen draft strategy described in \Cref{sec:speculativesamplingDiffusions}. In that case, we consider here a very simple draft model where $p(y_{k}|y_{n:k-1}) =p(y_{k}|y_n,y_{k-1})$ with 
\begin{equation}
\label{eq:frozen_draft_model}
    m_{k}^{p}(y_{n:k})=y_{k} + \gamma \textcolor{customblue}{b^q_{t_n}(y_n)},~~ \sigma_{k}
    =\sqrt{\gamma} \vareps g_{1-t_k}.
\end{equation}
This draft model is similar to the target model, except that we replace $\textcolor{customred}{b^q_{t_k}(y_k)}$ by $\textcolor{customblue}{b^q_{t_n}(y_n)}$. Importantly, on a window of size $L$, we only need to query the target model \emph{once} in order to draw a draft sequence. In practice, a more efficient modification of this frozen draft strategy can be obtained if we replace $\textcolor{customblue}{b^q_{t_n}(y_n)}$ with $\textcolor{customblue}{b^q_{t_n}(\tilde{y}_n)}$. Note that if, in the previous window, all the samples have been accepted then $\textcolor{customblue}{b^q_{t_n}(y_n)}$ coincides with $\textcolor{customblue}{b^q_{t_n}(\tilde{y}_n)}$. Otherwise they do not. The main advantage of this procedure is that we can leverage the quantities computed on the previous window during the iterated speculative sampling procedure. Indeed, $\textcolor{customblue}{b^q_{t_n}(\tilde{y}_n)}$ is \emph{always} computed when doing speculative sampling on the previous window, in order to perform the verification stage, see \Cref{alg:SpeculativeSamplingDiffusions}. This drastically reduces the cost of the draft model since, we do not need to call \emph{any} model to compute the proposals, since $\textcolor{customblue}{b^q_{t_n}(\tilde{y}_n)}$ has already been computed at the previous verification stage. The only caveat to this method is that it requires to be initialized, i.e., we need to compute $\textcolor{customblue}{b^q_{t_n}(\tilde{y}_0)}$. In that case, we simply compute $\textcolor{customblue}{b^q_{t_n}(y_0)}$ (and therefore the cost of running the whole draft model is one function evaluation of the target model). Finally, another alternative strategy is to use $\textcolor{customblue}{b^q_{t_{n-1}}(y_{n-1})}$ in place of $\textcolor{customblue}{b^q_{t_n}(\tilde{y}_n)}$ when $\tilde{y}_n \neq y_n$.

\section{Detailed implementation of speculative sampling for diffusion models}\label{sec:detailedimplementationspeculativesamplingdiffusions}
We present in \Cref{alg:DetailedSpeculativeSamplingDiffusions} a detailed implementation of speculative sampling for diffusion models, an algorithm combining \Cref{alg:SpeculativeSamplingDiffusions} and \Cref{alg:ReflectionMaxCoupling}.

\begin{algorithm}
\caption{Speculative Sampling for DDM}\label{alg:DetailedSpeculativeSamplingDiffusions}
\begin{algorithmic}
\Require Lookahead integer $L$, sequence length $K$, target model $q$ (see eq. \eqref{eq:autoregressive_gaussian}) and draft model $p$ (see eq. \eqref{eq:draft_model}).
\State Sample $Y_0 \sim \mathcal{N}(0,\Id)$ and set $n=0$.
\While {$n<K$} 
\State Set $\tilde{Y}_{n}\leftarrow Y_n$ and $n_L = \min(n+L, K)$.

\For{$k=n+1:n_L$}
\State Sample $\tilde{Y}_k \sim  \mathcal{N}(m^p_{k-1}(\tilde{Y}_{n:k-1}),\sigma^2_{k-1}\Id)$.
\EndFor

\State In parallel, compute $m^q_{n}(\tilde{Y}_{n}),~m^q_{n+1}(\tilde{Y}_{n+1}),...,m^q_{n_L-1}(\tilde{Y}_{n_L-1})$.
\For{$k=n+1:n_L$}
\State Set $\Delta_{k-1}=(m^p_{k-1}(\tilde{Y}_{n:k-1})-m^q_{k-1}(\tilde{Y}_{k-1}))/\sigma_{k-1}$ and $e=\Delta_{k-1}/||\Delta_{k-1}||$.
\State Let $Z_{k-1}  = (\tilde{Y}_k - m^p_{k-1}(\tilde{Y}_{n:k-1})) / \sigma_{k-1}$. 
\State Sample $U \sim \textup{Unif}[0,1]$.
\State $\accept = \mathbb{I}[U \leq \min(1,\mathcal{N}(Z_{k-1}+\Delta_{k-1};0,\Id)/\mathcal{N}(Z_{k-1};0,\Id))]$.
\If{$\accept$}
    \State Set $Y_k = \tilde{Y}_k$.
\Else
    \State Set $Y_k =m^q_{k-1}(\tilde{Y}_{k-1}) + \sigma_{k-1} (\Id - 2 e  e^{\top})Z_{k-1}$.
\EndIf
\State \Return  ($Y_k$, $\accept$).
\If{$\texttt{not}(\accept)$}
\State Exit For Loop
\EndIf
\EndFor
\State Set $n\leftarrow k$.
\EndWhile
\State \Return $Y_K$
\end{algorithmic}
\end{algorithm}

\section{Distribution of time to rejection}\label{sec:timetorejection}
Consider the following process $(\tilde{Y}_k,Y_k)_{k\geq 0}$ following the following distribution
\begin{equation}
    \Gamma(\tilde{y}_{0:n},y_{0:n})=p(\tilde{y}_0)\delta_{\tilde{y}_0}(y_0) \prod_{k=1}^{n} p(\tilde{y}_k|y_{k-1})\Bigl(\alpha_k(y_{k-1},\tilde{y}_k) \delta_{\tilde{y}_k}(y_k)+(1-\alpha_k(y_{k-1},\tilde{y}_k))r(y_{k}|y_{k-1},\tilde{y}_k)\Bigr).
\end{equation}
where 
\begin{equation}
    \alpha_k(y_{k-1},\tilde{y}_k)=\min\Bigl(1,\frac{q(\tilde{y}_k|y_{k-1})}{p(\tilde{y}_k|y_{k-1})}\Bigr)
\end{equation}
and 
\begin{equation}
r(y_{k}|y_{k-1},\tilde{y}_k)=\delta_{f(y_{k-1},\tilde{y}_k)}(y_k)
\end{equation}
corresponds to reflection maximal coupling for an appropriate function $f$. This distribution describes the speculative sampling algorithm for diffusions (not describing the drafting process over an horizon of $L$ but this is irrelevant). By construction, we have $\int \Gamma(\tilde{y}_{0:n},y_{0:n})\rmd \tilde{y}_{0:n}=q(y_{0:n})$.
 Starting from $\tilde{Y}_0=Y_0$, we can look at the first time $\tau$, $\tau \geq 1$, the draft state is rejected. This is equivalent to look at the first time that $\tilde{Y}_k \neq Y_k$.
We have from direct calculations that 
\begin{equation}
\mathbb{P}(\tau > k)=\int \cdots \int p(\tilde{y}_{0:k}) \prod_{i=1}^k \alpha_i(\tilde{y}_{i-1},\tilde{y}_{i})\rmd \tilde{y}_{0:k}.
\end{equation}
Hence $\mathbb{P}(\tau > k)$ is given by a Feynman--Kac formula \cite{del2004feynman} so we could estimate numerically efficiently this quantity by running a particle filter \citep{del2004feynman,Doucet:2001}. A similar expression can be obtained for the distribution of $\tau_n$ the $n^{\textup{th}}$ time the draft state is rejected starting from the last time one rejected a draft, $p(\tilde{y}_0)$ being replaced by $p(\tilde{y}_{{\tau}_{n-1}+1}|y_{\tau_n})$. 

In the case where we have 
\begin{align}
    p(y_k| y_{k-1}) = \mathcal{N}(y_k;y_{k-1}+\gamma b^p(y_{k-1}), \gamma \Id),~~~
    q(y_k| y_{k-1}) = \mathcal{N}(y_k;y_{k-1}+\gamma b^q(y_{k-1}), \gamma \Id).
\end{align}
Under the simplifying assumption that $||b^p(x)-b^q(x||\geq M$, we have
\begin{align}
    \alpha_{\gamma,k}&= \mathbb{P}(\tilde{Y}_0=Y_0)\prod_{i=1}^k \mathbb{P}(\tilde{Y}_i=Y_i|\tilde{Y}_{i-1}=Y_{i-1})\\
    &= \prod_{i=1}^k 2\Phi \Bigl(-\sqrt{\gamma}||b^p(Y_{i-1})-b^q(Y_{i-1})||/2 \Bigr)\\
    &\leq (2 \Phi(-\sqrt{\gamma}M/2))^k.
\end{align}
As we have $2\Phi(-x)=1-\sqrt{\tfrac{2}{\pi}}x+o(x^3)$ so
\begin{equation}
   \lim_{\gamma\rightarrow 0} \alpha_{\gamma,1/\gamma}=0.
\end{equation}

\section{Maximal coupling between Gaussian distributions with different covariance matrices}
\label{sec:maximal_coupling_different_covariance}
\Cref{alg:ReflectionMaxCoupling} is restricted to Gaussian random variables admitting the same covariance matrix. This implies that the draft and the target samplers introduce the same amount of noise at each step. 
One can wonder if the two samplers could have instead different noise levels. In the following examples, we show that, even if the means are equal, then the probability of rejection is extremely high as the dimension increases.

Indeed, consider two centered $d-$dimensional normals $p(x) =\mathcal{N}(x;0,\sigma^2_1 \Id)$ and $q(x) = \mathcal{N}(x;0,\sigma^2_2 \Id)$. We assume that $\sigma_2 < \sigma_1$. In that case, we have that $p(x) \leq q(x)$ if and only if $\| x \|^2 \leq R^2$ with 
\begin{equation}
    R^2 = d \log(\sigma_1^2 / \sigma_2^2) (1/\sigma_2^2 - 1/\sigma_1^2)^{-1} . 
\end{equation}
Hence, in the ideal case where the acceptance probability $r$ is given by $\| p - q \|_{\mathrm{TV}}$,  we get that
\begin{equation}
   r  =1 -  \mathbb{E}[\mathbf{1}_{\sigma_1^2 Q \leq R^2}] -  \mathbb{E}[\mathbf{1}_{\sigma_2^2 Q \geq R^2}] = \mathbb{E}[\mathbf{1}_{Q \leq R^2/\sigma_2^2}] - \mathbb{E}[\mathbf{1}_{Q \leq R^2/\sigma_1^2}] .
\end{equation}
where $Q$ is a $\chi^2$ random variable with $d$ degrees of freedom. Unfortunately this probability is extremely close to $0$ as $d$ increases, see \Cref{fig:gaussian_maximal}. Therefore, the probability of coupling is very low in high dimensions.
\begin{figure}
    \centering
    \includegraphics[width=.5\linewidth]{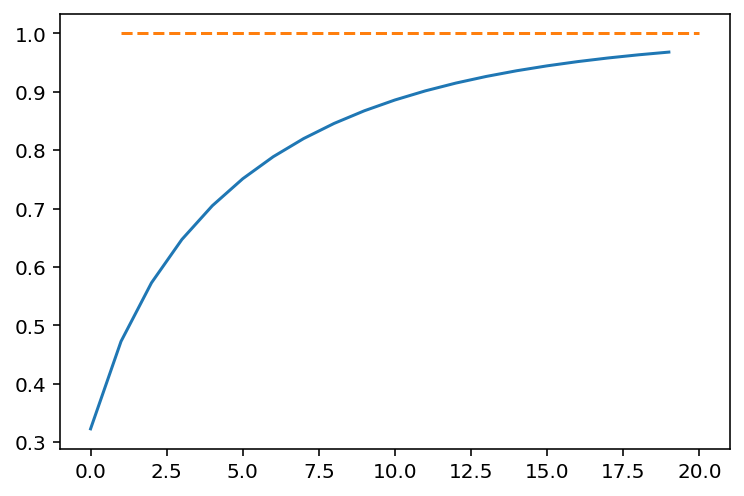}
    \caption{Evolution of the rejection probability ($y$-axis) with the dimension $d$ ($x$-axis) for $\sigma_1 = 0.2$ and $\sigma_2 = 0.1$}
    \label{fig:gaussian_maximal}
\end{figure}

\section{Alternative verification strategies}
\label{sec:acceptance_strategies}

In this section, we investigate different verification strategies for \Cref{alg:ReflectionMaxCoupling}. First, we introduce a temperature parameter in \Cref{sec:temperature}. Then, in \Cref{sec:typical_acceptance}, we adapt the \emph{typical acceptance} criterion of \citet{stern2018blockwise,cai2024medusa} to our setting. 

\subsection{Influence of the temperature}
\label{sec:temperature}

Consider \Cref{alg:TemperatureReflectionMaxCoupling}, which is a version of \Cref{alg:ReflectionMaxCoupling} including an additional temperature parameter $\tau>0$. 

\begin{algorithm}
\caption{(Temperature) \rejectionmechanism$(p,q,\tilde{Y})$ for two Gaussians with same covariance}\label{alg:TemperatureReflectionMaxCoupling}
\begin{algorithmic}
\Require Probability densities $p(x)=\mathcal{N}(x;m^p,\sigma^2 \Id),q(x)=\mathcal{N}(x;m^q,\sigma^2 \Id), \tilde{Y} \sim p$, temperature $\tau >0$.
\State Set $\Delta=(m^p-m^q)/\sigma$ and $e=\Delta/||\Delta||$.
\State Let $Z  = (\tilde{Y} - m^p) / \sigma$. 
\State Sample $U \sim \textup{Unif}[0,1]$.
\State $\accept = \mathbb{I}[U \leq \min(1,\mathcal{N}(Z+\Delta;0,\tau \Id)/\mathcal{N}(Z;0, \tau \Id))]$.
\If{$\accept$}
    \State Set $Y = \tilde{Y}$.
\Else
    \State Set $Y = m^q + \sigma (\Id - 2 e  e^{\top})Z$.
\EndIf
\State \Return  ($Y$, $\accept$).
\end{algorithmic}
\end{algorithm}

Setting $\tau = 1$, we recover \Cref{alg:ReflectionMaxCoupling} but for $\tau > 1$ we have a larger probability of accepting the current proposal $\tilde{Y}$.
This higher acceptance rate, of course, is not without its drawbacks since we are no longer sampling from the correct distribution. In what follows, we analyze how the distribution is shifted when tuning the temperature parameter $\tau$. To shorten notation, we use in the following proof the notation $\varphi_\tau(z)=\mathcal{N}(z;0,\tau \Id)$ and $ \varphi=\varphi_1 $ for $\tau=1$.
We recall that $\rmc_c(\rset^d)$ is the set of continuous functions with compact support. 
For any $f \in \rmc_c(\rset^d)$, using that for any $x \in \rset^d$, $\| x \| = \| \hat{x} \|$ for $\hat{x} = (\Id - 2 e e^\top) x$, and $x \mapsto (\Id - 2 e e^\top) x$ is an involution, we have
\begin{align}
    \mathbb{E}[f(Y)] &= \int_{\rset^d} f(m^p + \sigma z) \min(1, \varphi_\tau(z+ \Delta) / \varphi_\tau(z))  \varphi(z) \rmd z \\
    & \qquad + \int_{\rset^d} f(m^q + \sigma \hat{z}) (1 - \min(1, \varphi_\tau(z+ \Delta) / \varphi_\tau(z)))  \varphi(z) \rmd z , \\
&= \int_{\rset^d} f(m^p + \sigma z) \min(1, \varphi_\tau(z+ \Delta) / \varphi_\tau(z))  \varphi(z) \rmd z \\
    & \qquad + \int_{\rset^d} f(m^q + \sigma z) (1 - \min(1, \varphi_\tau(\hat{z}+ \Delta) / \varphi_\tau(z)))  \varphi(z) \rmd z , \\
&= \int_{\rset^d} f(m^p + \sigma z) \min(1, \varphi_\tau(z+ \Delta) / \varphi_\tau(z))  \varphi(z) \rmd z \\
    & \qquad - \int_{\rset^d} f(m^q + \sigma z) \min(1, \varphi_\tau(\hat{z}+ \Delta) / \varphi_\tau(z)))  \varphi(z) \rmd z , \\
    & \qquad + \int_{\rset^d} f(m^q + \sigma z) \varphi(z) \rmd z \\
&= \int_{\rset^d} f(m^q + \sigma z) \min(1, \varphi_\tau(z) / \varphi_\tau(z - \Delta))  \varphi(z - \Delta) \rmd z \\
    & \qquad - \int_{\rset^d} f(m^q + \sigma z) \min(1, \varphi_\tau(\hat{z}+ \Delta) / \varphi_\tau(z)))  \varphi(z) \rmd z , \\
    & \qquad + \int_{\rset^d} f(m^q + \sigma z) \varphi(z) \rmd z . 
\end{align}
We have that $\widehat{\hat{z} + \Delta} = z - \Delta$. Hence, we get that 
\begin{align}
    \mathbb{E}[f(Y)] &= \int_{\rset^d} f(m^q + \sigma z) \min(1, \varphi_\tau(z) / \varphi_\tau(z - \Delta))  \varphi(z - \Delta) \rmd z \\
    & \qquad - \int_{\rset^d} f(m^q + \sigma z) \min(1, \varphi_\tau(\hat{z}+ \Delta) / \varphi_\tau(z)))  \varphi(z) \rmd z , \\
    & \qquad + \int_{\rset^d} f(m^q + \sigma z) \varphi(z) \rmd z \\
    &= \int_{\rset^d} f(m^q + \sigma z) \min(\varphi(z - \Delta) / \varphi(z), (\varphi_\tau(z) \varphi(z - \Delta)) / (\varphi_\tau(z - \Delta) \varphi(z)) )  \varphi(z) \rmd z \\
    & \qquad - \int_{\rset^d} f(m^q + \sigma z) \min(1, \varphi_\tau(\hat{z}+ \Delta) / \varphi_\tau(z)))  \varphi(z) \rmd z , \\
    & \qquad + \int_{\rset^d} f(m^q + \sigma z) \varphi(z) \rmd z \\
&= \int_{\rset^d} f(m^q + \sigma z) \min(\varphi(z - \Delta) / \varphi(z), (\varphi_\tau(z) \varphi(z - \Delta)) / (\varphi_\tau(z - \Delta) \varphi(z)) )  \varphi(z) \rmd z \\
    & \qquad - \int_{\rset^d} f(m^q + \sigma z) \min(1, \varphi_\tau(z - \Delta) / \varphi_\tau(z)))  \varphi(z) \rmd z , \\
    & \qquad + \int_{\rset^d} f(m^q + \sigma z) \varphi(z) \rmd z .
\end{align}
Hence, we get that 
\begin{align}
\label{eq:law_output_temperature}
    \mathbb{E}[f(Y)] &= \int_{\rset^d} f(m^q + \sigma z) (1 + a_\tau(z)) \varphi(z) \rmd z ,
\end{align}
where 
\begin{equation}
    a_\tau(z) = - \min(1, \varphi_\tau(z - \Delta) / \varphi_\tau(z))) +  \min((\varphi_\tau(z) \varphi(z - \Delta)) / (\varphi_\tau(z - \Delta) \varphi(z)), \varphi(z - \Delta) / \varphi(z)) .
\end{equation}
Note that for $\tau = 1$ we have that $a_\tau(z) = 0$. If we let $\tau \to + \infty$ then we get that $a_\tau(z) = - 1 + \varphi(z - \Delta) / \varphi(z)$ so $Y \sim \mathcal{N}(m^p, \sigma^2 \Id)$ from \eqref{eq:law_output_temperature}, i.e., we always accept the draft model. In \Cref{fig:effect_temperature}, we show the effect of the temperature on the output distribution. The influence of the temperature in more realistic settings is studied in \Cref{sec:experimental_details}.

\paragraph{Link with guidance.} We first give the following result and subsequently explain its connections with \citep{karras2024guiding}. 
\begin{proposition}{Link with guidance}{link_with_guidance}
Let $(\tilde{Y},Y)$ be the output of \Cref{alg:TemperatureReflectionMaxCoupling}. We have that 
\begin{equation}
    \mathbb{E}[Y] = m^q + (1/\sigma) C_\tau(\|\Delta\|) (m^p - m^q) ,
\end{equation}
with $\Delta = (m^p - m^q)/\sigma$ and $C_\tau(\|\Delta\|) \leq 0$ if $\tau \leq 1$ and $C_\tau(\|\Delta\|) \geq 0$ otherwise. In particular, we have that $C_\tau(\|\Delta\|) = 0$ if $\tau = 1$. In addition, $C_\tau(\|\Delta\|)$ is explicit in the proof.
\end{proposition}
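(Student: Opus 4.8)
The plan is to compute $\mathbb{E}[Y]$ by specializing the output-law identity \eqref{eq:law_output_temperature} already established for \Cref{alg:TemperatureReflectionMaxCoupling}, and then to isolate the scalar multiplying $m^p-m^q$. Taking $f$ to be each coordinate map in \eqref{eq:law_output_temperature} gives $\mathbb{E}[Y]=\int_{\rset^d}(m^q+\sigma z)(1+a_\tau(z))\varphi(z)\rmd z$. Since the choice $f\equiv 1$ in \eqref{eq:law_output_temperature} forces $\int_{\rset^d}a_\tau(z)\varphi(z)\rmd z=0$, and $\int z\varphi(z)\rmd z=0$, the $m^q$-terms collapse and I obtain $\mathbb{E}[Y]=m^q+\sigma\int_{\rset^d} z\,a_\tau(z)\varphi(z)\rmd z$. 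The first key observation is that $a_\tau(z)$ depends on $z$ only through $u:=\langle e,z\rangle$ with $e=\Delta/\|\Delta\|$, because every ratio entering $a_\tau$ is a function of $\langle z,\Delta\rangle=\|\Delta\|u$ alone. By the rotational invariance of $\varphi$ about the axis $e$, the component of $z$ orthogonal to $e$ integrates to zero, so the correction is aligned with $e$: $\mathbb{E}[Y]=m^q+\sigma e\,I(\tau)$ with $I(\tau)=\int_{\rset}u\,a_\tau(u)\varphi(u)\rmd u$ a one-dimensional Gaussian integral. Using $\sigma e=(m^p-m^q)/\|\Delta\|$ gives $\mathbb{E}[Y]=m^q+(I(\tau)/\|\Delta\|)(m^p-m^q)$, which matches the claimed form upon setting $C_\tau(\|\Delta\|)=\sigma I(\tau)/\|\Delta\|$; the prefactor $\sigma/\|\Delta\|$ is positive, so the sign of $C_\tau$ is that of $I(\tau)$, and $C_1=0$ since $a_1\equiv 0$.

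It then remains to evaluate $I(\tau)$ and sign it. Writing $R=\|\Delta\|$, $\beta=1/\tau$, and $g(u)=\varphi(z-\Delta)/\varphi(z)=\exp(Ru-R^2/2)$, a short case split according to whether $u\ge R/2$ or $u<R/2$ (i.e.\ $g\ge 1$ or $g<1$) collapses the two minima in $a_\tau$ to the piecewise form $a_\tau(u)=g^{1-\beta}-1$ for $u\ge R/2$ and $a_\tau(u)=g-g^{\beta}$ for $u<R/2$. Each resulting integral is a Gaussian integral of $u\,g(u)^c$ against $\varphi$, which I evaluate by completing the square via $g(u)^c\varphi(u)=\mathrm{e}^{cR^2(c-1)/2}\varphi(u-cR)$ together with $\int_a^\infty u\varphi(u)\rmd u=\varphi(a)$. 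After collecting terms this yields the explicit closed form $I(\tau)=\psi(\beta)-\psi(1)$, where
\begin{equation*}
\psi(\beta)=\mathrm{e}^{-\beta(1-\beta)R^2/2}\bigl[2\varphi(R(\tfrac12-\beta))+R(1-2\beta)\Phi(R(\tfrac12-\beta))\bigr],
\end{equation*}
so that $\psi(\beta)-\psi(1)$ (up to the positive prefactor) is the explicit expression for $C_\tau$ promised in the statement.

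The heart of the argument, and the step I expect to be hardest, is showing that $I(\tau)$ has the claimed sign, equivalently that $\psi$ is strictly decreasing in $\beta$ (so $\beta=1/\tau<1$, i.e.\ $\tau>1$, gives $I=\psi(\beta)-\psi(1)>0$, and symmetrically $\tau<1$ gives $I<0$). A pointwise sign of the integrand of $I$ does not suffice, since it changes sign, so I differentiate the closed form. Setting $h=R(\tfrac12-\beta)$ and using $\varphi'(x)=-x\varphi(x)$ there is a clean cancellation: the bracket $B(\beta)=2\varphi(h)+R(1-2\beta)\Phi(h)$ satisfies $B'(\beta)=-2R\Phi(h)$, the $\varphi$-terms cancelling because $2Rh=R^2(1-2\beta)$. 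This reduces $\psi'(\beta)$, after the substitutions $t=2\beta-1$ and $x=Rt/2$, to a strictly positive factor times $\rho(x):=x\varphi(x)-(x^2+1)\Phi(-x)$.

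Thus the whole claim comes down to showing $\rho(x)<0$ for every $x\in\rset$. I establish this by noting $\rho'(x)=2\bigl(\varphi(x)-x\Phi(-x)\bigr)$, observing that $x\mapsto\varphi(x)-x\Phi(-x)$ has derivative $-\Phi(-x)<0$ and tends to $0$, hence is positive, so that $\rho$ is strictly increasing; since a Mills-ratio expansion gives $\rho(x)\to 0^-$ as $x\to+\infty$, monotonicity forces $\rho<0$ everywhere. This yields $\psi'(\beta)<0$ for all $\beta$, hence $\psi$ strictly decreasing, which delivers the sign of $I(\tau)$ and therefore of $C_\tau$, completing the proof.
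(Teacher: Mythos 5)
Your proof is correct, and its first half mirrors the paper's: both start from the output-law identity \eqref{eq:law_output_temperature}, kill the $m^q$-correction via $\int a_\tau(z)\varphi(z)\rmd z=0$ (your $f\equiv 1$ normalization shortcut is cleaner than the paper's change-of-variables verification, modulo a one-line extension of \eqref{eq:law_output_temperature} beyond compactly supported $f$), and use the fact that $a_\tau$ depends on $z$ only through $\langle e,z\rangle$ to align the correction with $\Delta$. From there the routes diverge, and yours is both different and stronger. The paper evaluates the reduced one-dimensional integral after rewriting $\int z\,a_\tau\varphi = -2\int z M + \Delta\int M$ with $M(z)=\min(\varphi_\tau(z-\Delta)\varphi(z)/\varphi_\tau(z),\varphi(z))$, while you case-split $a_\tau$ directly into $g^{1-\beta}-1$ on $\{u\geq R/2\}$ and $g-g^\beta$ on $\{u<R/2\}$ and integrate each piece by completing the square. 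Your closed form is the correct one: the paper's own evaluation contains two slips --- it integrates the two branches of $M$ on the wrong sides of $u=\|\Delta\|/2$, contradicting its correct case determination one line earlier, and its factor $\exp[\|\Delta\|^2/\tau^2(1-\tau)]$ is missing a $1/2$ in the exponent. Re-evaluating the paper's reduction with the correct split reproduces exactly your $\psi(\beta)-\psi(1)$, and your formula passes the $\tau\to\infty$ (i.e.\ $\beta\to 0$) sanity check: $I=\psi(0)-\psi(1)=R(\Phi(R/2)+\Phi(-R/2))=R$, so $\mathbb{E}[Y]\to m^p$ as it must when every draft is accepted, whereas the paper's printed formula gives $R(2\Phi(R/2)-1)\neq R$ in that limit. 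More importantly, the paper never proves the sign claim --- it writes only ``it can be checked'' --- whereas you do: the cancellation $B'(\beta)=-2R\Phi(h)$, the reduction of $\psi'(\beta)$ to a positive multiple of $\rho(x)=x\varphi(x)-(x^2+1)\Phi(-x)$, and the monotonicity argument ($\rho'=2(\varphi(x)-x\Phi(-x))>0$ since $\varphi(x)-x\Phi(-x)$ decreases to $0$, together with $\rho(x)\to 0^-$ from the Mills expansion) are all verified; for $x\leq 0$ the bound $\rho<0$ is even trivial, and for $x>0$ it is the classical lower Mills bound $\Phi(-x)>x\varphi(x)/(x^2+1)$. Your choice $C_\tau(\|\Delta\|)=\sigma I(\tau)/\|\Delta\|$ is also the dimensionally consistent reading of the statement's $(1/\sigma)C_\tau$ prefactor, which the paper's final expression silently drops. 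The only remaining caveat is cosmetic: the degenerate case $\Delta=0$ should be set aside explicitly, since $e$ is then undefined but $X=Y$ always and the claim is trivial.
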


We can interpret this result as follows. For $\tau =1$, we recover that the mean of $Y$ is the mean of the target as expected as we have a maximal coupling in this case so $Y$ follows the correct target distribution. For $\tau > 1$, we increase the acceptance probability: this has intuitively the effect of moving the distribution of $Y$ towards the distribution of $\tilde{Y}$. Looking at the mean, we can interpret this effect as a guidance effect, where we push towards $m^p$ and away for $m^q$. For $\tau < 1$, we are pushing towards the target distribution even more than with $\tau = 1$. Looking at the mean of $Y$, we can interpret this effect as a guidance term, i.e., pushing away from the draft model and towards the target model. This last setting is similar to \citep{karras2024guiding}, which consider an explicit guidance of a ``good'' model with a ``bad'' model. 

\begin{proof}
Using \eqref{eq:law_output_temperature}, we have that 
\begin{align}
    \mathbb{E}[Y] &= \int_{\rset^d} (m^q + \sigma z) (1 + a_\tau(z)) \varphi(z) \rmd z \\
    &= m^q + m^q \int_{\rset^d} a_\tau(z)\varphi(z) \rmd z + \sigma \int_{\rset^d} z a_\tau(z)\varphi(z) \rmd z .
\end{align}
First, we show that $\int_{\rset^d} a_\tau(z) \rmd z = 0$.
Indeed, using the change of variable $z \mapsto -z$ and $z \mapsto z-\Delta$ we get  
\begin{align}
    \int_{\rset^d} a_\tau(z)\varphi(z) \rmd z &= \int_{\rset^d} \min((\varphi_\tau(z) \varphi(z - \Delta)) / (\varphi_\tau(z - \Delta) \varphi(z)), \varphi(z - \Delta) / \varphi(z)) \varphi(z) \rmd z \\
    & \qquad - \int_{\rset^d}\min(1, \varphi_\tau(z - \Delta) / \varphi_\tau(z))) \varphi(z) \rmd z \\
    &= \int_{\rset^d} \min((\varphi_\tau(z) \varphi(z + \Delta)) / (\varphi_\tau(z + \Delta) \varphi(z)), \varphi(z + \Delta) / \varphi(z)) \varphi(z) \rmd z \\
    & \qquad - \int_{\rset^d}\min(1, \varphi_\tau(z - \Delta) / \varphi_\tau(z))) \varphi(z) \rmd z \\
    &= \int_{\rset^d} \min((\varphi_\tau(z-\Delta) \varphi(z)) / (\varphi_\tau(z) \varphi(z-\Delta)), \varphi(z) / \varphi(z-\Delta)) \varphi(z-\Delta) \rmd z \\
    & \qquad - \int_{\rset^d}\min(1, \varphi_\tau(z - \Delta) / \varphi_\tau(z))) \varphi(z) \rmd z \\
&= \int_{\rset^d} \min((\varphi_\tau(z-\Delta) \varphi(z)) / \varphi_\tau(z), \varphi(z))  \rmd z \\
    & \qquad - \int_{\rset^d}\min(\varphi(z), \varphi(z) \varphi_\tau(z - \Delta) / \varphi_\tau(z))) \rmd z  = 0 .
\end{align}
Hence, we have that 
\begin{align}
    \mathbb{E}[Y] &= \int_{\rset^d} (m^q + \sigma z) (1 + a_\tau(z)) \varphi(z) \rmd z = m^q + \sigma \int_{\rset^d} z a_\tau(z)\varphi(z) \rmd z .
\end{align}
We are going to show that 
\begin{equation}
    \int_{\rset^d} \langle z, e \rangle  a_\tau(z)\varphi(z) \rmd z = 0 ,
\end{equation}
where we recall that $e = \Delta / \| \Delta \|$. For any $z \in \rset^d$, we have that $z = z_e e + \sum_{i=1}^{d-1} z_{e_i} e_i$, where $z_e = \langle z, e \rangle$ and $z_{e_i} = \langle z, e_i \rangle$ with $\{e, e_i\}_{i=1}^{d-1}$ an orthonormal basis. Note in particular that for any $i \in \{1, \dots, d-1\}$, $\langle e_i, \Delta \rangle = 0$. We have that 
\begin{align}
    \int_{\rset^d} z a_\tau(z) \varphi(z)\rmd z &= \int_{\rset^d} z \min((\varphi_\tau(z) \varphi(z - \Delta)) / (\varphi_\tau(z - \Delta) \varphi(z)), \varphi(z - \Delta) / \varphi(z)) \varphi(z) \rmd z \\
    & \qquad - \int_{\rset^d} z \min(1, \varphi_\tau(z - \Delta) / \varphi_\tau(z))) \varphi(z) \rmd z \\
     &= \int_{\rset^d} z \min(\varphi_\tau(z) \varphi(z - \Delta) / \varphi_\tau(z - \Delta), \varphi(z - \Delta) )  \rmd z \\
    & \qquad - \int_{\rset^d} z \min(\varphi(z), \varphi_\tau(z - \Delta) \varphi(z) / \varphi_\tau(z)) \rmd z \\
     &= - \int_{\rset^d} (z-\Delta) \min(\varphi_\tau(z-\Delta) \varphi(z) / \varphi_\tau(z), \varphi(z)  \rmd z \\
    & \qquad - \int_{\rset^d} z \min(\varphi(z), \varphi_\tau(z - \Delta) \varphi(z) / \varphi_\tau(z) ) \rmd z \\
    &= -2 \int_{\rset^d} z \min(\varphi_\tau(z-\Delta) \varphi(z) / \varphi_\tau(z), \varphi(z))  \rmd z \\
    &\qquad + \Delta \int_{\rset^d} \min(\varphi_\tau(z-\Delta) \varphi(z) / \varphi_\tau(z), \varphi(z))  \rmd z . \label{eq:simplification_of_term}
\end{align}

Next, we look at $z \mapsto \min(\varphi_\tau(z-\Delta) \varphi(z) / \varphi_\tau(z), \varphi(z))$. For any $z$, let $z_{e^\perp} = z - z_e e$. Note that $\langle z_{e^\perp}, \Delta \rangle = 0$. We have that 
\begin{align}
    &\min(\varphi_\tau(z-\Delta) \varphi(z) / \varphi_\tau(z), \varphi(z)) \\
    &\qquad = \min(\varphi_\tau(z_e- \|\Delta\|) \varphi_\tau(z_{e^\perp}) \varphi(z_e) \varphi(z_{e^\perp}) / (\varphi_\tau(z_e) \varphi_\tau(z_{e^\perp})) , \varphi(z_e) \varphi(z_{e^\perp})) \\
    &\qquad = \min(\varphi_\tau(z_e- \|\Delta\|)  \varphi(z_e) \varphi(z_{e^\perp}) /\varphi_\tau(z_e)  , \varphi(z_e) \varphi(z_{e^\perp})) \\
&\qquad = \varphi(z_{e^\perp}) \min(\varphi_\tau(z_e- \|\Delta\|)  \varphi(z_e)  /\varphi_\tau(z_e)  , \varphi(z_e)) .
\end{align}
Using this result, \eqref{eq:simplification_of_term} and the fact that for any $i \in \{1, \dots, d-1\}$, $\langle e_i, \Delta \rangle = 0$, we get 
\begin{equation}
   \left\langle \int_{\rset^d} z a_\tau(z)\varphi(z) \rmd z, e_i \right\rangle = 0 .
\end{equation}
Therefore, we get that 
\begin{equation}
    \mathbb{E}[Y] = m^q + (1/\sigma) C_\tau(\|\Delta\|) (m^p - m^q) .
\end{equation}
In the rest of the proof, we give an explicit expression for the parameter $C_\tau(\Delta)$. 
We first find $z$ such that $\varphi_\tau(z_e- \|\Delta\|)  \varphi(z_e)  /\varphi_\tau(z_e) \leq \varphi(z_e)$, i.e., we find $z$ such that $\log(\varphi_\tau(z_e- \|\Delta\|)) \leq \log(\varphi_\tau(z_e))$, i.e., $z_e \leq \| \Delta \|/2$. 
In particular, we have that 
\begin{align}
    &\int_{\rset} \min(\varphi_\tau(z_e- \|\Delta\|)  \varphi(z_e)  /\varphi_\tau(z_e)  , \varphi(z_e)) \rmd z_e \\
    &\qquad = \int_{-\infty}^{\| \Delta \|/2} \varphi(z_e) \rmd z_e + \int_{\| \Delta \|/2}^{+\infty} \varphi_\tau(z_e- \|\Delta\|)  \varphi(z_e)  /\varphi_\tau(z_e) \rmd z_e \\
    &\qquad = \Phi(\| \Delta \| / 2) + \int_{\| \Delta \|/2}^{+\infty} \varphi_\tau(z_e- \|\Delta\|)  \varphi(z_e)  /\varphi_\tau(z_e) \rmd z_e .
\end{align}
In addition, we have that 
\begin{equation}
    \varphi_\tau(z_e- \|\Delta\|)  \varphi(z_e)  /\varphi_\tau(z_e) = \varphi(z_e - \| \Delta \|/\tau) \exp[\|\Delta \|^2/\tau^2 (1 - \tau)] .
\end{equation}
Therefore, we get that 
\begin{align}
    &\int_{\rset} \min(\varphi_\tau(z_e- \|\Delta\|)  \varphi(z_e)  /\varphi_\tau(z_e)  , \varphi(z_e)) \rmd z_e \\
    &\qquad = \Phi(\| \Delta \| / 2) + \exp[\|\Delta \|^2/\tau^2 (1 - \tau)](1 - \Phi(\| \Delta \| (\tfrac{1}{2} - \tfrac{1}{\tau}))) \\
    &\qquad =\Phi(\| \Delta \| / 2) + \exp[\|\Delta \|^2/\tau^2 (1 - \tau)] \Phi(\| \Delta \| (\tfrac{1}{\tau} - \tfrac{1}{2})) \label{eq:first_equation_for_control} . 
\end{align}
Similarly, we have that 
\begin{align}
    &\int_{\rset} z_e \min(\varphi_\tau(z_e- \|\Delta\|)  \varphi(z_e)  /\varphi_\tau(z_e)  , \varphi(z_e)) \rmd z_e \\
    &\qquad = \int_{-\infty}^{\| \Delta \|/2} z_e \varphi(z_e) \rmd z_e + \int_{\| \Delta \|/2}^{+\infty} z_e \varphi_\tau(z_e- \|\Delta\|)  \varphi(z_e)  /\varphi_\tau(z_e) \rmd z_e \\
    &\qquad = \int_{-\infty}^{\| \Delta \|/2} z_e \varphi(z_e) \rmd z_e + \exp[\|\Delta \|^2/\tau^2 (1 - \tau)] \int_{\| \Delta \|/2}^{+\infty} z_e \varphi(z_e - \| \Delta \|/\tau)  \rmd z_e \\
&\qquad = \int_{-\infty}^{\| \Delta \|/2} z_e \varphi(z_e) \rmd z_e + \exp[\|\Delta \|^2/\tau^2 (1 - \tau)] \int_{\| \Delta \|/2}^{+\infty} (z_e - \Delta) \varphi(z_e - \| \Delta \|/\tau)  \rmd z_e \\
&\qquad \qquad + \|\Delta\| \int_{\| \Delta \|/2}^{+\infty} \exp[\|\Delta \|^2/\tau^2 (1 - \tau)] \varphi(z_e - \| \Delta \|/\tau)  \rmd z_e \\
&\qquad =   \varphi(\| \Delta \|(\tfrac{1}{2} - \tfrac{1}{\tau})) - \varphi(\| \Delta \|/2) +  \|\Delta\| \int_{\| \Delta \|/2}^{+\infty} \exp[\|\Delta \|^2/\tau^2 (1 - \tau)] \varphi(z_e - \| \Delta \|/\tau)  \rmd z_e \\
&\qquad =   \varphi(\| \Delta \|(\tfrac{1}{\tau} - \tfrac{1}{2})) - \varphi(\| \Delta \|/2) +  \|\Delta\| \exp[\|\Delta \|^2/\tau^2 (1 - \tau)] \Phi(\| \Delta \| (\tfrac{1}{\tau} - \tfrac{1}{2})) .  \label{eq:second_equation_for_control}
\end{align}
Combining \eqref{eq:first_equation_for_control}, \eqref{eq:second_equation_for_control} and \eqref{eq:simplification_of_term}, we get that 
\begin{align}
    \left\langle e, \int_{\rset^d} z a_\tau(z) \rmd z \right\rangle &= -2\varphi(\| \Delta \|(\tfrac{1}{\tau} - \tfrac{1}{2})) + 2\varphi(\| \Delta \|/2) \\
    & \qquad - 2 \exp[\|\Delta \|^2/\tau^2 (1 - \tau)] \|\Delta\| \Phi(\| \Delta \| (\tfrac{1}{\tau} - \tfrac{1}{2})) + \| \Delta \| \Phi(\| \Delta \| / 2) \\
    &\qquad + \| \Delta \| \exp[\|\Delta \|^2/\tau^2 (1 - \tau)] \Phi(\| \Delta \| (\tfrac{1}{\tau} - \tfrac{1}{2})) \\
    &= -2\varphi(\| \Delta \|(\tfrac{1}{\tau} - \tfrac{1}{2})) + 2\varphi(\| \Delta \|/2) \\
    &\qquad + \| \Delta \| \Phi(\| \Delta \| / 2) - \| \Delta \| \exp[\|\Delta \|^2/\tau^2 (1 - \tau)] \Phi(\| \Delta \| (\tfrac{1}{\tau} - \tfrac{1}{2})) . 
\end{align}
 Therefore, we have 
\begin{align}
    C_\tau(\|\Delta\|) &= -\tfrac{2}{\| \Delta \|}\varphi(\| \Delta \|(\tfrac{1}{\tau} - \tfrac{1}{2})) + \tfrac{2}{\|\Delta \|}\varphi(\| \Delta \|/2) \\
    &\qquad +  \Phi(\| \Delta \| / 2) -  \exp[\|\Delta \|^2/\tau^2 (1 - \tau)] \Phi(\| \Delta \| (\tfrac{1}{\tau} - \tfrac{1}{2})) . 
\end{align}
It can be checked that 
$C_\tau(\|\Delta\|) \leq 0$ if $\tau \leq 1$ and $C_\tau(\|\Delta\|) \geq 0$ otherwise. In particular, we have that $C_\tau(\|\Delta\|) = 0$ if $\tau = 1$.
\end{proof}

\subsection{Typical acceptance in the Gaussian case}
\label{sec:typical_acceptance}
We adapt here the \emph{typical acceptance} criterion introduced in \citep{cai2024medusa} to our setting; i.e.  we consider the following acceptance ratio
\begin{equation}
\label{eq:typical_acceptance}
    a(x) = \min(1, \max(q(x)/\kappa, q(x)\exp[\Ent(q)]/\delta)).
\end{equation}
where $\Ent(q)$ is the differential entropy of $q$, i.e., $\Ent(q) = -\int_{\rset^d} q(x) \log q(x) \rmd x$. The hyperparameters $\kappa, \delta >0$ are assumed to be fixed. We recall that $q(x) = \mathcal{N}(x ; m^q, \sigma^2 \Id)$. In that case we have that 
\begin{equation}
    \Ent(q) = (d/2) (1 + \log(2 \uppi) + \sigma^2) .
\end{equation}
Now, if we replace the acceptance criterion in \Cref{alg:ReflectionMaxCoupling} with \eqref{eq:typical_acceptance} and, if the sample is rejected, apply a deterministic orthogonal transformation $z \mapsto \hat{z}$ to the Gaussian noise to obtain $Y$, we get that for any $f \in \rmc_c(\rset^d)$ 
\begin{equation}
    \mathbb{E}[f(Y)] = \int_{\rset^d} [a(m^p + \sigma z) f(m^p + \sigma z)+(1 - a(m^p + \sigma z)) f(m^q + \sigma \hat{z})] \mathcal{N}(z;0,\Id) \rmd z
\end{equation}

Hence, we get 
\begin{align}
    &\mathbb{E}[f(Y)] = \int_{\rset^d} [a(m^p + \sigma z) f(m^p + \sigma z)+ (1 - a(m^p + \sigma z)) f(m^q + \sigma \hat{z})]\mathcal{N}(z;0,\Id) \rmd z \\
    & \quad = \int_{\rset^d} [a(m^p + \sigma z) f(m^p + \sigma z)+ (1 - a(m^p + \sigma \hat{z})) f(m^q + \sigma z)]\mathcal{N}(z;0,\Id) \rmd z \\
    & \quad = \int_{\rset^d} \left[\frac{\mathcal{N}(z - \Delta;0,\Id)}{\mathcal{N}(z;0,\Id)} a(m^q + \sigma z) f(m^q + \sigma z)+ (1 - a(m^p + \sigma \hat{z})) f(m^q + \sigma z) \right]\mathcal{N}(z;0,\Id) \rmd z \\
& \quad = \int_{\rset^d} \left[\frac{\mathcal{N}(z - \Delta;0,\Id)}{\mathcal{N}(z;0,\Id)} a(m^q + \sigma z) + (1 - a(m^p + \sigma \hat{z})) \right] f(m^q + \sigma z)\mathcal{N}(z;0,\Id) \rmd z
\end{align}
Therefore, we have 
\begin{align}
    \mathbb{E}[f(Y)] &= \int_{\rset^d} f(m^q + \sigma z) \left(1 + \frac{\mathcal{N}(z - \Delta;0,\Id)}{\mathcal{N}(z;0,\Id)} a(m^q + \sigma z) - a(m^p + \sigma \hat{z})\right) \mathcal{N}(z;0,\Id) \rmd z .
\end{align}

\begin{figure}
    \centering
    \includegraphics[width=.7\linewidth]{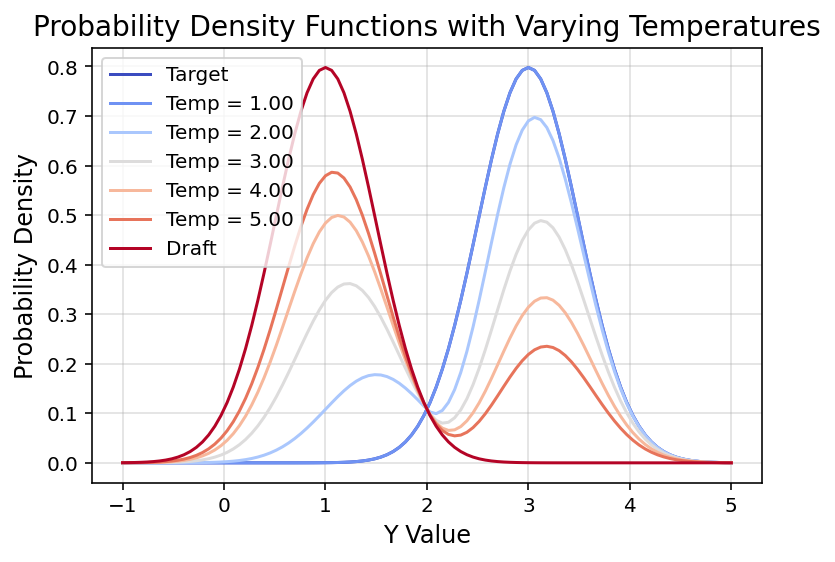}
    \caption{Effect of the temperature on the distribution of $Y$. Draft model has mean $1.0$ and standard deviation $0.5$. Target model has mean $3.0$ and standard deviation $0.5$.}
    \label{fig:effect_temperature}
\end{figure}

\section{Projection and extension to operators}
\label{sec:projection_operator}
In this section, we show that we can introduce an acceptance criterion so that two random variables are maximally coupled in a latent space. This relaxes the criterion introduced in \Cref{alg:ReflectionMaxCoupling}. In particular, it is possible to reach higher acceptance rate than with \Cref{alg:ReflectionMaxCoupling}. Of course, there is a price to pay for this increased flexibility as the variable $Y$ does not follow the target distribution $q$ anymore. 

To start with, consider a linear operator $\rmA \in \rset^{d \times d}$ such that $\rmA \rmA^\top$ is invertible, i.e., $\rmA$ is surjective.
In \Cref{alg:ReflectionMaxCouplingProjection}, we show how to maximally couple two $d-$dimensional densities of the form $\mathcal{N}(x;\rmA m^p, \sigma^2 \rmA \rmA^\top)$ and  $\mathcal{N}(x;\rmA m^q, \sigma^2 \rmA \rmA^\top )$.

\begin{algorithm}
\caption{\rejectionmechanism$(p_\rmA,q_\rmA,\tilde{Y}_\rmA)$ for two Gaussians with same (full) covariance}\label{alg:ReflectionMaxCouplingProjection}
\begin{algorithmic}
\Require matrix $\rmA$, $p_\rmA(x)=\mathcal{N}(x;\rmA m^p,\sigma^2 \rmA \rmA^\top \Id),q_\rmA(x)=\mathcal{N}(x;\rmA m^q,\sigma^2 \rmA \rmA^\top \Id), \tilde{Y}_\rmA \sim p_\rmA$. 
\State Set $\Delta_\rmA=(\rmA \rmA^\top)^{-1/2} \rmA (m^p-m^q)/\sigma$ and $e_\rmA=\Delta_\rmA/||\Delta_\rmA||$.
\State Let $Z_\rmA  =(\rmA \rmA^\top)^{-1/2} (\tilde{Y}_\rmA - \rmA m^p)/\sigma$. 
\State Sample $U \sim \textup{Unif}[0,1]$.
\State $\accept = \mathbb{I}[U \leq \min(1,\mathcal{N}(Z_\rmA +\Delta_\rmA;0,\Id)/\mathcal{N}(Z_\rmA;0,\Id))]$
\If{$\accept$}
    \State Set $Y_\rmA = \tilde{Y}_\rmA$.
\Else
    \State Set $Y_\rmA = \rmA m^q + \sigma (\rmA \rmA^\top)^{1/2} (\Id - 2 e_\rmA  e_\rmA^{\top})Z_\rmA$.
\EndIf
\State \Return  ($Y_\rmA$, $\accept$).
\end{algorithmic}
\end{algorithm}

\Cref{alg:ReflectionMaxCouplingProjection} operates directly in the ``latent'' space, i.e., it provides a maximal coupling $(\tilde{Y}_\rmA, Y_\rmA)$ where $\tilde{Y}_\rmA \sim \mathcal{N}(\rmA m^p,\sigma^2 \rmA \rmA^\top \Id)$ and $Y_\rmA  \sim \mathcal{N}(\rmA m^q,\sigma^2 \rmA \rmA^\top \Id)$. 

We now present \Cref{alg:ReflectionMaxCouplingProjectionOriginalSpace}, which is a non-trivial rewriting of \Cref{alg:ReflectionMaxCouplingProjection} operating on the original $(\tilde{Y},Y)$ and thus induces maximally coupled $(\tilde{Y}_\rmA, Y_\rmA)$. In what follows, we denote $\rmA^\dagger$ the Moore-Penrose inverse of $\rmA$ defined by
\begin{equation}
    \rmA^\dagger = \rmA^\top (\rmA \rmA^\top)^{-1} . 
\end{equation}
The validity of \Cref{alg:ReflectionMaxCouplingProjectionOriginalSpace} is based on the following lemma.
\begin{lemma}{Latent reflection}{latent_reflection}
Let $\rmA \rmA^\top$ be invertible. Let $\tilde{Y} = m^p + \sigma Z$ with $Z \sim \mathcal{N}(0, \Id)$. 
Let $Z_\rmA = (\rmA \rmA^\top)^{-1/2} \rmA Z$.
Let $e_\rmA = \Delta_\rmA / \|\Delta_\rmA \|$ where $\Delta_\rmA = (\rmA \rmA^\top)^{-1/2} \rmA \Delta$ and $\Delta = (m^p - m^q) / \sigma$.  
We have that 
\begin{equation}
    \rmA m^q + \sigma (\rmA \rmA^\top)^{1/2} (\Id - 2 e_\rmA  e_\rmA^{\top})Z_\rmA = \rmA \left[m^q + \sigma \Bigl(Z - 2\tfrac{Z^\top  \rmA^\dagger \rmA \Delta}{\Delta^\top \rmA^\dagger \rmA \Delta} \Delta \Bigr) \right] . \label{eq:equality_noise}
\end{equation}
In addition, we have that 
\begin{equation}
   \exp[-\tfrac{1}{2} (\Delta + 2Z) ^\top \rmA^\dagger \rmA \Delta] = \mathcal{N}(Z_\rmA +\Delta_\rmA;0,\Id)/\mathcal{N}(Z_\rmA;0,\Id) . \label{eq:equality_acceptance}
\end{equation}
\end{lemma}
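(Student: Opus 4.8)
The plan is to observe that both identities are purely algebraic consequences of two elementary facts: that $(\rmA\rmA^\top)^{1/2}$ and $(\rmA\rmA^\top)^{-1/2}$ are mutual inverses (legitimate since $\rmA\rmA^\top$ is symmetric positive definite by surjectivity of $\rmA$), and that the matrix $\rmA^\dagger\rmA = \rmA^\top(\rmA\rmA^\top)^{-1}\rmA$ is symmetric. With these in hand, the whole lemma reduces to tracking the square-root factors, and there is no deeper structure to exploit.

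For \eqref{eq:equality_noise}, I would first record the two cancellation identities
\[
(\rmA\rmA^\top)^{1/2} Z_\rmA = \rmA Z, \qquad (\rmA\rmA^\top)^{1/2}\Delta_\rmA = \rmA\Delta,
\]
which follow immediately from the definitions $Z_\rmA = (\rmA\rmA^\top)^{-1/2}\rmA Z$ and $\Delta_\rmA = (\rmA\rmA^\top)^{-1/2}\rmA\Delta$. Next I would compute the scalar $e_\rmA^\top Z_\rmA = \Delta_\rmA^\top Z_\rmA / \|\Delta_\rmA\|$, using $\Delta_\rmA^\top Z_\rmA = \Delta^\top\rmA^\top(\rmA\rmA^\top)^{-1}\rmA Z = Z^\top\rmA^\dagger\rmA\Delta$ and $\|\Delta_\rmA\|^2 = \Delta^\top\rmA^\dagger\rmA\Delta$. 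Substituting $e_\rmA e_\rmA^\top Z_\rmA = \Delta_\rmA\, (Z^\top\rmA^\dagger\rmA\Delta)/(\Delta^\top\rmA^\dagger\rmA\Delta)$ into the left-hand side and applying the two cancellation identities turns $(\rmA\rmA^\top)^{1/2}(\Id - 2 e_\rmA e_\rmA^\top) Z_\rmA$ into $\rmA Z - 2\rmA\Delta\,(Z^\top\rmA^\dagger\rmA\Delta)/(\Delta^\top\rmA^\dagger\rmA\Delta)$; factoring $\rmA$ out of everything then yields exactly the right-hand side.

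For \eqref{eq:equality_acceptance}, I would expand the Gaussian log-ratio directly as $\log[\mathcal{N}(Z_\rmA+\Delta_\rmA;0,\Id)/\mathcal{N}(Z_\rmA;0,\Id)] = -\tfrac12(\|Z_\rmA+\Delta_\rmA\|^2 - \|Z_\rmA\|^2) = -Z_\rmA^\top\Delta_\rmA - \tfrac12\|\Delta_\rmA\|^2$. The two inner products appearing here are precisely the quantities already identified above, namely $Z_\rmA^\top\Delta_\rmA = Z^\top\rmA^\dagger\rmA\Delta$ and $\|\Delta_\rmA\|^2 = \Delta^\top\rmA^\dagger\rmA\Delta$, so the exponent becomes $-Z^\top\rmA^\dagger\rmA\Delta - \tfrac12\Delta^\top\rmA^\dagger\rmA\Delta = -\tfrac12(\Delta + 2Z)^\top\rmA^\dagger\rmA\Delta$, which is the claim after exponentiating.

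There is no genuine obstacle here beyond careful bookkeeping of the $(\rmA\rmA^\top)^{\pm1/2}$ factors; the only point requiring a moment's care is the symmetry of $\rmA^\dagger\rmA$, which is what lets me identify $\Delta_\rmA^\top Z_\rmA$ with $Z^\top\rmA^\dagger\rmA\Delta$ and pull the same reduction through both identities. Once the pair $\Delta_\rmA^\top Z_\rmA = Z^\top\rmA^\dagger\rmA\Delta$ and $\|\Delta_\rmA\|^2 = \Delta^\top\rmA^\dagger\rmA\Delta$ is established, both displays follow by direct substitution.
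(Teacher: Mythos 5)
Your proof is correct and follows essentially the same route as the paper's: both arguments reduce everything to the scalar identities $\Delta_\rmA^\top Z_\rmA = Z^\top \rmA^\dagger \rmA \Delta$ and $\|\Delta_\rmA\|^2 = \Delta^\top \rmA^\dagger \rmA \Delta$ (relying on the symmetry of $\rmA^\dagger \rmA$ and the cancellation of the $(\rmA\rmA^\top)^{\pm 1/2}$ factors), the only cosmetic difference being that the paper packages the first step as a rank-one matrix identity $\rmA \Delta \Delta^\top \rmA^\dagger \rmA Z = (\rmA\rmA^\top)^{1/2}\Delta_\rmA \Delta_\rmA^\top Z_\rmA$ while you substitute the scalars directly, and expands the quadratic form $(\Delta+2Z)^\top \rmA^\dagger \rmA \Delta$ rather than the Gaussian log-ratio. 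No gaps.
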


\begin{proof}
First, we have the following 
\begin{align}
    &\rmA (m^p - m^q) (m^p -m^q) \rmA^\dagger \rmA Z = (\rmA \rmA^\top)^{1/2} (\rmA \rmA^\top)^{-1/2}  \rmA (m^p - m^q) (m^p -m^q) \rmA^\top (\rmA \rmA^\top)^{-1} \rmA Z \\
    &\qquad = (\rmA \rmA^\top)^{1/2} (\rmA \rmA^\top)^{-1/2}  \rmA (m^p - m^q) (m^p -m^q) \rmA^\top (\rmA \rmA^\top)^{-1/2} Z_\rmA \\
    &\qquad = \sigma^2 (\rmA \rmA^\top)^{1/2} \Delta_\rmA \Delta_\rmA^\top Z_\rmA .
\end{align}
Hence, we have that 
\begin{equation}
    \rmA \Delta \Delta^\top  \rmA^\dagger \rmA Z = (\rmA \rmA^\top)^{1/2} \Delta_\rmA \Delta_\rmA^\top Z_\rmA . \label{eq:intermediate_delta_delta_a}
\end{equation}
Next, we have that for any $u \in \rset^d$
\begin{equation}
\label{eq:equality_norm}
    u^\top \rmA^\dagger \rmA u = u^\top \rmA^\top (\rmA \rmA^\top)^{-1} \rmA u = \| (\rmA \rmA^\top)^{-1/2} \rmA u \|^2 . 
\end{equation}
Hence, we have that 
\begin{equation}
    \Delta^\top \rmA^\dagger \rmA \Delta = \| (\rmA \rmA^\top)^{-1/2} \Delta \|^2 = \| \Delta_\rmA \|^2 . 
\end{equation}
Combining this result and \eqref{eq:intermediate_delta_delta_a}, we have
\begin{align}
    \rmA \left[m^q + \sigma \Bigl(Z -2 \tfrac{Z^\top  \rmA^\dagger \rmA \Delta}{\Delta^\top \rmA^\dagger \rmA \Delta} \Delta \Bigr) \right] &= \rmA \left[m^q + \sigma \Bigl(Z - 2\tfrac{\Delta^\top  \rmA^\dagger \rmA Z}{\Delta^\top \rmA^\dagger \rmA \Delta} \Delta \Bigr) \right] \\ 
    &= \rmA \left[m^q + \sigma \Bigl(Z - 2\tfrac{\Delta \Delta^\top }{\Delta^\top \rmA^\dagger \rmA \Delta} \rmA^\dagger \rmA Z \Bigr) \right] \\
    &= \rmA m^q + \sigma (\rmA \rmA^\top)^{1/2} (\Id - 2 e_\rmA  e_\rmA^{\top}) Z_\rmA ,
\end{align}
which concludes the proof of \eqref{eq:equality_noise}. 
Second, we have that 
\begin{align}
    (\Delta + 2Z) ^\top \rmA^\dagger \rmA \Delta 
    &= \Delta^\top \rmA^\dagger \rmA \Delta + Z^\top \rmA^\dagger \rmA \Delta + \Delta^\top \rmA^\dagger \rmA Z \\
    &= (Z + \Delta)^\top \rmA^\dagger \rmA ( Z + \Delta) - Z^\top \rmA^\dagger \rmA Z \\
    &= \| (\rmA \rmA^\top)^{-1/2} \rmA (Z + \Delta) \|^2 - \| (\rmA \rmA^\top)^{-1/2} \rmA Z \|^2 \\
    &= \| Z_\rmA + \Delta_\rmA \|^2 - \| Z_\rmA \|^2 ,
\end{align}
where we have used \eqref{eq:equality_norm}. This concludes the proof of \eqref{eq:equality_acceptance}.
\end{proof}

\begin{algorithm}
\caption{\rejectionmechanism$(p,q,\tilde{Y})$ for two Gaussians with same (full) covariance}\label{alg:ReflectionMaxCouplingProjectionOriginalSpace}
\begin{algorithmic}
\Require Matrix $\rmA$, $p(x)=\mathcal{N}(x;m^p,\sigma^2 \Id),q(x)=\mathcal{N}(x;m^q,\sigma^2  \Id), \tilde{Y} \sim p$. 
\State $\Delta = (m^p - m^q) / \sigma$
\State Sample $U \sim \textup{Unif}[0,1]$.
\State $\accept = \mathbb{I}[U \leq \min(1,\exp[-\tfrac{1}{2} (\Delta + 2 Z)^\top \rmA^\dagger \rmA \Delta ])]$
\If{$\accept$}
    \State Set $Y = \tilde{Y}$.
\Else
    \State Set $Y = m^q + \sigma \Bigl(Z - 2\tfrac{Z^\top  \rmA^\dagger \rmA \Delta}{\Delta^\top \rmA^\dagger \rmA \Delta} \Delta \Bigr)$.
\EndIf
\State \Return  ($Y$, $\accept$).
\end{algorithmic}
\end{algorithm}

The main advantage of \Cref{alg:ReflectionMaxCouplingProjectionOriginalSpace} compared to \Cref{alg:ReflectionMaxCouplingProjection} is that it only requires the knowledge of $\rmA$ and $\rmA^\dagger$ and implicitly provide a maximal coupling between $\tilde{Y}_\rmA$ and $Y_\rmA$. 
Note that if $\rmA$ is invertible, then we have that $\rmA^\dagger = \rmA^{-1}$ and \Cref{alg:ReflectionMaxCouplingProjectionOriginalSpace} becomes identical to \Cref{alg:ReflectionMaxCoupling} and thus returns a maximal coupling between $\tilde{Y}$ and $Y$. However, \Cref{alg:ReflectionMaxCouplingProjectionOriginalSpace} is also applicable when only   $\rmA \rmA^\top$ is invertible. In that case, we do \emph{not} recover that $Y \sim \mathcal{N}(x; m^q, \sigma^2 \Id)$ but the algorithm can still be applied and does induce maximally coupled $(\tilde{Y}_\rmA, Y_\rmA)$. 

In particular, given a mapping $f$ and a mapping $g$ such that $g(f(x)) \approx x$ for $x \in \rset^d$, we can define \Cref{alg:ReflectionMaxCouplingEncoder}, which is a non-linear approximate version of \Cref{alg:ReflectionMaxCouplingProjectionOriginalSpace}. In particular, in \Cref{alg:ReflectionMaxCouplingEncoder}, $f$ can be thought of as an encoder and $g$ as a decoder. In the case where $f(x) = \rmA x$, then
$\Delta^\star = g(f(\Delta)) = \rmA^\dagger \rmA \Delta$.  Note that by letting $\Delta^\star = \Delta/\tau$ in \Cref{alg:ReflectionMaxCouplingEncoder}, we recover \Cref{alg:TemperatureReflectionMaxCoupling}. 

\begin{algorithm}
\caption{\rejectionmechanism$(p,q,\tilde{Y})$ for two Gaussians with auto-encoders}\label{alg:ReflectionMaxCouplingEncoder}
\begin{algorithmic}
\Require $f, g$, $p(x)=\mathcal{N}(x;m^p,\sigma^2 \Id),q(x)=\mathcal{N}(x;m^q,\sigma^2  \Id), \tilde{Y} \sim p$.
\State $\Delta = (m^p - m^q) / \sigma$, $\Delta^\star = (g(f(\Delta)))$
\State Sample $U \sim \textup{Unif}[0,1]$.
\State $\accept = \mathbb{I}[U \leq \min(1,\exp[-\tfrac{1}{2}(\Delta + 2 Z)^\top \Delta^\star ])]$
\If{$\accept$}
    \State Set $Y = \tilde{Y}$.
\Else
    \State Set $Y = m^q + \sigma \Bigl(Z - 2\tfrac{ Z^\top \Delta^\star}{\Delta^\top \Delta^\star} \Delta \Bigr) $.
\EndIf
\State \Return  ($Y$, $\accept$).
\end{algorithmic}
\end{algorithm}


\section{Some Theoretical Results}
\label{sec:theoretical_results}
In \Cref{sec:controlacceptanceratio}, we establish \Cref{lemma:control_log_acceptance} while we prove Theorem 4.3 in \Cref{sec:controlacceptanceratioextended}.

\subsection{Control of acceptance ratio}\label{sec:controlacceptanceratio}
We now provide a lower bound on the expectation of the logarithm of the acceptance ratio for speculative sampling. 
We have at step $n+1$ that the target density is $q(y_{n+1}|y_n)=\mathcal{N}(y_{n+1};m^q_{t_n}(y_n),\sigma^2_n \Id)$ and, for an independent target model, the proposal density is $p(y_{n+1}|y_n)=\mathcal{N}(y_{n+1};m^p_{t_n}(y)n),\sigma^2_n \Id)$  where
\begin{equation}
    m^q_{t_n}(y)=y+\gamma b^q_{1-t_n}(y), \qquad  m^p_{t_n}(y)=y+\gamma b^p_{1-t_n}(y) 
\end{equation}
with 
\begin{equation}
     b^q_{1-t_n}(y)=-f_{1-t_n} y + \frac{g_{1-t_n}^2}{2}s^q_{1-t_n}(y),~~b^p_{1-t_n}(y)=-f_{1-t_n} y + \frac{g_{1-t_n}^2}{2}s^p_{1-t_n}(y).
\end{equation}
The acceptance ratio is then given by
\begin{equation}
a_{n} = \frac{\mathcal{N}(Z + \Delta_n;0,\Id)}{ \mathcal{N}(Z;0,\Id)}
\end{equation}
for $Z \sim \mathcal{N}(Z;0,\Id)$ where
\begin{equation}
        \| \Delta_n \|^2=\frac{1}{4}\gamma(\vareps + \tfrac{1}{\vareps})^2 g^2_{1-t_n} \|s^p_{1-t_n}(\tilde{Y}_n)-s^q_{1-t_n}(\tilde{Y}_n)\|^2.
\end{equation}
So we obtain that 
\begin{equation}
    \log a_n=-\frac{1}{2}\| Z+\Delta_n\|^2+\frac{1}{2}\| Z\|^2
\end{equation}
so that
\begin{equation}
  \mathbb{E}[\log(a_n) |y_n]=-\frac{1}{2} \| \Delta_n \|^2.
\end{equation}
Now using Jensen's inequality
\begin{equation}
    \mathbb{E}[a_n]\geq \exp[\mathbb{E}[\log (a_n)]]=\exp \left[-\frac{1}{2}\mathbb{E}[\| \Delta_n \|^2]\right].
\end{equation}
This proves the result.
\subsection{Control of acceptance ratio under exact scores}\label{sec:controlacceptanceratioextended}
We consider the following setting.  Let $(\bfX_t^i)_{t \in [0,1]}$ for any $i \in \{0, 1\}$ be given by 
\begin{equation}
    \rmd \bfX_t^i = f_t \bfX_t^i \rmd + g_t \rmd \bfB_t , \qquad \bfX_0^i \sim \pi_0^i
\end{equation}
where $f: \ [0,1) \to \rset$ and $g: \ [0,1) \to [0, +\infty)$ are functions introduced further, $\pi_0^0$ and $\pi_0^1$ are distributions over $\rset^d$ and $(\bfB_t^i)_{t \in [0,1]}$ are $d$-dimensional Brownian motions. In what follows, we define for any $t \in [0,1)$
\begin{equation}
    f_t = - 1 / (1-t) , \qquad g_t^2 = 2 t / (1 - t) . 
\end{equation}
In that case, we have that for any $t \in [0,1]$ and $i \in \{0,1\}$
\begin{equation}
\label{eq:integrated_eq}
    \bfX_t^i = \alpha_t \bfX_0^i + \sigma_t \bfZ,\qquad \bfZ \sim \mathcal{N}(0,\Id)
\end{equation}
with $\alpha_t = 1-t$ and $\sigma_t = t$.
We assume that for any $i \in \{0,1\}$, $\pi_0^i$ has a density with respect to the Lebesgue measure denoted $p_0^i$. In that case, for any $t \in [0,1]$ and $i \in \{0,1\}$, $\bfX_t^i$ admits a density with respect to the Lebesgue measure denoted $p_t^i$. In this section, we show that for any $t \in (0,1]$

\begin{equation}
\label{eq:main_upper_bound}
    \int_{\rset^d} \| \nabla \log p_t^0(x_t) - \nabla \log p_t^1(x_t) \|^2 p_t^0(x_t) \rmd x_t \leq C(t, p_0^0, p_0^1) ,
\end{equation}
such that
\begin{enumerate}
\item $\lim_{t \to 1} C(t, p_0^0, p_0^1) = 0$,
\item $D(p_0^0|p_1^0) \to 0$ implies that $C(t, p_0^0, p_0^1) \to 0$, where $D$ is a measure of divergence between $p_0^0$ and $p_0^1$ defined further.
\end{enumerate}
In other words, item 1) shows that the Fisher score between $p_t^0$ and $p_t^1$ gets smaller as $t$ gets larger as expected as $p^0_1=p^1_1$, a normal density. Item 2) shows that the Fisher score between $p_t^0$ and $p_t^1$ is small if $p_0^0$ and $p_1^0$ are close. 

We will also establish in our main result, \Cref{thm:control_log_acceptance}, a lower bound for the expectation of the logarithm of the acceptance ratio in our speculative sampling setting based on \eqref{eq:main_upper_bound}.
 
\paragraph{Time control.} First, we provide an upper-bound on the Fisher score that goes to $0$ as $t \to 1$. We begin with the following result.

\begin{lemma}{Convergence of Fisher score}{convergence_fisher_score}
Assume that $\int_{\rset^d} \| x \|^2 \rmd \pi_0^i(x) = C_2^i < +\infty$ for $i \in \{0,1\}$. Then, we have that for any $t \in (0,1]$ and $i \in \{0,1\}$
\begin{equation}
    \int_{\rset^d} \| \nabla \log p_t^i(x_t) - \nabla \log p_1(x_t) \|^2 p_t^i(x_t) \rmd x_t \leq (\tfrac{1}{\sigma_t} - \sigma_t)^2 d + \alpha_t^2 C_2^i ,
\end{equation}
where $p_1$ is the density of $\mathcal{N}(0,\Id)$ with respect to the Lebesgue measure. 
In addition, assume that $\int_{\rset^d} \| x \|^4 \rmd \pi_0^i(x) = C_4^i < +\infty$, we have that for any $t \in (0,1]$ and $i \in \{0,1\}$
\begin{align}
    \int_{\rset^d} \| \nabla \log p_t^i(x_t) - \nabla \log p_1(x_t) \|^4 p_t^i(x_t) \rmd x_t &\leq 3 (\tfrac{1}{\sigma_t} - \sigma_t)^4 d^2 + \alpha_t^4 C_4^i  + 6 \alpha_t^2 (\tfrac{1}{\sigma_t} - \sigma_t)^2 C_2^i d \\
    &\leq 12 (\tfrac{1}{\sigma_t} - \sigma_t)^4 d^2 + 2 \alpha_t^4 C_4^i .
\end{align}
\end{lemma}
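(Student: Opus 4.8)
The plan is to compute both scores via Tweedie's formula and then exploit the key observation that their \emph{difference} is itself a single conditional expectation; once this representation is established, both inequalities reduce to Jensen's inequality followed by elementary Gaussian moment computations.

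First I would write the marginal as a Gaussian convolution, $p_t^i(x_t) = \int_{\rset^d} \mathcal{N}(x_t; \alpha_t x_0, \sigma_t^2 \Id)\, \rmd \pi_0^i(x_0)$, and differentiate under the integral to obtain Tweedie's identity $\nabla \log p_t^i(x_t) = -\sigma_t^{-1} \mathbb{E}[\bfZ \mid \bfX_t^i = x_t]$, where I have used $x_t - \alpha_t \bfX_0^i = \sigma_t \bfZ$. Since $p_1 = \mathcal{N}(0,\Id)$ gives $\nabla \log p_1(x_t) = -x_t$, and since linearity of conditional expectation applied to $\bfX_t^i = \alpha_t \bfX_0^i + \sigma_t \bfZ$ yields $x_t = \alpha_t \mathbb{E}[\bfX_0^i \mid \bfX_t^i = x_t] + \sigma_t \mathbb{E}[\bfZ \mid \bfX_t^i = x_t]$, subtracting the two scores collapses to
\begin{equation*}
\nabla \log p_t^i(x_t) - \nabla \log p_1(x_t) = \mathbb{E}\bigl[(\sigma_t - \tfrac{1}{\sigma_t})\bfZ + \alpha_t \bfX_0^i \mid \bfX_t^i = x_t\bigr].
\end{equation*}
This is the heart of the argument: the score difference is the conditional expectation of the single random vector $W := (\sigma_t - \sigma_t^{-1})\bfZ + \alpha_t \bfX_0^i$.

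Writing the two integrals on the left-hand sides as $\mathbb{E}[\|\mathbb{E}[W \mid \bfX_t^i]\|^2]$ and $\mathbb{E}[\|\mathbb{E}[W \mid \bfX_t^i]\|^4]$, I would apply conditional Jensen (both $w \mapsto \|w\|^2$ and $w \mapsto \|w\|^4$ being convex) to bound each by $\mathbb{E}[\|W\|^2]$ and $\mathbb{E}[\|W\|^4]$, thereby discarding all posterior-mean terms at no cost. For the second moment, independence of $\bfZ$ and $\bfX_0^i$ together with $\mathbb{E}[\bfZ]=0$ eliminates the cross term and gives $\mathbb{E}[\|W\|^2] = (\tfrac{1}{\sigma_t}-\sigma_t)^2 d + \alpha_t^2 C_2^i$, which is precisely the first claimed bound. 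For the fourth moment I would expand $\|W\|^4 = (\|W\|^2)^2$; the terms odd in $\bfZ$ vanish by symmetry, and using $\mathbb{E}[\|\bfZ\|^4] = d^2 + 2d$, $\mathbb{E}[\langle \bfZ, \bfX_0^i\rangle^2] = C_2^i$ and $\mathbb{E}[\|\bfZ\|^2\|\bfX_0^i\|^2] = d\,C_2^i$ gives
\begin{equation*}
\mathbb{E}[\|W\|^4] = (\tfrac{1}{\sigma_t}-\sigma_t)^4(d^2 + 2d) + \alpha_t^2(\tfrac{1}{\sigma_t}-\sigma_t)^2(2d+4) C_2^i + \alpha_t^4 C_4^i.
\end{equation*}

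The first fourth-moment bound then follows from the crude estimates $d^2 + 2d \leq 3d^2$ and $2d + 4 \leq 6d$, both valid for $d \geq 1$. The second, more compact bound follows by Young's inequality on the remaining cross term: setting $u = (\tfrac{1}{\sigma_t}-\sigma_t)^2 d$ and $v = \alpha_t^2 C_2^i$, the elementary $(3u - v)^2 \geq 0$ gives $6uv \leq 9u^2 + v^2$, and $v^2 = \alpha_t^4 (C_2^i)^2 \leq \alpha_t^4 C_4^i$ by Jensen, so the cross term is absorbed into the two pure terms to produce $12(\tfrac{1}{\sigma_t}-\sigma_t)^4 d^2 + 2\alpha_t^4 C_4^i$. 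The only genuine obstacle is the bookkeeping in the fourth-moment expansion — tracking which mixed Gaussian moments survive and keeping the numerical constants straight — since there is no analytic subtlety once the conditional-expectation representation of the score difference is in hand.
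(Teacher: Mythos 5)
Your proof is correct and follows essentially the same route as the paper: Tweedie's identity to write the score difference as $\mathbb{E}[(\sigma_t - \tfrac{1}{\sigma_t})\bfZ + \alpha_t \bfX_0^i \mid \bfX_t^i = x_t]$, conditional Jensen to discard the conditioning, and elementary Gaussian moment computations. The paper leaves the fourth-moment case to the reader, and your explicit expansion (with the estimates $d^2+2d\leq 3d^2$, $2d+4\leq 6d$, and $6uv\leq 9u^2+v^2$ together with $(C_2^i)^2\leq C_4^i$) correctly fills in exactly those details, including the stated constants.
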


\begin{proof}
Let $i \in \{0,1\}$.
First, using Tweedie's identity, see \citep{vincent2011connection} for instance, we recall that for any $t \in (0,1)$, we have that for any $x_t \in \rset^d$
\begin{equation}
    \nabla \log p_t^i(x_t) = \int_{\rset^d} \nabla \log p_{t|0}(x_t | x_0)~ p_{0|t}^i(x_0|x_t) \rmd x_0= \mathbb{E}[-\bfZ / \sigma_t \ | \ \bfX_t^i = x_t] ,
\end{equation}
where we recall that $\bfX_t^i = \alpha_t \bfX_0^i + \sigma_t \bfZ$, see \eqref{eq:integrated_eq}. 
Hence, using Jensen's inequality, we have that 
\begin{align}
    \int_{\rset^d} \| \nabla \log p_t^i(x_t) - \nabla \log p_1(x_t) \|^2 p_t^i(x_t) \rmd x_t &= \mathbb{E}[\| \mathbb{E}[\bfZ / \sigma_t - \bfX_t^i \ | \ \bfX_t^i ] \|^2] \\
    &\leq \mathbb{E}[\| (\tfrac{1}{\sigma_t} - \sigma_t) \bfZ  - \alpha_t \bfX_0^i   \|^2] \\
    &\leq (\tfrac{1}{\sigma_t} - \sigma_t)^2 \mathbb{E}[\| \bfZ \|^2]+  \alpha^2_t \mathbb{E}[\| \bfX_0^i \|^2],
\end{align}
where we have used that $\bfX_0^i$ and $\bfZ$ are independent. Finally, using $\mathbb{E}[\| \bfZ \|^2] = d $, we obtained the first result. The second part of the proof is similar and left to the reader. 
\end{proof}

We recall that for any $\alpha \geq 1$ the $\chi_\alpha$ divergence between two densities over $\rset^d$, $p, q$ is given by 
\begin{equation}
    \chi_\alpha(p|q) = \int_{\rset^d} \Bigl(1 - \frac{p(x)}{q(x)}\Bigr)^\alpha q(x) \rmd x.
\end{equation}
If $\alpha = 2$, we also have
\begin{equation}
\label{eq:def_chi_deux}
    \chi_2(p|q) = \int_{\rset^d} \frac{p(x)^2}{q(x)} \rmd x - 1 . 
\end{equation}
In addition, we have the following useful result.

\begin{lemma}{$\chi_\alpha$-data processing inequality}{data_processing}
For any $\alpha \geq 1$, $t \in [0,1]$, $\chi_\alpha(p_t^0|p_t^1) \leq \chi_\alpha(p_0^0|p_0^1)$. 
\end{lemma}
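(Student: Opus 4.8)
The plan is to recognize this as the classical data processing inequality for $f$-divergences. The crucial structural fact is that $p_t^0$ and $p_t^1$ are obtained from $p_0^0$ and $p_0^1$ by applying the \emph{same} Markov transition kernel: since the SDE coefficients $f_t, g_t$ do not depend on $i$, both densities satisfy the identical Fokker--Planck equation, so there is a common forward kernel $p_{t|0}(x_t \mid x_0)$ with $p_t^i(x_t) = \int_{\rset^d} p_{t|0}(x_t \mid x_0) p_0^i(x_0) \rmd x_0$ for $i \in \{0,1\}$. I would first note that $\chi_\alpha$ is an $f$-divergence generated by $\phi(u) = |1-u|^\alpha$, which is convex on $[0,+\infty)$ for every $\alpha \geq 1$ (being the composition of the convex nondecreasing map $v \mapsto v^\alpha$ on $[0,\infty)$ with $u \mapsto |1-u|$); when $\alpha$ is an even integer one may drop the absolute value and recover $(1-u)^\alpha$.

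Next I would express the likelihood ratio at time $t$ as a posterior average of the ratio at time $0$. Setting $r(x_0) = p_0^0(x_0)/p_0^1(x_0)$ and introducing the time-$0$ posterior under model $1$, $p_{0|t}^1(x_0 \mid x_t) = p_{t|0}(x_t \mid x_0) p_0^1(x_0)/p_t^1(x_t)$, a direct computation gives
\begin{equation}
  \frac{p_t^0(x_t)}{p_t^1(x_t)} = \int_{\rset^d} r(x_0)\, p_{0|t}^1(x_0 \mid x_t)\, \rmd x_0 = \mathbb{E}[r(\bfX_0) \mid \bfX_t = x_t] ,
\end{equation}
where the conditioning is taken under model $1$. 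Applying Jensen's inequality to the convex $\phi$ conditionally on $\bfX_t = x_t$ yields $\phi(\mathbb{E}[r(\bfX_0)\mid \bfX_t=x_t]) \leq \mathbb{E}[\phi(r(\bfX_0)) \mid \bfX_t = x_t]$. Integrating this against the marginal $p_t^1(x_t)$ and using the tower property (the posterior averaged against $p_t^1$ returns the model-$1$ law of $\bfX_0$, namely $p_0^1$) collapses the right-hand side back to time $0$:
\begin{equation}
  \chi_\alpha(p_t^0 \mid p_t^1) = \int_{\rset^d} \phi\Bigl(\tfrac{p_t^0}{p_t^1}\Bigr) p_t^1 \,\rmd x_t \leq \int_{\rset^d} \phi(r(x_0))\, p_0^1(x_0) \,\rmd x_0 = \chi_\alpha(p_0^0 \mid p_0^1) .
\end{equation}

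The routine verifications are the convexity of $\phi$ and the Fubini exchange justifying the posterior representation; I expect the only genuine subtlety to be the well-definedness of the divergence for non-integer $\alpha$ and in the non-absolutely-continuous case. Using $\phi(u) = |1-u|^\alpha$ resolves the former, and if $p_0^0 \not\ll p_0^1$ the standard $f$-divergence convention makes $\chi_\alpha(p_0^0 \mid p_0^1) = +\infty$, so the inequality holds trivially; otherwise absolute continuity is preserved by the forward kernel, the ratios above are well-defined $p_t^1$-almost everywhere, and the argument applies verbatim for every $t \in [0,1]$.
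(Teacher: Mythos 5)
Your proof is correct and takes essentially the same route as the paper, which states this lemma without proof and merely remarks that it is the data processing inequality valid for every $f$-divergence with $f$ convex; your argument --- same forward kernel for both models, posterior representation $p_t^0/p_t^1 = \mathbb{E}[p_0^0(\bfX_0)/p_0^1(\bfX_0) \mid \bfX_t]$ under model $1$, Jensen's inequality, and the tower property --- is precisely the standard proof of that fact. Your use of $\phi(u) = |1-u|^\alpha$ to handle non-even $\alpha$ (the paper's definition omits the absolute value, but only ever invokes the lemma with even $\alpha$, where the two agree) is a sensible clarification rather than a departure.
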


Note that this data processing is in fact valid for every $f$-divergence with $f$ convex. 
Combining \Cref{lemma:convergence_fisher_score} and \Cref{lemma:data_processing}, we have the following result.

\begin{lemma}{Convergence of modified Fisher score}{convergence_modified_fisher_score}
Assume that $\int_{\rset^d} \| x \|^4 \rmd \pi_0^i(x) = C_4^i < +\infty$ for $i \in \{0,1\}$. Let $C_4 = \max(C_4^0, C_4^1)$. Then, we have that for any $t \in (0,1]$
\begin{equation}
    \int_{\rset^d} \| \nabla \log p_t^1(x_t) - \nabla \log p_1(x_t) \|^2 p_t^0(x_t) \rmd x_t \leq 4 (1 + \chi_2(p_0^0|p_0^1))^{1/2}((\tfrac{1}{\sigma_t} - \sigma_t)^2 d + \alpha_t^2 C_4^{1/2}), 
\end{equation}
where $p_1$ is the density of $\mathcal{N}(0,\Id)$ with respect to the Lebesgue measure.
\end{lemma}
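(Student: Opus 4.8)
The plan is to deal with the mismatch between the density $p_t^0$ against which we integrate and the score difference $\nabla \log p_t^1 - \nabla \log p_1$, which is tied to model $1$. First I would insert the ratio $p_t^1 / p_t^1$ and rewrite the integral as $\int \| \nabla \log p_t^1(x_t) - \nabla \log p_1(x_t) \|^2 (p_t^0(x_t)/p_t^1(x_t))\, p_t^1(x_t) \rmd x_t$, so that the integrating measure becomes $p_t^1$ carrying a density-ratio weight. This moves everything onto the measure for which \Cref{lemma:convergence_fisher_score} directly controls the moments of the score difference.

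Next I would apply the Cauchy--Schwarz inequality to split the weighted integral into two factors, namely $\left(\int \| \nabla \log p_t^1 - \nabla \log p_1 \|^4 p_t^1 \rmd x_t\right)^{1/2}$ and $\left(\int (p_t^0/p_t^1)^2 p_t^1 \rmd x_t\right)^{1/2} = \left(\int (p_t^0)^2 / p_t^1 \rmd x_t\right)^{1/2}$. The second factor equals $(1 + \chi_2(p_t^0 | p_t^1))^{1/2}$ by \eqref{eq:def_chi_deux}, and \Cref{lemma:data_processing} then bounds it by $(1 + \chi_2(p_0^0 | p_0^1))^{1/2}$, which is exactly the divergence factor appearing in the statement.

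For the remaining fourth-moment factor I would invoke the second inequality of \Cref{lemma:convergence_fisher_score} with $i = 1$, which gives $\int \| \nabla \log p_t^1 - \nabla \log p_1 \|^4 p_t^1 \rmd x_t \leq 12 (\tfrac{1}{\sigma_t} - \sigma_t)^4 d^2 + 2 \alpha_t^4 C_4^1$. Taking the square root and using $\sqrt{a + b} \leq \sqrt{a} + \sqrt{b}$ together with $\sqrt{12} \leq 4$, $\sqrt{2} \leq 4$, and $C_4^1 \leq C_4$ yields the bound $4((\tfrac{1}{\sigma_t} - \sigma_t)^2 d + \alpha_t^2 C_4^{1/2})$. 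Multiplying the two factors gives the claimed estimate.

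The only genuine subtlety — the hard part — is the opening move: the cross term integrates a quantity pinned to model $1$ against the density of model $0$, and one must resist the temptation to integrate against $p_t^1$ outright. The density-ratio and Cauchy--Schwarz decomposition is precisely what lets the two available ingredients combine cleanly, the moment bound (which lives on $p_t^1$) and the data-processing inequality (which controls the ratio); everything afterward is routine bookkeeping with elementary constants.
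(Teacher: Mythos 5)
Your proof is correct and follows the paper's argument essentially verbatim: the same insertion of the density ratio $p_t^0/p_t^1$ followed by Cauchy--Schwarz, the same passage from $(1+\chi_2(p_t^0|p_t^1))^{1/2}$ to $(1+\chi_2(p_0^0|p_0^1))^{1/2}$ via \Cref{lemma:data_processing}, and the same use of the fourth-moment bound of \Cref{lemma:convergence_fisher_score} together with $\sqrt{a+b}\leq\sqrt{a}+\sqrt{b}$ and $\max(\sqrt{12},\sqrt{2})\leq 4$ to obtain the constant $4$. There is nothing to correct.
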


\begin{proof}
For any $t \in (0,1)$, let $A_t = \int_{\rset^d} \| \nabla \log p_t^0(x_t) - \nabla \log p_1(x_t) \|^2 p_t^0(x_t) \rmd x_t$. 
Using the Cauchy--Schwarz inequality and \eqref{eq:def_chi_deux}, we have that for any $t \in (0,1)$
\begin{align}
    A_t^2 &= \left(\int_{\rset^d} \| \nabla \log p_t^1(x_t) - \nabla \log p_1(x_t) \|^2 \frac{p_t^0(x_t)}{p_t^1(x_t)}p_t^1(x_t) \rmd x_t\right)^2 \\
    &\leq \int_{\rset^d} \| \nabla \log p_t^1(x_t) - \nabla \log p_1(x_t) \|^4 p_t^1(x_t) \rmd x_t \int_{\rset^d} \frac{p_t^0(x_t)^2}{p_t^1(x_t)} \rmd x_t \\
    &\leq \int_{\rset^d} \| \nabla \log p_t^1(x_t) - \nabla \log p_1(x_t) \|^4 p_t^1(x_t) \rmd x_t (1 + \chi_2(p_t^0 | p_t^1)) . 
\end{align}
We conclude upon combining \Cref{lemma:convergence_fisher_score}, \Cref{lemma:data_processing}, the fact that for any $a, b \geq 0$, $\sqrt{a + b} \leq \sqrt{a} + \sqrt{b}$ and that $\max(\sqrt{12}, \sqrt{2}) \leq 4$. 
\end{proof}

Finally, combining \Cref{lemma:convergence_modified_fisher_score} and \Cref{lemma:convergence_fisher_score}, we get the following result. 

\begin{proposition}{Control of Fisher score (I)}{control_fisher_score_time}
Assume that $\int_{\rset^d} \| x \|^4 \rmd \pi_0^i(x) = C_4^i < +\infty$ for $i \in \{0,1\}$. Let $C_4 = \max(C_4^0, C_4^1)$. Then, we have that for any $t \in (0,1]$
\begin{equation}
    \int_{\rset^d} \| \nabla \log p_t^0(x_t) - \nabla \log p_t^1(x_t) \|^2 p_t^0(x_t) \rmd x_t \leq 10 (1 + \chi_2(p_0^0|p_0^1))^{1/2}((\tfrac{1}{\sigma_t} - \sigma_t)^2 d + \alpha_t^2 C_4^{1/2}) .  
\end{equation}
\end{proposition}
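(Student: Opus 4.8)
The plan is to combine the two preceding lemmas through a simple triangle-inequality decomposition, introducing $\nabla \log p_1$ (the score of the standard Gaussian) as an intermediate reference point. Concretely, I would first write, using $\|a-b\|^2 \leq 2\|a-c\|^2 + 2\|c-b\|^2$ with $c = \nabla \log p_1(x_t)$,
\begin{align}
    &\int_{\rset^d} \| \nabla \log p_t^0(x_t) - \nabla \log p_t^1(x_t) \|^2 p_t^0(x_t) \rmd x_t \\
    &\qquad \leq 2 \int_{\rset^d} \| \nabla \log p_t^0(x_t) - \nabla \log p_1(x_t) \|^2 p_t^0(x_t) \rmd x_t \\
    &\qquad \qquad + 2\int_{\rset^d} \| \nabla \log p_1(x_t) - \nabla \log p_t^1(x_t) \|^2 p_t^0(x_t) \rmd x_t .
\end{align}

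The two resulting terms are exactly what the earlier lemmas control. For the first term I would apply \Cref{lemma:convergence_fisher_score} with $i=0$, giving the bound $(\tfrac{1}{\sigma_t} - \sigma_t)^2 d + \alpha_t^2 C_2^0$; note that here the integrating density $p_t^0$ matches the lemma's hypothesis, so no change of measure is needed. For the second term, the integrating density is $p_t^0$ whereas the integrand involves $p_t^1$, so this is precisely the ``modified'' Fisher score handled by \Cref{lemma:convergence_modified_fisher_score}, which yields the bound $4 (1 + \chi_2(p_0^0|p_0^1))^{1/2}((\tfrac{1}{\sigma_t} - \sigma_t)^2 d + \alpha_t^2 C_4^{1/2})$. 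Substituting both bounds gives a sum of a factor-$2$ copy of the first bound and a factor-$8$ copy of the second.

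It remains to fold everything into the single stated expression with constant $10$. The two small facts I would use are that $C_2^0 \leq (C_4^0)^{1/2} \leq C_4^{1/2}$ (by Cauchy--Schwarz / Jensen applied to the fourth-moment assumption, since $C_4 = \max(C_4^0,C_4^1)$) and that $1 + \chi_2(p_0^0|p_0^1) \geq 1$ so that $(1+\chi_2(p_0^0|p_0^1))^{1/2} \geq 1$. These let me dominate the factor-$2$ term by $2 (1 + \chi_2(p_0^0|p_0^1))^{1/2}((\tfrac{1}{\sigma_t} - \sigma_t)^2 d + \alpha_t^2 C_4^{1/2})$, which adds to the factor-$8$ term to give exactly the claimed constant $10 = 2 + 8$.

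The argument is essentially constant-bookkeeping rather than analysis, since all the genuine work (Tweedie's identity, the moment estimates, and the $\chi_2$ data-processing inequality) is already discharged in \Cref{lemma:convergence_fisher_score}, \Cref{lemma:data_processing}, and \Cref{lemma:convergence_modified_fisher_score}. The only point that requires a moment's care is ensuring that the two moment quantities ($C_2^0$ and $C_4^{1/2}$) and the $\chi_2$ prefactors are compatible enough to be merged under a common bound without introducing a larger constant; the inequalities $C_2^0 \leq C_4^{1/2}$ and $(1+\chi_2)^{1/2}\geq 1$ are exactly what make this merging clean. I expect no substantive obstacle beyond verifying that the decomposition routes each term to the correct lemma (matching integrating densities), which is the one place an error could creep in.
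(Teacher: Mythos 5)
Your proposal is correct and takes essentially the same route as the paper: the paper's proof consists of the identical decomposition $\|a-b\|^2 \leq 2\|a-c\|^2 + 2\|c-b\|^2$ with $c = \nabla \log p_1(x_t)$, followed by applying \Cref{lemma:convergence_fisher_score} to the first term and \Cref{lemma:convergence_modified_fisher_score} to the second. The only difference is that you spell out the final constant-bookkeeping (using $C_2^0 \leq (C_4^0)^{1/2} \leq C_4^{1/2}$ and $(1+\chi_2(p_0^0|p_0^1))^{1/2} \geq 1$ to merge $2+8=10$), which the paper leaves implicit in its one-line conclusion.
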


\begin{proof}
For any $t \in (0,1)$, we have that 
\begin{align}
    &\int_{\rset^d} \| \nabla \log p_t^0(x_t) - \nabla \log p_t^1(x_t) \|^2 p_t^0(x_t) \rmd x_t \\
    &\qquad \leq 2 \int_{\rset^d} \| \nabla \log p_t^0(x_t) - \nabla \log p_1(x_t) \|^2 p_t^0(x_t) \rmd x_t \\
    & \qquad \qquad + 2 \int_{\rset^d} \| \nabla \log p_t^1(x_t) - \nabla \log p_1(x_t) \|^2 p_t^0(x_t) \rmd x_t .
\end{align}
We conclude upon combining \Cref{lemma:convergence_fisher_score} and \Cref{lemma:convergence_modified_fisher_score}.
\end{proof}

In particular, \Cref{prop:control_fisher_score_time} shows that $\lim_{t \to 1} \int_{\rset^d} \| \nabla \log p_t^0(x_t) - \nabla \log p_t^1(x_t) \|^2 p_t^0(x_t) \rmd x_t = 0$. 

\paragraph{Measure control.} We now provide a control on the Fisher score that depends on some divergence between the measures $\pi_0^0$ and $\pi_0^1$. We first recall a useful result on the score which can be found, for instance, in \citep{debortoli2024target}. 

\begin{lemma}{Target Score Identity}{target_score_identity}
Assume that for any $i \in \{0,1\}$, $p_0^i \in \rmc^1(\rset^d, \rset^d)$ and for any $t \in [0,1]$ and $x_t \in \rset^d$,  $\int_{\rset^d} \| \nabla \log p_0^i(x_0) \| p_{0|t}^i(x_0|x_t) \rmd x_0 < +\infty$. Then, we have that for any $i \in \{0,1\}$, $t \in [0,1)$ and $x_t \in \rset^d$ 
\begin{equation}
    \nabla \log p_t^i(x_t) = \tfrac{1}{\alpha_t}\int_{\rset^d} \nabla \log p_0^i(x_0) p_{0|t}^i(x_0 |x_t) \rmd x_0 . 
\end{equation}
\end{lemma}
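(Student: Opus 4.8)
The plan is to exploit the special structure of the Gaussian transition kernel $p_{t|0}(x_t|x_0) = \mathcal{N}(x_t; \alpha_t x_0, \sigma_t^2 \Id)$, which depends on its two arguments only through the combination $x_t - \alpha_t x_0$. As a consequence, its gradients with respect to the two arguments are proportional,
\begin{equation}
    \nabla_{x_t} p_{t|0}(x_t|x_0) = -\tfrac{1}{\alpha_t} \nabla_{x_0} p_{t|0}(x_t|x_0) ,
\end{equation}
and this single identity is the engine of the whole argument.

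First I would write the marginal as $p_t^i(x_t) = \int_{\rset^d} p_{t|0}(x_t|x_0) p_0^i(x_0) \rmd x_0$ and differentiate under the integral sign in $x_t$, which is legitimate thanks to the Gaussian tails of the kernel. Substituting the proportionality identity gives $\nabla_{x_t} p_t^i(x_t) = -\tfrac{1}{\alpha_t}\int_{\rset^d} \nabla_{x_0} p_{t|0}(x_t|x_0) p_0^i(x_0) \rmd x_0$. I would then integrate by parts in $x_0$ to transfer the derivative onto $p_0^i$, using the assumption $p_0^i \in \rmc^1(\rset^d, \rset^d)$; this yields $\nabla_{x_t} p_t^i(x_t) = \tfrac{1}{\alpha_t}\int_{\rset^d} p_{t|0}(x_t|x_0) \nabla_{x_0} p_0^i(x_0) \rmd x_0$.

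To finish, I would rewrite $\nabla_{x_0} p_0^i = p_0^i \, \nabla_{x_0} \log p_0^i$, divide through by $p_t^i(x_t)$, and apply Bayes' rule $p_{0|t}^i(x_0|x_t) = p_{t|0}(x_t|x_0) p_0^i(x_0)/p_t^i(x_t)$. This converts the right-hand side into the posterior average $\tfrac{1}{\alpha_t}\int_{\rset^d} \nabla \log p_0^i(x_0)\, p_{0|t}^i(x_0|x_t) \rmd x_0$, which is exactly the claimed identity.

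The main obstacle is the rigorous justification of the integration by parts, namely controlling the boundary flux at infinity. This is precisely where the stated hypothesis $\int_{\rset^d} \| \nabla \log p_0^i(x_0) \| p_{0|t}^i(x_0|x_t) \rmd x_0 < +\infty$ enters: combined with the Gaussian decay of $p_{t|0}(x_t|x_0)$ in $x_0$, it guarantees both the absolute convergence of the resulting integral and the vanishing of the surface terms, thereby making the formal manipulation valid for every $t \in [0,1)$ and $x_t \in \rset^d$.
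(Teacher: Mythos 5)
Your proof is correct: the paper does not prove this lemma itself but recalls it from \citet{debortoli2024target}, and your argument --- swapping $\nabla_{x_t}$ for $-\tfrac{1}{\alpha_t}\nabla_{x_0}$ on the Gaussian kernel $p_{t|0}(x_t|x_0)=\mathcal{N}(x_t;\alpha_t x_0,\sigma_t^2\Id)$, integrating by parts in $x_0$ with the stated integrability hypothesis killing the boundary flux, and then applying Bayes' rule --- is precisely the standard derivation given in that reference. The only caveats, already implicit in the statement, are that the gradient-swap identity requires $\alpha_t\neq 0$ (hence the restriction $t\in[0,1)$), and that at $t=0$ the kernel degenerates to a Dirac mass so the identity holds trivially there.
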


Next, we show the following result. 
\begin{lemma}{Posterior control}{markov_prop}
We have that for any $\alpha \geq 2$, $\alpha$ even and $t \in (0,1)$
\begin{align}
    \int_{\rset^d} \chi_\alpha(p_{0|t}^1(x_0|x_t)|p_{0|t}^0(x_0|x_t)) p_t^0(x_t) \rmd x_t \leq D_{0, \alpha} (\chi_{4\alpha}(p_0^0|p_0^1)^{1/4} + \chi_{4\alpha}(p_0^1|p_0^0)^{1/4}) ,
\end{align}
with 
\begin{equation}
    D_{0, \alpha} \leq 2^{2\alpha-\tfrac{3}{2}}  (1 + \chi_{2\alpha}(p_0^1|p_0^0))^{1/2} (1 + \chi_2(p_0^0|p_0^1))^{1/4} .
\end{equation}
\end{lemma}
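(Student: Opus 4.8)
The plan is to reduce the averaged posterior divergence to a single expectation under the joint law of $(\bfX_0,\bfX_t)$ at time $0$, then separate the dependence on $x_0$ and $x_t$ by Cauchy--Schwarz, pushing each resulting divergence back to time $0$ with the data-processing inequality (\Cref{lemma:data_processing}).

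First I would invoke Bayes' rule together with the fact that the forward transition density $p_{t|0}(x_t|x_0)=\mathcal{N}(x_t;\alpha_t x_0,\sigma_t^2\Id)$ is common to both $i\in\{0,1\}$, so that it cancels in the posterior ratio:
\[
\frac{p_{0|t}^1(x_0|x_t)}{p_{0|t}^0(x_0|x_t)}=\frac{p_0^1(x_0)}{p_0^0(x_0)}\cdot\frac{p_t^0(x_t)}{p_t^1(x_t)}=:R(x_0)\,S(x_t).
\]
Writing $\chi_\alpha(p_{0|t}^1(\cdot|x_t)|p_{0|t}^0(\cdot|x_t))=\int(1-R S)^\alpha p_{0|t}^0(x_0|x_t)\rmd x_0$ and using $p_{0|t}^0(x_0|x_t)\,p_t^0(x_t)=p_{0,t}^0(x_0,x_t)$, the left-hand side becomes the single expectation $\mathbb{E}_{(\bfX_0,\bfX_t)\sim p_{0,t}^0}\!\left[\,|1-R(\bfX_0)S(\bfX_t)|^\alpha\,\right]$.

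Next I would decompose $1-RS=(1-R)+R(1-S)$ and apply convexity $|a+b|^\alpha\le 2^{\alpha-1}(|a|^\alpha+|b|^\alpha)$. The $(1-R)$ term integrates against the $\bfX_0$-marginal $p_0^0$ to give exactly $\chi_\alpha(p_0^1|p_0^0)$, which a power-mean (Jensen) inequality bounds by $\chi_{4\alpha}(p_0^1|p_0^0)^{1/4}$. For the $R(1-S)$ term, since $R$ depends only on $x_0$ and $S$ only on $x_t$, a Cauchy--Schwarz step over $p_{0,t}^0$ separates it into $(\mathbb{E}_{p_0^0}[R^{2\alpha}])^{1/2}\,(\int|1-S|^{2\alpha}p_t^0\,\rmd x_t)^{1/2}$, where $\mathbb{E}_{p_0^0}[R^{2\alpha}]$ is controlled by $1+\chi_{2\alpha}(p_0^1|p_0^0)$ via the elementary bound on $R^{2\alpha}$.

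The step I expect to be the main obstacle is the factor $\int|1-S|^{2\alpha}p_t^0\,\rmd x_t$, because the natural reference measure for $S=p_t^0/p_t^1$ is $p_t^1$, not $p_t^0$. I would resolve this by writing $\int|1-S|^{2\alpha}p_t^0\,\rmd x_t=\int|1-S|^{2\alpha}S\,p_t^1\,\rmd x_t$ and applying a second Cauchy--Schwarz to split off $\int|1-S|^{4\alpha}p_t^1\,\rmd x_t=\chi_{4\alpha}(p_t^0|p_t^1)$ and $\int S^2 p_t^1\,\rmd x_t=1+\chi_2(p_t^0|p_t^1)$. At that point the data-processing inequality (\Cref{lemma:data_processing}) pushes both quantities back to time $0$, producing the $\chi_{4\alpha}(p_0^0|p_0^1)^{1/4}$ and $(1+\chi_2(p_0^0|p_0^1))^{1/4}$ factors. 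Collecting the constants from the two convexity/Cauchy--Schwarz steps yields $D_{0,\alpha}$ and completes the bound; the only delicate bookkeeping is tracking the exponents $4\alpha,2\alpha,\alpha$ so that the surviving divergences match exactly those appearing in the statement.
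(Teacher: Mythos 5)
Your proposal is correct and takes essentially the same route as the paper's own proof: the same Bayes factorization $p_{0|t}^1(x_0|x_t)/p_{0|t}^0(x_0|x_t)=\tfrac{p_0^1(x_0)}{p_0^0(x_0)}\cdot\tfrac{p_t^0(x_t)}{p_t^1(x_t)}$, the same decomposition $1-RS=(1-R)+R(1-S)$ with the $2^{\alpha-1}$ convexity bound, the same two Cauchy--Schwarz steps (including the reweighting $\int|1-S|^{2\alpha}p_t^0\,\rmd x_t=\int|1-S|^{2\alpha}S\,p_t^1\,\rmd x_t$ to split off $\chi_{4\alpha}(p_t^0|p_t^1)$ and $1+\chi_2(p_t^0|p_t^1)$), and the same final use of the data-processing inequality of \Cref{lemma:data_processing} to pull both divergences back to time $0$. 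Your explicit power-mean step $\chi_\alpha(p_0^1|p_0^0)\le\chi_{4\alpha}(p_0^1|p_0^0)^{1/4}$ for the first term is performed implicitly in the paper's last line, so no gap remains.
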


\begin{proof}
First, we have that for any $\alpha \geq 2$, $\alpha$ even and $t \in (0,1)$
\begin{align}
    &\int_{\rset^d} \chi_\alpha(p_{0|t}^1(x_0|x_t)|p_{0|t}^0(x_0|x_t)) p_t^0(x_t) \rmd x_t = \int_{\rset^d \times \rset^d} \left(1 - \tfrac{p_{0|t}^1(x_0|x_t)}{p_{0|t}^0(x_0|x_t)}\right)^\alpha p_{0,t}^0(x_0, x_t) \rmd x_0 \rmd x_t \\
    &\quad = \int_{\rset^d \times \rset^d} \left(1 - \tfrac{p_{0}^1(x_0)p_t^0(x_t)}{p_{0}^0(x_0)p_t^1(x_t)}\right)^\alpha p_{0,t}^0(x_0, x_t) \rmd x_0 \rmd x_t \\
    &\quad \leq 2^{\alpha-1} \int_{\rset^d} \left( 1 - \tfrac{p_0^1(x_0)}{p_0^0(x_0)}\right)^\alpha p_0(x_0) \rmd x_0 \\
    &\qquad \qquad + 2^{\alpha-1} \int_{\rset^d \times \rset^d} \left( \tfrac{p_0^1(x_0)}{p_0^0(x_0)} \right)^\alpha \left(1 - \tfrac{p_t^0(x_t)}{p_t^1(x_t)}\right)^\alpha p_{0,t}^0(x_0,x_t) \rmd x_0 \rmd x_t \\
    &\quad = 2^{\alpha-1} \chi_\alpha(p_0^1|p_0^0) + 2^{\alpha-1} \int_{\rset^d \times \rset^d} \left( \tfrac{p_0^1(x_0)}{p_0^0(x_0)} \right)^\alpha \left(1 - \tfrac{p_t^0(x_0)}{p_t^1(x_0)}\right)^\alpha p_{0,t}^0(x_0,x_t) \rmd x_0 \rmd x_t . \label{eq:intermediate_bound}
\end{align}
Next, we note that for any $\beta \geq 1,\beta$ even and densities $p, q$
\begin{equation}
    \int_{\rset^d} \left( \tfrac{q(x)}{p(x)} \right)^\beta p(x) \rmd x \leq 2^{\beta - 1}(1 + \chi_\beta(q|p)) .
\end{equation}
Using this result and \eqref{eq:intermediate_bound}, we have that 
\begin{align}
    &\int_{\rset^d} \chi_\alpha(p_{0|t}^1(x_0|x_t)|p_{0|t}^0(x_0|x_t)) p_t^0(x_t) \rmd x_t \\
    & \leq 2^{\alpha-1} \chi_\alpha(p_0^1|p_0^0) + 2^{\alpha-1} \int_{\rset^d \times \rset^d} \left( \tfrac{p_0^1(x_0)}{p_0^0(x_0)} \right)^\alpha \left(1 - \tfrac{p_t^0(x_t)}{p_t^1(x_t)}\right)^\alpha p_{0,t}^0(x_0,x_t) \rmd x_0 \rmd x_t \\
    & \leq 2^{\alpha-1} \chi_\alpha(p_0^1|p_0^0) + 2^{\alpha-1} 2^{\tfrac{2\alpha-1}{2}}  (1 + \chi_{2\alpha}(p_0^1|p_0^2))^{1/2} \left( \int_{\rset^d} \left(1 - \tfrac{p_t^0(x_t)}{p_t^1(x_t)}\right)^{2\alpha} p_{t}^0(x_t)\rmd x_t \right)^{1/2} \\
     & \leq 2^{\alpha-1} \chi_\alpha(p_0^1|p_0^0) + 2^{2\alpha-\tfrac{3}{2}} (1 + \chi_{2\alpha}(p_0^1|p_0^0))^{1/2} \left( \int_{\rset^d} \left(1 - \tfrac{p_t^0(x_0)}{p_t^1(x_0)}\right)^{2\alpha} \tfrac{p_{t}^0(x_t)}{p_t^1(x_t)} p_t^1(x_t) \rmd x_t \right)^{1/2} \\
     & \leq 2^{\alpha-1} \chi_\alpha(p_0^1|p_0^0) + 2^{2\alpha-\tfrac{3}{2}} (1 + \chi_{2\alpha}(p_0^1|p_0^0))^{1/2}  (1 + \chi_2(p_t^0|p_t^1))^{1/4} \chi_{4\alpha}(p_t^0|p_t^1)^{1/4} \\
     & \leq 2^{\alpha-1} \chi_\alpha(p_0^1|p_0^0) + 2^{2\alpha-\tfrac{3}{2}} (1 + \chi_{2\alpha}(p_0^1|p_0^0))^{1/2} (1 + \chi_2(p_0^0|p_0^1))^{1/4} \chi_{4\alpha}(p_0^0|p_0^1)^{1/4} \\
     &\leq 2^\alpha \chi_{4\alpha}(p_0^1|p_0^0)^{1/4} +2^{2\alpha-\tfrac{3}{2}} (1 + \chi_{2\alpha}(p_0^1|p_0^0))^{1/2} (1 + \chi_2(p_0^0|p_0^1))^{1/4} \chi_{4\alpha}(p_0^0|p_0^1)^{1/4}\\
     & \leq 2^{2\alpha-\tfrac{3}{2}} (1 + \chi_{2\alpha}(p_0^1|p_0^0))^{1/2} (1 + \chi_2(p_0^0|p_0^1))^{1/4} (\chi_{4\alpha}(p_0^0|p_0^1)^{1/4}+\chi_{4\alpha}(p_0^1|p_0^0)^{1/4})
\end{align}
where we used the data processing inequality. This concludes the proof.
\end{proof}

Finally, for ease of notation, we introduce for any $t \in [0,1]$
\begin{equation}
    \fisher(p_t^0|p_t^1) = \int_{\rset^d} \| \nabla \log p_t^0(x_t) - \nabla \log p_t^1(x_t) \|^2 p_t^0(x_t) \rmd x_t. 
\end{equation}
We obtain the following result. 

\begin{proposition}{Control of Fisher score (II)}{control_fisher_score_measure}
Assume that for any $i \in \{0,1\}$, $p_0^i \in \rmc^1(\rset^d, \rset^d)$ and for any $t \in [0,1]$ and $x_t \in \rset^d$,  $\int_{\rset^d} \| \nabla \log p_0^i(x_0) \| p_{0|t}^i(x_0|x_t) \rmd x_0 < +\infty$. In addition, assume that for any $i \in \{0,1\}$, $\int_{\rset^d} \| \nabla \log p_0^i(x_0) \|^4 (p_{0}^0(x_0) + p_1(x_0)) \rmd x_0 = D_4^i < +\infty$. Then for any $t \in [0,1)$, we have

\begin{equation}
    \int_{\rset^d} \| \nabla \log p_t^0(x_t) - \nabla \log p_t^1(x_t) \|^2 p_t^0(x_t) \rmd x_t \leq  \tfrac{2D}{\alpha_t^2} (\fisher(p_0^0|p_0^1) + \chi_{16}(p_0^1|p_0^0)^{1/8} + \chi_{16}(p_0^0|p_0^1)^{1/8}) ,
\end{equation}
where $D$ is explicit in the proof.
\end{proposition}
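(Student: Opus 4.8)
The plan is to start from the Target Score Identity (\Cref{lemma:target_score_identity}), which expresses each time-$t$ score as a posterior average of the time-$0$ score, $\nabla \log p_t^i(x_t) = \alpha_t^{-1}\int_{\rset^d} \nabla \log p_0^i(x_0)\, p_{0|t}^i(x_0|x_t)\,\rmd x_0$. Subtracting the identities for $i=0$ and $i=1$ and inserting and removing the mixed term $\int \nabla\log p_0^1(x_0)\, p_{0|t}^0(x_0|x_t)\,\rmd x_0$, I would split the (scaled) score gap as $\alpha_t(\nabla\log p_t^0-\nabla\log p_t^1)(x_t)=A(x_t)+B(x_t)$, where $A(x_t)=\int (\nabla\log p_0^0-\nabla\log p_0^1)(x_0)\, p_{0|t}^0(x_0|x_t)\,\rmd x_0$ keeps the posterior fixed while varying the score, and $B(x_t)=\int \nabla\log p_0^1(x_0)\,(p_{0|t}^0-p_{0|t}^1)(x_0|x_t)\,\rmd x_0$ keeps the score fixed while varying the posterior. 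Using $\|A+B\|^2\le 2\|A\|^2+2\|B\|^2$, it suffices to control the two contributions separately after integrating against $p_t^0$.

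For the $A$-term, Jensen's inequality in the posterior gives $\|A(x_t)\|^2\le \int \|(\nabla\log p_0^0-\nabla\log p_0^1)(x_0)\|^2 p_{0|t}^0(x_0|x_t)\,\rmd x_0$, and integrating against $p_t^0$ together with the marginalisation identity $\int p_{0|t}^0(x_0|x_t)\, p_t^0(x_t)\,\rmd x_t = p_0^0(x_0)$ collapses this to exactly $\fisher(p_0^0|p_0^1)$. This is the clean part and produces the first term of the bound.

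The work is in the $B$-term. Writing $B(x_t)=\int \nabla\log p_0^1(x_0)\,(1-p_{0|t}^1/p_{0|t}^0)\, p_{0|t}^0\,\rmd x_0$ and applying Cauchy--Schwarz in the posterior measure $p_{0|t}^0(\cdot|x_t)$ yields $\|B(x_t)\|^2\le G(x_t)\,\chi_2(p_{0|t}^1|p_{0|t}^0)(x_t)$, with $G(x_t)=\int \|\nabla\log p_0^1(x_0)\|^2 p_{0|t}^0(x_0|x_t)\,\rmd x_0$. Integrating against $p_t^0$ and applying Cauchy--Schwarz a second time in $x_t$ gives $\int \|B\|^2 p_t^0\,\rmd x_t\le (\int G^2 p_t^0)^{1/2}(\int \chi_2(p_{0|t}^1|p_{0|t}^0)^2 p_t^0)^{1/2}$. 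The first factor I would bound by Jensen and marginalisation as $\int G^2 p_t^0\le \int \|\nabla\log p_0^1(x_0)\|^4 p_0^0(x_0)\,\rmd x_0\le D_4^1$, using the fourth-moment hypothesis (and $p_0^0\le p_0^0+p_1$). For the second factor, Jensen turns $\chi_2^2$ into $\chi_4$, i.e. $\chi_2(p_{0|t}^1|p_{0|t}^0)^2\le \chi_4(p_{0|t}^1|p_{0|t}^0)$, so $\int \chi_2^2 p_t^0\le \int \chi_4(p_{0|t}^1|p_{0|t}^0) p_t^0\,\rmd x_t$, which is precisely the quantity controlled by \Cref{lemma:markov_prop} with $\alpha=4$, namely $D_{0,4}(\chi_{16}(p_0^0|p_0^1)^{1/4}+\chi_{16}(p_0^1|p_0^0)^{1/4})$.

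Finally I would assemble: taking the square root of the last display and using $\sqrt{a+b}\le\sqrt a+\sqrt b$ converts the $\chi_{16}^{1/4}$ terms into the $\chi_{16}^{1/8}$ terms of the statement, giving $\int\|B\|^2 p_t^0\le (D_4^1)^{1/2}D_{0,4}^{1/2}(\chi_{16}(p_0^0|p_0^1)^{1/8}+\chi_{16}(p_0^1|p_0^0)^{1/8})$. Combining the $A$ and $B$ bounds, dividing by $\alpha_t^2$, and setting $D=\max(1,(D_4^1)^{1/2}D_{0,4}^{1/2})$ yields the claimed inequality with an explicit $t$-independent constant. The main obstacle is arranging the two nested Cauchy--Schwarz steps so the exponents line up with \Cref{lemma:markov_prop} (producing $\chi_4$ in the posterior and hence $\chi_{16}$ at time $0$) and verifying that the fourth-moment hypothesis is exactly what keeps $\int G^2 p_t^0$ finite; note that both the $A$ and $B$ bounds are independent of $t$, which is what makes the constant $D$ free of $t$.
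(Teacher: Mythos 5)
Your proof is correct and follows essentially the same route as the paper: the identical Target Score Identity decomposition into a score-variation term (bounded by $\fisher(p_0^0|p_0^1)$ via Jensen and marginalisation) and a posterior-variation term controlled through \Cref{lemma:markov_prop} with $\alpha=4$, producing the same $\chi_{16}^{1/8}$ factors and a valid explicit constant. The only cosmetic difference is the order of operations on the second term—you apply Cauchy--Schwarz within the posterior at fixed $x_t$ and then again in $x_t$, while the paper applies Jensen in the posterior followed by a single Cauchy--Schwarz in the joint measure $p_{0,t}^0$—but both land on the identical pair of factors $\bigl(\int \|\nabla\log p_0^1\|^4 p_0^0\bigr)^{1/2}$ and $\bigl(\int \chi_4(p_{0|t}^1|p_{0|t}^0)\,p_t^0\,\rmd x_t\bigr)^{1/2}$.
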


\begin{proof}
For any $t \in (0,1)$, let $A_t = \int_{\rset^d} \| \nabla \log p_t^0(x_t) - \nabla \log p_1(x_t) \|^2 p_t^0(x_t) \rmd x_t$. 
Using \Cref{lemma:target_score_identity}, we have that for any $t \in (0,1)$
\begin{equation}
    A_t = \tfrac{1}{\alpha_t^2} \int_{\rset^d} \left\| \int_{\rset^d} \nabla \log p_0^0(x_0) p_{0|t}^0(x_0|x_t) \rmd x_0 - \int_{\rset^d} \nabla \log p_0^1(x_0) p_{0|t}^1(x_0|x_t) \rmd x_0 \right\|^2 p_t^0(x_t) \rmd x_t . 
\end{equation}
Hence, for any $t \in (0,1)$, $A_t \leq \tfrac{2}{\alpha_t^2}(A_t^1 + A_t^2)$ with
\begin{align}
    &A_t^1 = \int_{\rset^d} \left\| \int_{\rset^d} \nabla \log p_0^0(x_0) p_{0|t}^0(x_0|x_t) \rmd x_0 - \int_{\rset^d} \nabla \log p_0^1(x_0) p_{0|t}^0(x_0|x_t) \rmd x_0 \right\|^2 p_t^0(x_t) \rmd x_t , \\
&A_t^2 = \int_{\rset^d} \left\| \int_{\rset^d} \nabla \log p_0^1(x_0) p_{0|t}^0(x_0|x_t) \rmd x_0 - \int_{\rset^d} \nabla \log p_0^1(x_0) p_{0|t}^1(x_0|x_t) \rmd x_0 \right\|^2 p_t^0(x_t) \rmd x_t .
\end{align}
Using Jensen's inequality, we have that for any $t \in (0,1)$
\begin{equation}
\label{eq:fisher_score_control}
    A_t^1 \leq \int_{\rset^d} \| \nabla \log p_0^0(x_0) - \nabla \log p_0^1(x_0) \|^2 p_0^0(x_0) \rmd x_0 . 
\end{equation}
Second, using Jensen's inequality, the Cauchy--Schwarz inequality and \Cref{lemma:markov_prop}
\begin{align}
    A_t^2 &\leq \int_{\rset^d \times \rset^d} \| \nabla \log p_0^1(x_0) \|^2 \left(1 - \tfrac{p_{0|t}^1(x_0|x_t)}{p_{0|t}^0(x_0|x_t)}\right)^2 p_{0,t}^0(x_0,x_t) \rmd x_0 \rmd x_t \\
    &\leq \left( \int_{\rset^d} \| \nabla \log p_0^1(x_0) \|^4 p_0^0(x_0) \rmd x_0 \right)^{1/2} \left( \int_{\rset^d \times \rset^d} \left(1 - \tfrac{p_{0|t}^1(x_0|x_t)}{p_{0|t}^0(x_0|x_t)}\right)^4 p_{0,t}^0(x_0,x_t) \rmd x_0 \rmd x_t \right)^{1/2} \\
    &\leq D_{0,4}^{1/2} \left( \int_{\rset^d} \| \nabla \log p_0^1(x_0) \|^4 p_0^0(x_0) \rmd x_0 \right)^{1/2} (\chi_{16}(p_0^1|p_0^0)^{1/8} + \chi_{16}(p_0^0|p_0^1)^{1/8}) . 
\end{align}
Combining this result and \eqref{eq:fisher_score_control} concludes the proof with 
$D = 2 (1 + D_{0,4}^{1/2} \max(D_4^0, D_4^1))$. 
\end{proof}

Finally, combining \Cref{prop:control_fisher_score_time} and \Cref{prop:control_fisher_score_measure}, we get the following proposition.

\begin{proposition}{Control Fisher (III)}{control_fisher}
Assume that for any $i \in \{0,1\}$, $p_0^i \in \rmc^1(\rset^d, \rset^d)$ and for any $t \in [0,1]$ and $x_t \in \rset^d$,  $\int_{\rset^d} \| \nabla \log p_0^i(x_0) \| p_{0|t}^i(x_0|x_t) \rmd x_0 < +\infty$. In addition, assume that for any $i \in \{0,1\}$, $\int_{\rset^d} \| \nabla \log p_0^i(x_0) \|^4 (p_{0}^0(x_0) + p_1(x_0)) \rmd x_0 = D_4^i < +\infty$. Assume that $\int_{\rset^d} \| x \|^4 \rmd \pi_0^i(x) = C_4^i < +\infty$ for $i \in \{0,1\}$. Then, we have for any $t \in (0,1)$
\begin{align}
    &\int_{\rset^d} \| \nabla \log p_t^0(x_t) - \nabla \log p_t^1(x_t) \|^2 p_t^0(x_t) \rmd x_t \\
    & \qquad \qquad \leq C \min\left( (\tfrac{1}{\sigma_t} - \sigma_t)^2 + \alpha_t^2, \tfrac{1}{\alpha_t^2}(\fisher(p_0^0|p_0^1) + \chi_{16}(p_0^1|p_0^0)^{1/8} + \chi_{16}(p_0^0|p_0^1)^{1/8}) \right) , 
\end{align}
where $C \geq 0$ can be made explicit. 
\end{proposition}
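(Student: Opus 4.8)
The plan is to observe that the statement is simply a combination of the two bounds already established in \Cref{prop:control_fisher_score_time} and \Cref{prop:control_fisher_score_measure}: since the left-hand side is bounded above by each of two separate quantities, it is bounded above by their minimum. No new analytic work is needed beyond reconciling the two right-hand sides into the common form appearing in the statement and identifying a single constant $C$ that dominates both.

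Concretely, I would proceed as follows. First, from \Cref{prop:control_fisher_score_time} we have
\begin{equation}
    \fisher(p_t^0|p_t^1) \leq 10 (1 + \chi_2(p_0^0|p_0^1))^{1/2}\bigl((\tfrac{1}{\sigma_t} - \sigma_t)^2 d + \alpha_t^2 C_4^{1/2}\bigr) .
\end{equation}
I would bound the dimension- and moment-dependent factors by a constant, writing $(\tfrac{1}{\sigma_t}-\sigma_t)^2 d + \alpha_t^2 C_4^{1/2} \leq \max(d, C_4^{1/2})\bigl((\tfrac{1}{\sigma_t}-\sigma_t)^2 + \alpha_t^2\bigr)$, so that the bound takes the first required form $C_1\bigl((\tfrac{1}{\sigma_t}-\sigma_t)^2 + \alpha_t^2\bigr)$ with $C_1 = 10(1+\chi_2(p_0^0|p_0^1))^{1/2}\max(d, C_4^{1/2})$. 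Second, from \Cref{prop:control_fisher_score_measure} we already have precisely the second form,
\begin{equation}
    \fisher(p_t^0|p_t^1) \leq \tfrac{2D}{\alpha_t^2}\bigl(\fisher(p_0^0|p_0^1) + \chi_{16}(p_0^1|p_0^0)^{1/8} + \chi_{16}(p_0^0|p_0^1)^{1/8}\bigr) ,
\end{equation}
which is of the shape $C_2 \tfrac{1}{\alpha_t^2}(\cdots)$ with $C_2 = 2D$.

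Setting $C = \max(C_1, C_2)$, the left-hand side is dominated simultaneously by $C\bigl((\tfrac{1}{\sigma_t}-\sigma_t)^2 + \alpha_t^2\bigr)$ and by $C\tfrac{1}{\alpha_t^2}\bigl(\fisher(p_0^0|p_0^1) + \chi_{16}(p_0^1|p_0^0)^{1/8} + \chi_{16}(p_0^0|p_0^1)^{1/8}\bigr)$, hence by $C$ times the minimum of the two, which is exactly the claim. I would verify that the hypotheses of both source propositions are implied by the (union of the) assumptions stated here, namely the $\rmc^1$ regularity and integrability of $\nabla \log p_0^i$, the fourth-moment conditions $C_4^i < +\infty$, and the finiteness of $D_4^i$ controlling $D$.

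There is essentially no hard step here; the only care required is bookkeeping. The main subtlety is making the constant $C$ genuinely explicit and uniform in $t$: one must check that the factors $\max(d, C_4^{1/2})$, $(1+\chi_2(p_0^0|p_0^1))^{1/2}$, and $2D$ are all $t$-independent, which they are, so that a single $C$ works across all $t \in (0,1)$. The decomposition of $C_1$ also relies on $(1+\chi_2(p_0^0|p_0^1))^{1/2}$ being finite, which follows from the $\chi_2$ assumption implicit in the earlier propositions.
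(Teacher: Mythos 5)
Your proposal is correct and matches the paper's own argument: the paper proves \cref{prop:control_fisher} simply by combining \cref{prop:control_fisher_score_time} and \cref{prop:control_fisher_score_measure} and taking the minimum of the two bounds, exactly as you do. Your extra bookkeeping (absorbing $d$, $C_4^{1/2}$, $(1+\chi_2(p_0^0|p_0^1))^{1/2}$, and $2D$ into a single $t$-independent constant $C$, and noting this requires $\chi_2(p_0^0|p_0^1)<+\infty$) is a sound and slightly more careful rendering of the same step.
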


\paragraph{Control of acceptance ratio. }
We now provide a lower bound on the expectation of the logarithm of the acceptance ratio in the speculative sampling framework. We consider a discretization of the interval $[0,1]$ given by $K \in \nset$ and $t_k = k/K$. We let $\gamma = 1/K$. We consider the target model given for any $k \in \{0, K-1 \}$ by
\begin{equation}
    Y_{k+1}^t = Y_k^t + \gamma \{-f_{1-t_k} Y_k^t + \tfrac{1+\varepsilon^2}{2} g_{1-t_k}^2 \nabla \log p_{1-t_k}^0(Y_k) \} + \sqrt{\gamma} g_{1-t_k} Z_k^t , \qquad Y_0 \sim \mathcal{N}(0, \Id) ,
\end{equation}
where $(Z_k^t)_{k \in \nset} \overset{\text{i.i.d.}}{\sim}  \mathcal{N}(0, \Id)$. We now $k_0 \in \{0, \dots, N-1\}$, $L \in \nset$, $k_L = \min(N-1, k_0+L-1)$ and consider the draft model associated given for any $k \in \{k_0, k_L \}$ by 
\begin{equation}
    Y_{k+1}^d = Y_k^d + \gamma \{-f_{1-t_k} Y_k^d + \tfrac{1+\varepsilon^2}{2} \nabla \log p_{1-t_k}^1(Y_k) \} + \sqrt{\gamma} g_{1-t_k} Z_k^d , \qquad Y_{k_0}^d = Y_{k_0}^t \sim \mathcal{N}(0, \Id) ,  
\end{equation}
where $(Z_k^d)_{k \in \nset} \overset{\text{i.i.d.}}{\sim}  \mathcal{N}(0, \Id)$. 

The step $k_0 + 1$ is accepted if $U \leq \min(1 , \mathcal{N}(Z_{k_0}^t + \Delta, \Id) / \mathcal{N}(Z_{k_0}^t, \Id))$ with
\begin{equation}
    \| \Delta_{k_0} \|^2 = \frac{1}{4}\gamma(\vareps + \tfrac{1}{\vareps})^2 \gamma g_{1-t_{k_0}}^2 \| \nabla \log p_{1-t_{k_0}}^1(Y_{k_0}^t) - \nabla \log p_{1-t_{k_0}}^0(Y_{k_0}^t) \|^2 . 
\end{equation}
So we obtain 
\begin{equation}
    \mathbb{E}[\log(a_{k_0})]=-\frac{1}{2}\mathbb{E}[\| \Delta_{k_0} \|^2]. 
\end{equation}
Combining this result with the previous Proposition, we obtain the following result.

\begin{theorem}{Control of log-acceptance ratio}{control_log_acceptance}
Assume that for any $i \in \{0,1\}$, $p_0^i \in \rmc^1(\rset^d, \rset^d)$ and for any $t \in [0,1]$ and $x_t \in \rset^d$,  $\int_{\rset^d} \| \nabla \log p_0^i(x_0) \| p_{0|t}^i(x_0|x_t) \rmd x_0 < +\infty$. In addition, assume that for any $i \in \{0,1\}$, $\int_{\rset^d} \| \nabla \log p_0^i(x_0) \|^4 (p_{0}^0(x_0) + p_1(x_0)) \rmd x_0 = D_4^i < +\infty$. Assume that $\int_{\rset^d} \| x \|^4 \rmd \pi_0^i(x) = C_4^i < +\infty$ for $i \in \{0,1\}$. In addition, assume that $Y_{k_0}^t \sim p_{1-t_{k_0}}$ then
\begin{align}
    &\mathbb{E}[\log(a_{k_0})] \\
    & \geq - \frac{C}{8} (\vareps + \tfrac{1}{\vareps})^2 \gamma g_{s_0}^2 \min\left( (\tfrac{1}{\sigma_{s_0}} - \sigma_{s_0})^2 + \alpha_{s_0}^2, \tfrac{1}{\alpha_{s_0}^2}(\fisher(p_0^0|p_0^1) + \chi_{16}(p_0^1|p_0^0)^{1/8} + \chi_{16}(p_0^0|p_0^1)^{1/8}) \right) ,
\end{align}
where $s_0 = 1-t_{k_0}$ and $C \geq 0$ is explicit in the proof. 
\end{theorem}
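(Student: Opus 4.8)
The plan is to combine the three ingredients already assembled in this section. The target theorem \Cref{thm:control_log_acceptance} is an almost immediate corollary of \Cref{prop:control_fisher} together with the elementary acceptance-ratio identity established just above the statement, so the work is entirely in correctly chaining the bounds rather than in any new analysis.

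First I would start from the exact identity $\mathbb{E}[\log(a_{k_0}) \mid Y_{k_0}^t] = -\tfrac{1}{2}\|\Delta\|^2$, where $\|\Delta\|^2 = \tfrac{\gamma g_{1-t_{k_0}}^2}{4}\|\nabla \log p_{1-t_{k_0}}^1(Y_{k_0}^t) - \nabla \log p_{1-t_{k_0}}^0(Y_{k_0}^t)\|^2$. Taking the outer expectation over $Y_{k_0}^t$ gives
\begin{equation}
    \mathbb{E}[\log(a_{k_0})] = -\tfrac{\gamma g_{s_0}^2}{8}\,\mathbb{E}\left[\|\nabla \log p_{s_0}^1(Y_{k_0}^t) - \nabla \log p_{s_0}^0(Y_{k_0}^t)\|^2\right],
\end{equation}
with $s_0 = 1-t_{k_0}$. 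The crucial structural point is the assumption $Y_{k_0}^t \sim p_{1-t_{k_0}} = p_{s_0}^0$, i.e. the chain is exactly at the marginal of the target forward process at that time; this is what lets me identify the expectation above with the Fisher-type integral $\fisher(p_{s_0}^0 \mid p_{s_0}^1)$ appearing in \Cref{prop:control_fisher}. Without this on-distribution assumption the expectation would be taken against the law of the discretized chain and could not be controlled by the continuous-time Fisher bound.

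Next I would invoke \Cref{prop:control_fisher} directly to bound $\fisher(p_{s_0}^0 \mid p_{s_0}^1)$ by
\begin{equation}
    C \min\left( (\tfrac{1}{\sigma_{s_0}} - \sigma_{s_0})^2 + \alpha_{s_0}^2,\ \tfrac{1}{\alpha_{s_0}^2}(\fisher(p_0^0 \mid p_0^1) + \chi_{16}(p_0^1 \mid p_0^0)^{1/8} + \chi_{16}(p_0^0 \mid p_0^1)^{1/8}) \right),
\end{equation}
which holds under exactly the fourth-moment and $\rmc^1$ hypotheses that \Cref{thm:control_log_acceptance} inherits. Substituting this into the displayed expression for $\mathbb{E}[\log(a_{k_0})]$ and absorbing the factor $1/8$ into the constant $C$ yields the stated lower bound. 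The only bookkeeping is to match the time-variable conventions ($s_0 = 1-t_{k_0}$ versus the $t$ used in \Cref{prop:control_fisher}) and to note that the final \Cref{lemma:control_log_acceptance}-style statement $\mathbb{E}[a_n] \geq \exp[\mathbb{E}[\log(a_n)]]$ (Jensen) converts the log-acceptance bound into the exponential-form bound quoted in the main text as \Cref{thm:control_log_acceptance_main}.

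The main obstacle is not analytic but conceptual: justifying that the relevant expectation is against $p_{s_0}^0$ rather than against the true law of the Euler--Maruyama iterate $Y_{k_0}^t$. Strictly, the discretized chain is only approximately distributed as $p_{s_0}^0$, so the clean identification with $\fisher(p_{s_0}^0 \mid p_{s_0}^1)$ relies on the idealizing assumption stated in the theorem hypotheses. I would flag this as the place where the ``assumptions detailed in the supplementary material'' do the heavy lifting, and everything downstream is a mechanical substitution of \Cref{prop:control_fisher} followed by Jensen's inequality. No genuinely new estimate is required beyond what \Cref{prop:control_fisher_score_time} and \Cref{prop:control_fisher_score_measure} already provide.
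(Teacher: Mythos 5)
Your proposal is correct and follows essentially the same route as the paper: the exact conditional identity $\mathbb{E}[\log(a_{k_0}) \mid Y_{k_0}^t] = -\tfrac{1}{2}\|\Delta\|^2$, the on-distribution assumption $Y_{k_0}^t \sim p_{s_0}^0$ to identify the outer expectation with $\fisher(p_{s_0}^0|p_{s_0}^1)$, and then a direct application of \Cref{prop:control_fisher} with the factor $\gamma g_{s_0}^2/8$ absorbed into $C$. Your flagging of the idealized marginal assumption as the key hypothesis, and of Jensen's inequality as the bridge to \Cref{thm:control_log_acceptance_main}, matches the paper's own remarks exactly.
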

The final result is obtained by using Jensen's inequality, i.e, $\mathbb{E}[a_{k_0}] \geq \exp[\mathbb{E}[\log(a_{k_0})]]$.

Let us interpret \Cref{thm:control_log_acceptance}. We aim at maximizing $\log(a_{k_0})$ since a high acceptance ratio yields a lower computational cost of the speculative sampling method. We here give a lower bound on its expectation. There are different factors that influence this bound:
\begin{itemize}
    \item $\gamma \to 0$ yields $\mathbb{E}[\log(a_{k_0})] \geq 0$. Hence a small discretization step is associated with better acceptance of the method. However, we emphasize that a small discretization step also gives a larger total number of steps. Hence the benefits of reducing the stepsize must be weighted by the additional computational requirement of having to run the  speculative procedure for a larger number of iterations. 
    \item If $p_0^0 \to p_0^1$ (in Fisher and $\chi_4$ divergence) then $\mathbb{E}[\log(a_{k_0})] \geq 0$. This means that if during speculative sampling, the two models target similar distribution then we obtain a higher acceptance rate. This remark is verified empirically in ... and echoes similar findings in LLMs \citep{cai2024medusa}.
    \item If $g_t^2((\tfrac{1}{\sigma_t} - \sigma_t)^2 + \alpha_t^2) \to 0$ as $t \to 1$ then $\mathbb{E}[\log(a_{k_0})] \geq 0$. Hence, in that case for low values of $k_0$, i.e., at the beginning of the denoising process, the acceptance rate is high.  This observation is also confirmed empirically and is specific to the diffusion model setting. 
\end{itemize}

In our setting, we have 
\begin{equation}
    g_t^2((\tfrac{1}{\sigma_t} - \sigma_t)^2 + \alpha_t^2) = 2t(1-t)(1 + (1+1/t)^2) . 
\end{equation}
Hence $\lim_{t \to 1} g_t^2((\tfrac{1}{\sigma_t} - \sigma_t)^2 + \alpha_t^2) = 0$. 

We showcase the $A(t) = g_t^2((\tfrac{1}{\sigma_t} - \sigma_t)^2 + \alpha_t^2)$ in \Cref{fig:enter-label}.

\begin{figure}
    \centering
    \includegraphics[width=0.5\linewidth]{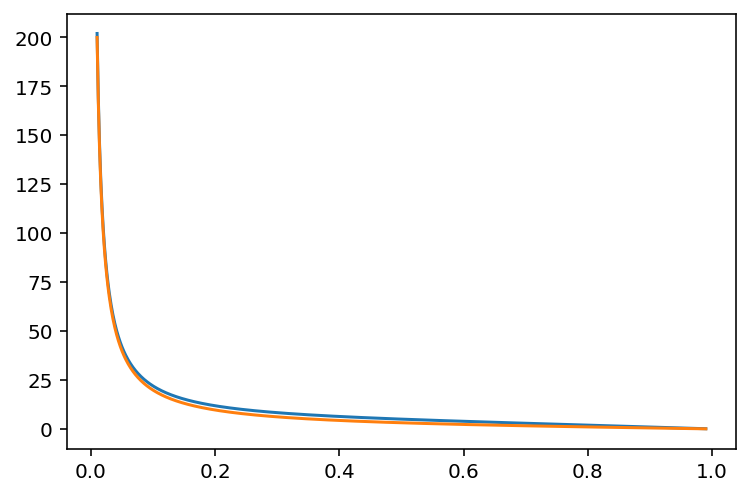}
    \caption{The value of $A(t)$ as a function of $t$ for $\alpha_t = 1-t$ and $\sigma_t = t$ (blue). The value of $A(t)$ for $\alpha_t = \cos((\uppi/2)t)$ and $\sigma_t = \sin((\uppi /2)t)$  (orange).}
    \label{fig:enter-label}
\end{figure}

\section{Experimental details}
\label{sec:experimental_details}

In this section, we provide details about our experimental setup in \cref{sec:experiment_setting}. Our setting for the low dimensional Gaussian Mixture Models (GMMs) is described in \Cref{sec:two-dimensional_xp}. Similarly, our setting for the image experiments is given in \Cref{sec:image_xp}.

\subsection{Experiment setting.}
\label{sec:experiment_setting}

In our setting, we consider the stochastic interpolant framework \citep{albergo2023stochastic} for greater flexibility. Namely, we consider a noising interpolant given by 
\begin{equation}
\label{eq:stochastic_interpolant}
    \bfX_t = \alpha_t \bfX_0 + \sigma_t \bfX_1 , \qquad \bfX_0 \sim \pi_0 , \ \bfX_1 \sim \mathcal{N}(0, \Id) ,
\end{equation}
where $t \mapsto \alpha_t$ is a non-increasing function and $t \mapsto \sigma_t$ is a non-decreasing function so that $\alpha_1=0,~\sigma_1=1$. The interpolation \eqref{eq:stochastic_interpolant} can be associated with the following forward process
\begin{equation}
\label{eq:forward_process_interpolant}
    \rmd \bfX_t = f_t \bfX_t \rmd t + g_t \rmd \bfB_t , \qquad \bfX_0 \sim \pi ,
\end{equation}
where for any $t \in (0,1)$
\begin{equation}
    f_t = \partial_t \log(\alpha_t) , \qquad g_t^2 = 2 \alpha_t \sigma_t \partial_t(\sigma_t / \alpha_t) . 
\end{equation}
The time-reversal of the noising process \eqref{eq:forward_process_interpolant} is given by $(\bfY_t)_{t \in [0,1]}$ which satisfies 
\begin{equation}
\label{eq:time_reversal_forward_process_interpolant}
    \rmd \bfY_t = \Bigl\{-f_{1-t} \bfY_t + g_{1-t}^2 \nabla \log p_{1-t}(\bfY_t) \Bigr\} \rmd t + g_{1-t} \rmd \bfB_t ,\qquad \bfY_0 \sim p_1
\end{equation}
where $p_t$ is the density of $\bfX_t$ with respect to the Lebesgue measure. In practice, we do not typically know $p_1$ and let $\bfY_0 \sim \mathcal{N}(0, \sigma_1^2 \Id)$.
For a given hyperparameter $\vareps > 0$, one can also consider 
\begin{equation}
\label{eq:time_reversal_forward_process_interpolant_epsilon}
    \rmd \bfY_t = \Bigl\{ -f_{1-t} \bfY_t +  \tfrac{1}{2}(1 + \vareps^2) g_{1-t}^2 \nabla \log p_{1-t}(\bfY_t)\Bigr\} \rmd t + \vareps g_{1-t} \rmd \bfB_t ,
\end{equation}
which has the same marginals as \eqref{eq:time_reversal_forward_process_interpolant}.
This can also be rewritten as 
\begin{equation}
\label{eq:velocity_backward_process_interpolant_epsilon}
    \rmd \bfY_t = \Bigl\{-v_{1-t}(\bfY_t) + \tfrac{1}{2}\vareps^2 g_{1-t}^2 \nabla \log p_{1-t}(\bfY_t)  \Bigr\} \rmd t + \vareps g_{1-t} \rmd \bfB_t ,
\end{equation}
where the so-called velocity $v_t$ is given by 
\begin{equation}
    v_t(x) = \mathbb{E}[\partial_t \alpha_t \bfX_0 + \partial_t \sigma_t \bfX_1 \ | \ \bfX_t = x ]. 
\end{equation}
Upon combining \eqref{eq:time_reversal_forward_process_interpolant_epsilon} and \eqref{eq:velocity_backward_process_interpolant_epsilon}, we have that for any $t \in (0,1)$ 
\begin{equation}
\label{eq:relationship_score_velocity}
    \nabla \log p_t(x) = -\mathbb{E}[\bfX_1 \ | \ \bfX_t = x]/\sigma_t  = \frac{2}{g_t^2}(f_t x- v_t(x)) . 
\end{equation}
In particular, in order to estimate the score function, we only need to estimate the velocity and vice-versa. 
In practice, we consider the following loss function
\begin{equation}
    \mathcal{L}_\theta = \int_0^1  w_t \mathbb{E}[ \| \partial_t \alpha_t \bfX_0 + \partial_t \sigma_t \bfX_1 - v_{\theta, t}(\bfX_t) \|^2 ] \rmd t ,
\end{equation}
where $w_t>0$ is a weighting function, see \citet{esser2024scaling} for some possible choices for $w_t$. 
We denote $s_\theta$ the score estimated from $v_\theta$ using \eqref{eq:relationship_score_velocity}. 
At sampling time, we consider the Euler--Maruyama discretisation of \eqref{eq:velocity_backward_process_interpolant_epsilon}. More precisely, we define some timesteps $\{t_i\}_{i=0}^N$ with $0 = t_0 < t_1 < \dots < t_N = 1$ and consider the following Markov chain 
\begin{equation}
\label{eq:unconditional_velocity_markov}
    Y_{k+1} = Y_k + \gamma_k \Bigl\{ v_{\theta, 1-t_k}(Y_k) + \frac{1}{2}\vareps^2 g_{1-t_k}^2 s_{\theta, 1-t_k}(Y_k) \Bigr\} + \sqrt{\gamma_k} \vareps g_{1-t_k} Z_k , 
\end{equation}
where $(Z_k)_{k \in \nset}$ is a sequence of independent and identically distributed Gaussian random variables with zero mean and identity covariance matrix and $\gamma_k = t_{k+1} - t_{k}$. 
When additional conditioning information is available, one can consider an additional guidance term and \eqref{eq:unconditional_velocity_markov} is changed into 
\begin{equation}
\label{eq:conditional_velocity_markov}
    Y_{k+1} = Y_k + \gamma_k \Bigl\{ (1 + \delta) v_{\theta, 1-t_k}(Y_k, c) - \delta v_{\theta, 1-t_k}(Y_k, \emptyset) + \frac{1}{2}\vareps^2 g_{1-t_k}^2 s_{\theta,1-t_k}(Y_k) \Bigr\} + \sqrt{\gamma_k}\vareps g_{1-t_k} Z_k , 
\end{equation}
where $v_{\theta, t}(\cdot,c)$ corresponds to a \emph{conditional} model and $v_{\theta, t}(\cdot, \emptyset)$ to an unconditional one. 

\subsection{Low dimensional experiments.}
\label{sec:two-dimensional_xp}
In our low-dimensional setting, we create a dataset by sampling from a mixture of Gaussians. The means are sampled uniformly and independently from $[-2,2]^d$, where $d$ is the dimension. Each Gaussian component has a covariance matrix of the form $\sigma^2 \Id$, where the standard deviation $\sigma$ is also sampled uniformly and independently from $[0.1, 0.2]$. We test across dimensions $d \in \{2, 4, 8, 16, 32\}$ and numbers of components $n \in \{1, 2, 4, 8, 16\}$. \\
The velocity of the diffusion model is parameterized with a sequence of MLPs. For all MLP we use the GeLU activation function. The label, corresponding to the component of the mixture is encoded using a layer embedding layer with feature dimension $512$. Similarly, the time information is encoded using sinusoidal embedding with feature dimension $512$. This encoding is then processed with a MLP with output dimension $512$. The time embedding and the label embedding are then concatenated into a conditioning embedding. The conditioning embedding and the input $x_t$ of the velocity network are then processed independently with 3 MLP layers with output dimension $(64, 64, 128)$. The obtained embedding are then concatenated and processed with 3 MLP layers with output dimension $(128,64,64)$. Finally a last dense layer with output dimension $d$ is added. We do not consider any normalisation layer. In the case of the training of an independent draft model, the three preprocessing MLP layers are replaced with one MLP layer with output dimension $4$. Similarly, the three postprocessing MLP layers are replaced with one MLP layer with output dimension $4$. For the sampling, we use $250$ sampling steps. We refer to \Cref{sec:additional_results} for additional results. 

\subsection{Image experiments}
\label{sec:image_xp}

All FID and IS scores are evaluated with $50,000$ images. 

\paragraph{CIFAR10.} The shape of the samples in the training dataset is $(32 \times 32 \times 3)$. The batch size is set to $128$. Images are rescaled between $-1.0$ and $1.0$. We consider an augmentation pipeline similar to the one of \cite{karras2022elucidating}. The augmentation pipeline is applied with a global probability $p=0.12$. The rest of the augmentation pipeline is similar to the one used in \cite{karras2022elucidating}. In particular, we consider flipping (both $x$ and $y$ axis), anisotropy transformations, non-integer rotation, scaling and non-integer translation. 

For the model, we consider a $U$-net architecture with GeLU activations, 4 levels with a residual attention block applied on the second level. The channel multipliers are given by $(1,2,2,2)$. The channel size is $256$. We consider a dropout rate of $0.2$. The normalization layers are RMS normalization layers. For the attention layers we consider $8$ heads. The number of residual blocks is $2$. For the skip connection, we add (and normalize) the current activation with the stored activation. The time is embedded using sinusoidal embedding with hidden dimension $256$. We embed the 10 different classes and consider conditional models. We also condition the model on the augmentation vector. These two conditionings are added and the time embedding is added on top of this. The conditioning occurs through adaptive normalization layers. We train the model for $1M$ steps with the Adam optimizer and a learning rate of $10^{-4}$ and EMA decay of $0.9999$. 

\paragraph{LSUN.} We consider the same configuration as CIFAR10. However, the samples do not have label and we only consider the augmentation conditioning. 

\subsection{Latent CIFAR-10 experiments}
\label{app:image_latent_xp}
In the first stage, as an auto-encoder, we use variational auto-encoder (VAE)~\citep{kingma+welling:2014} with a smaller term $\beta$ on the KL-term as in $\beta$-VAE~\citep{Higgins2016betaVAELB}. The encoder and decoder are represented by U-Net where in encoder, U-Net follows only the downsampling and middle bottleneck paths, while in decoder, U-Net follows middle bottleneck and upsampling paths, which is similar to what is used in~\citep{rombach2022highresolutionimagesynthesislatent}. We use $128$ channels for the corresponding U-Nets without attention in downsampling/upsampling but with attention in the middle ($1$ head) and with channel multipliers $(1,2,4,4)$. We use SILU activation function. We also use RMSNorm for normalization as opposed to GroupNorm. We also employ perceptual LPSIS loss~\citep{zhang2018unreasonableeffectivenessdeepfeatures} with coefficient $1$ as well as patch-based discriminator as in~\citep{esser2021tamingtransformershighresolutionimage}. The dimensionality of the latent space is $(4,4,32)$ which is 6 times smaller than the original CIFAR-10 image dimensionality $(32,32,3)$. We train the autoencoder for $500 000$ steps. We track the FID on the subset of $12800$ images comparing clean images and the reconstructions $\texttt{decoder}(\texttt{encoder}(x))$, and select hyperparameters which achieve the smallest FID. The selected hyperparameters as well as their ranges are:
\begin{itemize}
    \item Number of discriminator filters = 32. Range $[32,64,128]$.
    \item Number of discriminator layers = 6. Range $[3,6,9]$.
    \item Dropout rate for both encoder and decoder = 0.0. Range $[0, 0.1, 0.2, 0.3]$.
    \item $\beta$ parameter = $1e-6$. Range $[1e-4,5e-5,1e-5,5e-6,1e-6,5e-7,1e-7]$.
    \item Generator loss coefficient = $0.01$. Range $[0.001, 0.01, 0.1, 1.0]$.
    \item Adversarial loss coefficient = $0.001$. Range $[0.001, 0.01, 0.1, 1.0]$.
    \item Batch size = $1024$. Range $[128, 256, 512, 1024]$
\end{itemize}

For the second stage, we freeze the encoder and decoder and train a diffusion model on the encoded images (we take the means), similar to~\citep{rombach2022highresolutionimagesynthesislatent}. We use the U-Net with 256 channels, $(2,2)$ channel multipliers with attention performed (False, True), with attention in the middle with $8$ attention heads, RMSNorm, GeLU activation. We train latent diffusion for $160000$ iterations with batch size $256$. We track FID on the subset of $12800$ to select the hyperparameters. The selected hyperparameters as well as their ranges are:

\begin{itemize}
    \item Prediction target = $x_0$. Range $x_0$ or velocity
    \item U-Net dropout rate = $0.0$. Range $[0, 0.1, 0.2, 0.3]$.
    \item Learning rate = $1e-4$. Range $[1e-3, 1e-4, 5e-5]$
    \item Noise process type = cosine. Range - linear, cosine, rectified flow
\end{itemize}

Once the models are trained, we employ the same sampling strategy as in CIFAR-10 experiment.

\subsection{PushT dataset.}

We consider the PushT dataset. The task here is to push a $T$ shape onto a target shape on a two dimensional plane. The action dimension is $2$, the action horizon is $8$. We keep an history of length $2$ and consider a maximum of $300$ steps when unrolling the policy. We consider a prediction horizon of length $16$. This means that the dimension of the target is $16 \times 2$. And we condition on the last two previous states (dimension is $5$). Hence the conditioning signal has shape $(2 \times 5)$. Once we have predicted $16$ actions we execute $8$ of them. As specified before we execute a maximum of $300$ steps or stop if the reward reaches one, i.e., the $T$ shape is perfectly aligned. 

We train the model for $1M$ steps with Adam and stepsize $10^{-4}$. We consider a one-dimensional U-net with time embedding. The architecture follows from \citep{chi2023diffusion}. 

At inference time, we rely on the DDPM sampler.

\section{Additional results}
\label{sec:additional_results}

We run similar experiments in latent space to showcase the flexibility of our method. We follow the approach described in~\citep{rombach2022highresolutionimagesynthesislatent} -- we pre-train an autoencoder on the whole dataset and then train a diffusion model on the latent-encoded dataset. We consider the latent space of shape ($4\times4\times 32$) which is $6$ times smaller than the dimensionality ($32\times32\times 3$) of CIFAR10. We refer to Appendix~\ref{app:image_latent_xp} for architectural and training details. We report FID score computed on 50k training samples. Our results are reported in \Cref{tab:latent_cifar10}. We found that using latent diffusion on CIFAR-10 achieved better FID score when the target used only $30$ sampling iterations. Nevertheless, we see that our speculative sampling method still provides $3$-x speed-up (best is  NFE) while maintaining similar to target model quality. We also considered using target model with only $10$ NFEs and \Cref{tab:latent_cifar10} suggests that it achieves considerably worse results. This highlights the strength of our approach.

\paragraph{Combining speculative  sampling and parallel sampling.}

We report FID score and NFE for CIFAR-10 with a number of steps of 30. We vary the temperature parameter $\tau$, the churn parameter $\varepsilon$ as well as the number of parallel iterations, see \citep{shih2023parallel,tang2024accelerating}. For each combination of hyperparameters we also consider window sizes 5, 10 and 20 and report the best run (in terms of FID).

The original speculative sampling procedure corresponds to $p=0$. The best FID number that can be achieved with this configuration is 2.23 with a NFE of 15.69. However, by combining our speculative sampling procedure with parallel sampling then we can reach a FID of 2.07 with a NFE of 15.42. This shows the benefits of combining our speculative sampling procedure with other acceleration methods. We report those results in \Cref{tab:fid_nfe_scores_styled}.

\paragraph{Combining speculative sampling and step distillation.}

We now compare our approach with LD3 \citep{tong2024learning} and \citep{sabour2024align}. We compare the results on CIFAR-10 as reported in LD3 \citep{tong2024learning}. Our best speculative sampling method outperformed both LD3 and AYS. We also included our best results obtained with a uniform timesteps spacing and EDM timestep spacing \citep{karras2022elucidating}. These results are based on the same model as “Best speculative”. We sweep over $\rho = [1.0, \dots, 8.0]$ in the case of EDM timestep spacing. This improves the quality of the samples but they remain inferior in quality to the ones obtained with our best speculative model. We re-implemented LD3 \citep{tong2024learning} in our setting and used it to learn a timestep spacing. Our setting is similar to the one of \citep{tong2024learning}. Finally, we compare our approach with a distilled generator trained on top of our best model. We focus on Multistep Moment Matching Distillation (MMD) \citep{salimans2024multistep}. 

\begin{table}[h]
\centering
\begin{tabular}{lrr}
\toprule
\textbf{Configuration} & \textbf{FID} & \textbf{NFE} \\
\midrule
DPM Solver++ (naive - reported) & 2.37 & 20 \\
DPM Solver++ (AYS \citep{sabour2024align} - reported) & 2.10 & 20 \\
DPM Solver++ (LD3 \citep{tong2024learning} - reported) & 2.36 & 20 \\
\hline
Uniform timesteps & 7.14 & 15 \\
EDM timesteps & 4.22 & 15 \\
LD3 timesteps & 3.49 & 15 \\
MultiStep Moment Matching & 2.76 & 15 \\
Best speculative & \textbf{2.07} & 15.4 \\
\bottomrule
\end{tabular}
\caption{Comparison of model configurations, including our best speculative methods against several baselines. The top section shows reported results from prior work, while the bottom section details our experiments.}
\label{tab:fid_nfe_comparison}
\end{table}

\begin{table}
\scriptsize
\centering
\begin{tabular}{|c||c|c||c|c||c|c||c|c|c|}
\hline
Configuration & \multicolumn{2}{c||}{\textbf{Draft}} & \multicolumn{2}{c||}{\textbf{Target} (30 steps)} & \multicolumn{2}{c||}{\textbf{Target} (10 steps)} &\multicolumn{3}{c|}{\textbf{Speculative}} \\ \cline{2-10} & FID $\downarrow$ & IS $\uparrow$ & FID $\downarrow$ & IS $\uparrow$& FID $\downarrow$  & IS $\uparrow$ & FID $\downarrow$  & IS $\uparrow$ & NFE $\downarrow$ \\ \hline \hline
$\vareps=0.01$, $\tau=0.5$ &  \textbf{80.92} & \textbf{5.59} & 2.67 & 11.09 & \textbf{39.48} & \textbf{7.42} & 2.66 & 11.13 & 18.53 \\ \hline
$\vareps=0.01$, $\tau=1.0$ &  \textbf{80.92} & \textbf{5.59} & 2.67 & 11.09 & \textbf{39.48} & \textbf{7.42} & 2.66 & 11.14 & 17.78 \\ \hline
$\vareps=0.01$, $\tau=2.0$ &  \textbf{80.92} & \textbf{5.59} & 2.67 & 11.09 & \textbf{39.48} & \textbf{7.42} & 2.66 & 11.14 & 17.09 \\ \hline
$\vareps=0.25$, $\tau=0.5$ &  82.28 & 5.50 & 2.64 & \textbf{11.15} & 87.39 & 4.82 & 2.68 & 11.18 & 10.37 \\ \hline
$\vareps=0.25$, $\tau=1.0$ &  82.28 & 5.50 & 2.64 & \textbf{11.15} & 87.39 & 4.82 & 2.66 & \textbf{11.23} & 9.36 \\ \hline
$\vareps=0.25$, $\tau=2.0$ &  82.28 & 5.50 & 2.64 & \textbf{11.15} & 87.39 & 4.82 & 2.66 & 11.21 & 8.36 \\ \hline
$\vareps=0.5$, $\tau=0.5$ &  83.27 & 5.42 & \textbf{2.51} & 11.08 & 118.78 & 3.81 & 2.56 & 11.11 & 9.35 \\ \hline
$\vareps=0.5$, $\tau=1.0$ &  83.27 & 5.42 & \textbf{2.51} & 11.08 & 118.78 & 3.81 & \textbf{2.50} & 11.12 & 8.30 \\ \hline
$\vareps=0.5$, $\tau=2.0$ &  83.27 & 5.42 & \textbf{2.51} & 11.08 & 118.78 & 3.81 & 2.52 & 11.07 & 7.30 \\ \hline
$\vareps=1.0$, $\tau=0.5$ &  97.67 & 4.72 & 37.54 & 7.09 & 182.94 & 2.43 & 37.13 & 7.11 & 9.57 \\ \hline
$\vareps=1.0$, $\tau=1.0$ &  97.67 & 4.72 & 37.54 & 7.09 & 182.94 & 2.43 & 37.85 & 7.07 & 8.36 \\ \hline
$\vareps=1.0$, $\tau=2.0$ &  97.67 & 4.72 & 37.54 & 7.09 & 182.94 & 2.43 & 38.32 & 7.09 & \textbf{7.19} \\ \hline
\end{tabular}
\caption{Latent diffusion on CIFAR-10 with window size = 15 for speculative sampling. For each column, we report the best result in \textbf{bold}.}
\label{tab:latent_cifar10}
\end{table}

\begin{table}
\scriptsize
\centering
\begin{tabular}{|c||c|c||c|c||c|c|c|}
\hline
Configuration & \multicolumn{2}{c||}{\textbf{Draft}} & \multicolumn{2}{c||}{\textbf{Target} (500 steps)} & \multicolumn{3}{c|}{\textbf{Speculative}} \\ \cline{2-8} & FID $\downarrow$ & IS $\uparrow$ & FID $\downarrow$  & IS $\uparrow$ & FID $\downarrow$  & IS $\uparrow$ & NFE $\downarrow$ \\ \hline \hline
$\vareps=0.005$, $\tau=0.25$ &  \textbf{5.76} & 1.93 & 4.66 & \textbf{2.02} & 4.69 & 2.01 & 305.39 \\ \hline
$\vareps=0.005$, $\tau=0.5$ &  \textbf{5.76} & 1.93 & 4.66 & \textbf{2.02} & 4.68 & 2.01 & 286.07 \\ \hline
$\vareps=0.005$, $\tau=1.0$ &  \textbf{5.76} & 1.93 & 4.66 & \textbf{2.02} & 4.70 & 2.01 & 263.56 \\ \hline
$\vareps=0.005$, $\tau=2.0$ &  \textbf{5.76} & 1.93 & 4.66 & \textbf{2.02} & 4.72 & 2.01 & 238.01 \\ \hline
$\vareps=0.01$, $\tau=0.25$ &  \textbf{5.76} & 1.93 & 4.66 & \textbf{2.02} & 4.65 & 2.01 & 257.96 \\ \hline
$\vareps=0.01$, $\tau=0.5$ &  \textbf{5.76} & 1.93 & 4.66 & \textbf{2.02} & 4.67 & 2.01 & 236.04 \\ \hline
$\vareps=0.01$, $\tau=1.0$ &  \textbf{5.76} & 1.93 & 4.66 & \textbf{2.02} & 4.69 & 2.00 & 211.47 \\ \hline
$\vareps=0.01$, $\tau=2.0$ &  \textbf{5.76} & 1.93 & 4.66 & \textbf{2.02} & 4.76 & 2.00 & 184.98 \\ \hline
$\vareps=0.05$, $\tau=0.25$ &  5.97 & 1.91 & 4.66 & 2.01 & 4.48 & 2.02 & 186.63 \\ \hline
$\vareps=0.05$, $\tau=0.5$ &  5.97 & 1.91 & 4.66 & 2.01 & 4.53 & 2.00 & 164.07 \\ \hline
$\vareps=0.05$, $\tau=1.0$ &  5.97 & 1.91 & 4.66 & 2.01 & 4.62 & 2.01 & 140.06 \\ \hline
$\vareps=0.05$, $\tau=2.0$ &  5.97 & 1.91 & 4.66 & 2.01 & 4.86 & 1.99 & 116.38 \\ \hline
    $\vareps=0.1$, $\tau=0.25$ &  6.46 & 1.91 & 4.52 & 2.00 & 4.36 & \textbf{2.03} & 176.02 \\ \hline
$\vareps=0.1$, $\tau=0.5$ &  6.46 & 1.91 & 4.52 & 2.00 & 4.38 & 2.02 & 154.73 \\ \hline
$\vareps=0.1$, $\tau=1.0$ &  6.46 & 1.91 & 4.52 & 2.00 & 4.56 & 1.99 & 131.47 \\ \hline
$\vareps=0.1$, $\tau=2.0$ &  6.46 & 1.91 & 4.52 & 2.00 & 4.79 & 1.97 & 108.40 \\ \hline
$\vareps=0.25$, $\tau=0.25$ &  10.11 & 1.96 & \textbf{4.13} & 1.96 & 3.94 & 2.01 & 172.65 \\ \hline
$\vareps=0.25$, $\tau=0.5$ &  10.11 & 1.96 & \textbf{4.13} & 1.96 & 3.98 & 1.97 & 153.05 \\ \hline
$\vareps=0.25$, $\tau=1.0$ &  10.11 & 1.96 & \textbf{4.13} & 1.96 & 4.24 & 1.97 & 130.71 \\ \hline
$\vareps=0.25$, $\tau=2.0$ &  10.11 & 1.96 & \textbf{4.13} & 1.96 & 4.53 & 1.96 & \textbf{107.92} \\ \hline
$\vareps=0.5$, $\tau=0.25$ &  17.53 & \textbf{2.11} & 4.18 & 1.96 & \textbf{4.02} & 1.96 & 178.45 \\ \hline
$\vareps=0.5$, $\tau=0.5$ &  17.53 & \textbf{2.11} & 4.18 & 1.96 & \textbf{4.02} & 1.95 & 160.68 \\ \hline
$\vareps=0.5$, $\tau=1.0$ &  17.53 & \textbf{2.11} & 4.18 & 1.96 & 4.26 & 1.93 & 139.16 \\ \hline
$\vareps=0.5$, $\tau=2.0$ &  17.53 & \textbf{2.11} & 4.18 & 1.96 & 4.51 & 1.93 & 116.33 \\ \hline
\end{tabular}
\caption{LSUN with window size = 50, no last step function, 500 steps. For each column, we report the best result in \textbf{bold}.}
\label{tab:my_data_lsun_500}
\end{table}

\begin{table}
\scriptsize
\centering
\begin{tabular}{|c||c|c||c|c||c|c|c|}
\hline
Configuration & \multicolumn{2}{c||}{\textbf{Draft}} & \multicolumn{2}{c||}{\textbf{Target} (200 steps)} & \multicolumn{3}{c|}{\textbf{Speculative}} \\ \cline{2-8} & FID $\downarrow$ & IS $\uparrow$ & FID $\downarrow$  & IS $\uparrow$ & FID $\downarrow$  & IS $\uparrow$ & NFE $\downarrow$ \\ \hline \hline
$\vareps=0.001$, $\tau=0.25$ &  \textbf{10.56} & 1.89 & 3.99 & \textbf{1.99} & 3.99 & 1.98 & 176.85 \\ \hline
$\vareps=0.001$, $\tau=0.5$ &  \textbf{10.56} & 1.89 & 3.99 & \textbf{1.99} & 3.99 & 1.98 & 173.49 \\ \hline
$\vareps=0.001$, $\tau=1.0$ &  \textbf{10.56} & 1.89 & 3.99 & \textbf{1.99} & 3.99 & 1.98 & 168.23 \\ \hline
$\vareps=0.001$, $\tau=2.0$ &  \textbf{10.56} & 1.89 & 3.99 & \textbf{1.99} & 3.99 & 1.98 & 160.89 \\ \hline
$\vareps=0.005$, $\tau=0.25$ &  10.58 & 1.89 & 4.02 & 1.98 & 4.00 & 1.98 & 137.95 \\ \hline
$\vareps=0.005$, $\tau=0.5$ &  10.58 & 1.89 & 4.02 & 1.98 & 3.99 & 1.98 & 131.53 \\ \hline
$\vareps=0.005$, $\tau=1.0$ &  10.58 & 1.89 & 4.02 & 1.98 & 3.99 & 1.98 & 124.52 \\ \hline
$\vareps=0.005$, $\tau=2.0$ &  10.58 & 1.89 & 4.02 & 1.98 & 4.00 & 1.98 & 117.13 \\ \hline
$\vareps=0.01$, $\tau=0.25$ &  10.63 & 1.89 & 3.99 & 1.98 & 3.98 & 1.98 & 121.26 \\ \hline
$\vareps=0.01$, $\tau=0.5$ &  10.63 & 1.89 & 3.99 & 1.98 & 3.98 & 1.98 & 114.51 \\ \hline
$\vareps=0.01$, $\tau=1.0$ &  10.63 & 1.89 & 3.99 & 1.98 & 3.99 & 1.98 & 107.26 \\ \hline
$\vareps=0.01$, $\tau=2.0$ &  10.63 & 1.89 & 3.99 & 1.98 & 4.01 & 1.98 & 99.20 \\ \hline
$\vareps=0.05$, $\tau=0.25$ &  12.73 & 1.91 & 3.95 & 1.98 & 3.94 & 1.98 & 92.66 \\ \hline
$\vareps=0.05$, $\tau=0.5$ &  12.73 & 1.91 & 3.95 & 1.98 & 3.96 & 1.97 & 86.26 \\ \hline
$\vareps=0.05$, $\tau=1.0$ &  12.73 & 1.91 & 3.95 & 1.98 & 4.03 & 1.96 & 78.75 \\ \hline
$\vareps=0.05$, $\tau=2.0$ &  12.73 & 1.91 & 3.95 & 1.98 & 4.14 & 1.95 & 70.04 \\ \hline
$\vareps=0.1$, $\tau=0.25$ &  18.56 & 1.99 & 3.92 & 1.99 & 3.89 & 1.97 & 87.74 \\ \hline
$\vareps=0.1$, $\tau=0.5$ &  18.56 & 1.99 & 3.92 & 1.99 & 3.93 & \textbf{1.99} & 82.05 \\ \hline
$\vareps=0.1$, $\tau=1.0$ &  18.56 & 1.99 & 3.92 & 1.99 & 3.97 & 1.98 & 74.87 \\ \hline
$\vareps=0.1$, $\tau=2.0$ &  18.56 & 1.99 & 3.92 & 1.99 & 4.16 & 1.94 & 66.28 \\ \hline
$\vareps=0.25$, $\tau=0.25$ &  33.76 & 2.28 & \textbf{3.83} & 1.94 & 3.76 & 1.96 & 85.60 \\ \hline
$\vareps=0.25$, $\tau=0.5$ &  33.76 & 2.28 & \textbf{3.83} & 1.94 & \textbf{\textbf{3.74}} & 1.97 & 80.82 \\ \hline
$\vareps=0.25$, $\tau=1.0$ &  33.76 & 2.28 & \textbf{3.83} & 1.94 & 3.94 & 1.95 & 74.27 \\ \hline
$\vareps=0.25$, $\tau=2.0$ &  33.76 & 2.28 & \textbf{3.83} & 1.94 & 4.12 & 1.94 & \textbf{66.01} \\ \hline
$\vareps=0.5$, $\tau=0.25$ &  49.82 & 2.65 & 4.09 & 1.95 & 3.93 & 1.95 & 87.12 \\ \hline
$\vareps=0.5$, $\tau=0.5$ &  49.82 & 2.65 & 4.09 & 1.95 & 3.97 & 1.95 & 83.29 \\ \hline
$\vareps=0.5$, $\tau=1.0$ &  49.82 & 2.65 & 4.09 & 1.95 & 4.14 & 1.93 & 77.55 \\ \hline
$\vareps=0.5$, $\tau=2.0$ &  49.82 & 2.65 & 4.09 & 1.95 & 4.22 & 1.96 & 69.81 \\ \hline
$\vareps=1.0$, $\tau=0.25$ &  115.98 & \textbf{3.44} & 4.76 & 1.93 & 4.75 & 1.95 & 93.13 \\ \hline
$\vareps=1.0$, $\tau=0.5$ &  115.98 & \textbf{3.44} & 4.76 & 1.93 & 4.73 & 1.97 & 90.88 \\ \hline
$\vareps=1.0$, $\tau=1.0$ &  115.98 & \textbf{3.44} & 4.76 & 1.93 & 4.77 & 1.96 & 87.24 \\ \hline
$\vareps=1.0$, $\tau=2.0$ &  115.98 & \textbf{3.44} & 4.76 & 1.93 & 4.85 & 1.95 & 81.40 \\ \hline
\end{tabular}
\caption{LSUN with window size = 50, no last step function, 200 steps. For each column, we report the best result in \textbf{bold}.}
\label{tab:my_data_lsun_200}
\end{table}

\begin{table}
\scriptsize

\centering
\begin{tabular}{|c||c|c||c|c||c|c|c|}
\hline
Configuration & \multicolumn{2}{c||}{\textbf{Draft}} & \multicolumn{2}{c||}{\textbf{Target} (100 steps)} & \multicolumn{3}{c|}{\textbf{Speculative}} \\ \cline{2-8} & FID $\downarrow$ & IS $\uparrow$ & FID $\downarrow$  & IS $\uparrow$ & FID $\downarrow$  & IS $\uparrow$ & NFE $\downarrow$ \\ \hline \hline
$\vareps=0.001$, $\tau=0.25$ &  \textbf{24.04} & 1.99 & 3.81 & 1.95 & 3.78 & 1.95 & 91.82 \\ \hline
$\vareps=0.001$, $\tau=0.5$ &  \textbf{24.04} & 1.99 & 3.81 & 1.95 & 3.79 & 1.95 & 90.57 \\ \hline
$\vareps=0.001$, $\tau=1.0$ &  \textbf{24.04} & 1.99 & 3.81 & 1.95 & 3.79 & 1.95 & 88.56 \\ \hline
$\vareps=0.001$, $\tau=2.0$ &  \textbf{24.04} & 1.99 & 3.81 & 1.95 & 3.79 & 1.95 & 85.46 \\ \hline
$\vareps=0.005$, $\tau=0.25$ &  24.26 & 2.00 & 3.81 & 1.95 & 3.78 & 1.95 & 73.13 \\ \hline
$\vareps=0.005$, $\tau=0.5$ &  24.26 & 2.00 & 3.81 & 1.95 & 3.77 & 1.95 & 70.13 \\ \hline
$\vareps=0.005$, $\tau=1.0$ &  24.26 & 2.00 & 3.81 & 1.95 & 3.77 & 1.95 & 66.67 \\ \hline
$\vareps=0.005$, $\tau=2.0$ &  24.26 & 2.00 & 3.81 & 1.95 & 3.77 & 1.95 & 63.10 \\ \hline
$\vareps=0.01$, $\tau=0.25$ &  25.05 & 2.01 & 3.80 & 1.95 & 3.77 & 1.94 & 65.75 \\ \hline
$\vareps=0.01$, $\tau=0.5$ &  25.05 & 2.01 & 3.80 & 1.95 & 3.77 & 1.94 & 62.68 \\ \hline
$\vareps=0.01$, $\tau=1.0$ &  25.05 & 2.01 & 3.80 & 1.95 & 3.77 & 1.94 & 59.59 \\ \hline
$\vareps=0.01$, $\tau=2.0$ &  25.05 & 2.01 & 3.80 & 1.95 & 3.77 & 1.94 & 56.71 \\ \hline
$\vareps=0.05$, $\tau=0.25$ &  48.62 & 2.27 & 3.75 & 1.96 & \textbf{3.75} & 1.94 & 52.44 \\ \hline
$\vareps=0.05$, $\tau=0.5$ &  48.62 & 2.27 & 3.75 & 1.96 & 3.76 & 1.93 & 50.07 \\ \hline
$\vareps=0.05$, $\tau=1.0$ &  48.62 & 2.27 & 3.75 & 1.96 & 3.77 & 1.93 & 47.57 \\ \hline
$\vareps=0.05$, $\tau=2.0$ &  48.62 & 2.27 & 3.75 & 1.96 & 3.85 & 1.93 & 44.52 \\ \hline
$\vareps=0.1$, $\tau=0.25$ &  69.55 & 2.53 & \textbf{3.74} & 1.95 & 3.78 & 1.94 & 49.65 \\ \hline
$\vareps=0.1$, $\tau=0.5$ &  69.55 & 2.53 & \textbf{3.74} & 1.95 & 3.79 & 1.94 & 47.75 \\ \hline
$\vareps=0.1$, $\tau=1.0$ &  69.55 & 2.53 & \textbf{3.74} & 1.95 & 3.79 & 1.93 & 45.51 \\ \hline
$\vareps=0.1$, $\tau=2.0$ &  69.55 & 2.53 & \textbf{3.74} & 1.95 & 3.86 & 1.91 & 42.52 \\ \hline
$\vareps=0.25$, $\tau=0.25$ &  97.47 & 3.17 & 3.85 & 1.92 & 3.79 & 1.93 & 48.11 \\ \hline
$\vareps=0.25$, $\tau=0.5$ &  97.47 & 3.17 & 3.85 & 1.92 & 3.82 & 1.94 & 46.76 \\ \hline
$\vareps=0.25$, $\tau=1.0$ &  97.47 & 3.17 & 3.85 & 1.92 & 3.81 & 1.93 & 44.92 \\ \hline
$\vareps=0.25$, $\tau=2.0$ &  97.47 & 3.17 & 3.85 & 1.92 & 3.90 & 1.92 & \textbf{42.16} \\ \hline
$\vareps=0.5$, $\tau=0.25$ &  147.36 & \textbf{3.62} & 4.08 & 1.97 & 4.01 & 1.95 & 48.38 \\ \hline
$\vareps=0.5$, $\tau=0.5$ &  147.36 & \textbf{3.62} & 4.08 & 1.97 & 4.06 & 1.96 & 47.43 \\ \hline
$\vareps=0.5$, $\tau=1.0$ &  147.36 & \textbf{3.62} & 4.08 & 1.97 & 4.14 & 1.95 & 46.02 \\ \hline
$\vareps=0.5$, $\tau=2.0$ &  147.36 & \textbf{3.62} & 4.08 & 1.97 & 4.21 & 1.95 & 43.70 \\ \hline
$\vareps=1.0$, $\tau=0.25$ &  231.66 & 2.74 & 5.76 & \textbf{2.02} & 5.72 & 2.00 & 50.08 \\ \hline
$\vareps=1.0$, $\tau=0.5$ &  231.66 & 2.74 & 5.76 & \textbf{2.02} & 5.69 & 2.00 & 49.59 \\ \hline
$\vareps=1.0$, $\tau=1.0$ &  231.66 & 2.74 & 5.76 & \textbf{2.02} & 5.70 & 2.01 & 49.00 \\ \hline
$\vareps=1.0$, $\tau=2.0$ &  231.66 & 2.74 & 5.76 & \textbf{2.02} & 5.65 & \textbf{2.02} & 47.89 \\ \hline
\end{tabular}
\caption{LSUN with window size = 50, no last step functions, 100 steps. For each column, we report the best result in \textbf{bold}.}
\label{tab:my_data_lsun_100}
\end{table}

\begin{table}[h!]
\scriptsize 
\centering 
\begin{tabular}{|c||c|c|}
\hline 
\textbf{Configuration} & \textbf{FID} $\downarrow$ & \textbf{NFE} $\downarrow$ \\ \hline \hline 
$p = 0, \varepsilon = 0.25, \tau = 1.0$ & 2.23 & 15.69 \\ \hline
$p = 1, \varepsilon = 0.25, \tau = 1.0$ & 2.09 & 23.80 \\ \hline
$p = 5, \varepsilon = 0.25, \tau = 1.0$ & 2.09 & 57.85 \\ \hline
$p = 0, \varepsilon = 0.5, \tau = 1.0$ & 2.77 & 17.06 \\ \hline
$p = 1, \varepsilon = 0.5, \tau = 1.0$ & 2.75 & 23.42 \\ \hline
$p = 5, \varepsilon = 0.5, \tau = 1.0$ & 2.75 & 57.80 \\ \hline
$p = 0, \varepsilon = 0.25, \tau = 2.0$ & 2.24 & 14.89 \\ \hline
$p = 1, \varepsilon = 0.25, \tau = 2.0$ & 2.09 & 21.12 \\ \hline
$p = 5, \varepsilon = 0.25, \tau = 2.0$ & 2.08 & 51.45 \\ \hline
$p = 0, \varepsilon = 0.5, \tau = 2.0$ & 2.74 & 16.47 \\ \hline
$p = 1, \varepsilon = 0.5, \tau = 2.0$ & 2.77 & 20.62 \\ \hline
$p = 5, \varepsilon = 0.5, \tau = 2.0$ & 2.77 & 50.40 \\ \hline
$p = 0, \varepsilon = 0.25, \tau = 10.0$ & 2.39 & 12.86 \\ \hline
$p = 1, \varepsilon = 0.25, \tau = 10.0$ & \textbf{2.07} & 15.42 \\ \hline
$p = 5, \varepsilon = 0.25, \tau = 10.0$ & 2.07 & 37.50 \\ \hline 
$p = 0, \varepsilon = 0.5, \tau = 10.0$ & 2.73 & 14.49 \\ \hline
$p = 1, \varepsilon = 0.5, \tau = 10.0$ & 2.79 & 16.38 \\ \hline
$p = 5, \varepsilon = 0.5, \tau = 10.0$ & 2.79 & 40.25 \\ \hline 
\end{tabular}
\caption{Results on CIFAR-10 when combining speculative sampling and parallel sampling. The hyperparameter $p$ represents the number of parallel calls. }
\label{tab:fid_nfe_scores_styled} 

\end{table}

\section{Accelerating Langevin Diffusions using Speculative Sampling}\label{app:specsamplingLangevin}
We detail in this appendix the application of speculative sampling to Langevin diffusions proposed in \Cref{sec:SpecLangevin}. Assume where we are interested in sampling from an unnormalized density $\pi(x)$ on $\mathbb{R}^d$, i.e.
\begin{equation}
\pi(x)=\frac{\exp(-E(x))}{Z},\qquad Z=\int \exp(-E(x))\rmd x,
\end{equation}
where the energy function $E(x)$ can be evaluated pointwise, but each evaluation is computationally expensive, and $Z$ is an intractable normalizing constant. 
We are interested here in accelerating MCMC sampling in the context where we have access to a computationally cheap proxy energy function $\hat{E}(x) \approx E(x)$ defining $\hat{\pi}(x)\propto \exp(-\hat{E}(x))$. Access to such proxies is common in many domains of computational science and engineering, see e.g. \citep{peherstorfer2018survey} for a review. 

In this context, a popular modification of the Metropolis--Hastings (MH) algorithm to sample from $\pi$ leveraging an energy proxy was proposed by \citet{christen2005markov}. It is known in the literature as delayed acceptance MH \citep{cui2011bayesian,sherlock2017adaptive} or two-stage Markov chain Monte Carlo \citep{peherstorfer2018survey}. 
We present here a completely different approach to accelerate another popular MCMC algorithm, namely the Unadjusted Langevin algorithm (ULA).

The Langevin diffusion is defined by
\begin{equation}\label{eq:Langevin}
    \rmd \bfX_t=-\nabla E(\bfX_t)\rmd t+\sqrt{2}\rmd \bfB_t,
\end{equation}
where $(\bfB_t)_{t\geq 0}$ is a standard multivariate Brownian motion. The limiting distribution of this diffusion is $\pi$. Practically, we discretize this diffusion to obtain the ULA algorithm, i.e.
\begin{equation}\label{eq:appLangevintargetmodel}
    X_{k+1}=X_k-\gamma \nabla E(X_k)+\sqrt{2\gamma}W_k,
\end{equation}
for a stepsize $\gamma>0$ and $W_k\overset{\textup{i.i.d.}}{\sim} \mathcal{N}(0,\Id)$. Contrary to MH, this algorithm only samples from an approximation of $\pi$ due to the time-discretization but explicit bounds on the bias incurred are available \citep{durmus2017nonasymptotic}. The speculative sampling algorithm is directly applicable to accelerate the simulation of \eqref{eq:appLangevintargetmodel}. In this case, \eqref{eq:appLangevintargetmodel} plays the role of the target model while
\begin{equation}\label{eq:Langevindraftmodel}
    X_{k+1}=X_k-\gamma \nabla \hat{E}(X_k)+\sqrt{2\gamma}W_k,
\end{equation}
corresponds to the draft model. In this case, the general speculative sampling from \Cref{alg:DetailedSpeculativeSamplingDiffusions} simplifies drastically and we obtain \Cref{alg:DetailedSpeculativeSamplingULA}. As a cheap proxy, we can still use the frozen draft model strategy in this context, that is set $\nabla \hat{E}(x_{n+k})=\nabla E(x_n)$ for $k=1,...,n_L$ in \eqref{eq:Langevindraftmodel}. Again speculative sampling returns exact samples from ULA. This method can be thought of as a novel pre-fetching technique to accelerate MCMC \citep{brockwell2006parallel,angelino2014accelerating}.

We now demonstrate the efficiency of \Cref{alg:DetailedSpeculativeSamplingULA}. We consider the $\phi-4$ model \citep{guth2022wavelet, milchev1986finite} where the energy function is given by 
$$ E(x) = \frac{\beta}{2} \sum_{|i-j|=1} (x_i - x_j)^2 + \sum_i (x_i^2 - 1)^2, $$
on a grid of shape $(8, 8)$ and $\beta = 100$. Sampling from $\pi$ is complex as this requires sampling so-called ordered states. In this context, the teacher model is the Langevin diffusion sampling $E(x)$ with $100,000$ iterations and stepsize $10^{-3}$, while our speculative sampling algorithm uses the frozen prediction draft model and a window size of $20$. We report the mean and the standard deviation of the energy over the last $500$ simulated samples over $500$ runs. The NFE is reduced by a factor of $2$.

\begin{table}[h!]
\centering
\begin{tabular}{lccc}
\hline
\textbf{Metrics} & \textbf{Mean energy} & \textbf{Standard deviation energy} & \textbf{NFE} \\
\hline
Langevin sampling & 62.27 & 13.32 & 100000 \\
Speculative sampling & 65.90 & 12.48 & 48564 \\
\hline
\end{tabular}
\caption{Comparison of sampling metrics for the  $\phi-4$ model.}
\label{tab:sampling_metrics}
\end{table}

\begin{algorithm}
\caption{Speculative Sampling for Unadjusted Langevin Diffusion}\label{alg:DetailedSpeculativeSamplingULA}
\begin{algorithmic}
\Require Lookahead integer $L$, sequence length $K$, stepsize $\gamma>0$, target distribution $\pi$ and proxy distribution $\hat{\pi}$.
\State Set $Y_0$ arbitrarily and set $n=0$.
\While {$n<K$} 
\State Set $\tilde{Y}_{n}\leftarrow Y_n$ and $n_L = \min(n+L, K)$.
\For{$k=n+1:n_L$}
\State Set $\tilde{Y}_k= \tilde{Y}_{k-1}-\gamma \nabla \hat{E}(\tilde{Y}_{k-1})+ \sqrt{2\gamma}Z_{k-1}$ for $Z_{k-1} \sim \mathcal{N}(0,\Id)$.
\EndFor

\State In parallel, compute $\nabla E(\tilde{Y}_{n}),\nabla E(\tilde{Y}_{n+1}),...,\nabla E(\tilde{Y}_{n_L-1})$.
\For{$k=n+1:n_L$}
\State Set $\Delta_{k-1}=\sqrt{\gamma/2}(\nabla E(\tilde{Y}_{k-1})-\nabla \hat{E}(\tilde{Y}_{k-1}))$ and $e=\Delta_{k-1}/||\Delta_{k-1}||$.
\State Sample $U \sim \textup{Unif}[0,1]$.
\State $\accept = \mathbb{I}[U \leq \min(1,\mathcal{N}(Z_{k-1}+\Delta_{k-1};0,\Id)/\mathcal{N}(Z_{k-1};0,\Id))]$.
\If{$\accept$}
    \State Set $Y_k = \tilde{Y}_k$.
\Else
    \State Set $Y_k =\tilde{Y}_{k-1}-\gamma E(\tilde{Y}_{k-1}) + \sqrt{2\gamma}(\Id - 2 e  e^{\top})Z_{k-1}$.
\EndIf
\State \Return  ($Y_k$, $\accept$).
\If{$\texttt{not}(\accept)$}
\State Exit For Loop
\EndIf
\EndFor
\State Set $n\leftarrow k$.
\EndWhile
\State \Return $Y_{0:K}$
\end{algorithmic}
\end{algorithm}

\end{document}